\title{Understanding Sparse JL for Feature Hashing\thanks{A version of this work appeared at NeurIPS 2019. I would like to thank Prof. Jelani Nelson for advising this project.}}
\author{%
  Meena Jagadeesan\\
  Harvard University\\
  Cambridge, MA 02138 \\
  \texttt{mjagadeesan@college.harvard.edu} \\
  % examples of more authors
  % \And
  % Coauthor \\
  % Affiliation \\
  % Address \\
  % \texttt{email} \\
  % \AND
  % Coauthor \\
  % Affiliation \\
  % Address \\
  % \texttt{email} \\
  % \And
  % Coauthor \\
  % Affiliation \\
  % Address \\
  % \texttt{email} \\
  % \And
  % Coauthor \\
  % Affiliation \\
  % Address \\
  % \texttt{email} \\
}
\date{}
\begin{document}

\maketitle

\begin{abstract}
Feature hashing and other random projection schemes are commonly used to reduce the dimensionality of feature vectors. The goal is to efficiently project a high-dimensional feature vector living in $\mathbb{R}^n$ into a much lower-dimensional space $\mathbb{R}^m$, while approximately preserving Euclidean norm. These schemes can be constructed using sparse random projections, for example using a sparse Johnson-Lindenstrauss (JL) transform. A line of work introduced by Weinberger et. al (ICML '09) analyzes the accuracy of sparse JL with sparsity 1 on feature vectors with small $\ell_\infty$-to-$\ell_2$ norm ratio. Recently, Freksen, Kamma, and Larsen (NeurIPS '18) closed this line of work by proving a tight tradeoff between $\ell_\infty$-to-$\ell_2$ norm ratio and accuracy for sparse JL with sparsity $1$. 

In this paper, we demonstrate the benefits of using sparsity $s$ greater than $1$ in sparse JL on feature vectors. Our main result is a tight tradeoff between $\ell_\infty$-to-$\ell_2$ norm ratio and accuracy for a general sparsity $s$, that significantly generalizes the result of Freksen et. al. Our result theoretically demonstrates that sparse JL with $s > 1$ can have significantly better norm-preservation properties on feature vectors than sparse JL with $s = 1$; we also empirically demonstrate this finding.

\end{abstract}

\section{Introduction}
Feature hashing and other random projection schemes are influential in helping manage large data \cite{Dal13}. The goal is to \textit{reduce the dimensionality} of feature vectors: more specifically, to project high-dimensional feature vectors living in $\mathbb{R}^n$ into a lower dimensional space $\mathbb{R}^m$ (where $m  \ll  n$), while approximately preserving Euclidean distances (i.e. $\ell_2$ distances) with high probability. This dimensionality reduction enables a classifier to process vectors in $\mathbb{R}^m$, instead of vectors in $\mathbb{R}^n$. In this context, feature hashing was first introduced by Weinberger et. al \cite{Weinberger} for document-based classification tasks such as email spam filtering. For such tasks, feature hashing yields a lower dimensional embedding of a high-dimensional feature vector derived from a bag-of-words model. Since then, feature hashing has become a mainstream approach \cite{Sut16}, applied to numerous domains including ranking text documents \cite{DocClass}, compressing neural networks \cite{NeuralNet}, and protein sequence classification \cite{Protein}. 

\subsection*{Random Projections}
Dimensionality reduction schemes for feature vectors fit nicely into the random projection literature. In fact, the feature hashing scheme proposed by Weinberger et al. \cite{Weinberger} boils down to uniformly drawing a random $m \times n$ matrix where each column contains \textit{one} nonzero entry, equal to $-1$ or $1$. 

The $\ell_2$-norm-preserving objective can be expressed mathematically as follows: for error $\epsilon > 0$ and failure probability $\delta$, the goal is to construct a probability distribution $\mathcal{A}$ over $m \times n$ real matrices that satisfies the following condition for vectors $x \in \mathbb{R}^n$:
\begin{equation}
\label{JLcondition}
\mathbb{P}_{A \in \mathcal{A}} [(1 - \epsilon)\norm{x}_2 \le \norm{Ax}_2 \le (1 + \epsilon)\norm{x}_2] > 1-\delta.
\end{equation}
The result underlying the random projection literature is the Johnson-Lindenstrauss lemma, which gives an upper bound on the dimension $m$ achievable by a probability distribution $\mathcal{A}$ satisfying $\eqref{JLcondition}$:
\begin{lemma}[Johnson-Lindenstrauss \cite{JL}]
\label{RegularJL}
For any $n \in \mathbb{N}$ and $\epsilon, \delta \in (0,1)$, there exists a probability distribution $\mathcal{A}$ over $m \times n$ matrices, with $m = \Theta(\epsilon^{-2} \ln(1/\delta))$, that satisfies \eqref{JLcondition}.  
\end{lemma}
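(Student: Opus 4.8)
The plan is to take $\mathcal{A}$ to be the distribution of the rescaled random matrix $A = \tfrac{1}{\sqrt{m}} R$, where $R \in \mathbb{R}^{m \times n}$ has independent entries, each a standard Gaussian (a discrete alternative is to use uniform $\pm 1$ Rademacher entries). Since both $\norm{x}_2$ and $\norm{Ax}_2$ scale linearly in $x$, it suffices to verify \eqref{JLcondition} for a fixed unit vector $x$ with $\norm{x}_2 = 1$; note also that the statement concerns only one vector, so no union bound over a point set is required here.

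First I would identify the distribution of $\norm{Ax}_2^2$. Writing $(Ax)_i = \tfrac{1}{\sqrt m}\langle R_i, x\rangle$ with $R_i$ the $i$-th row of $R$, in the Gaussian case each $\langle R_i, x\rangle$ is a standard Gaussian of variance $\norm{x}_2^2 = 1$, and these are independent across $i$. Hence $m\,\norm{Ax}_2^2 = \sum_{i=1}^m \langle R_i, x\rangle^2$ is a $\chi^2$ random variable with $m$ degrees of freedom; in particular $\mathbb{E}[\norm{Ax}_2^2] = 1$, which is the correct target.

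Next I would invoke a standard concentration bound for $\chi^2$ variables (obtained, e.g., by a Chernoff/moment-generating-function computation, or equivalently from sub-exponential tail estimates), yielding an absolute constant $c > 0$ such that
\[
\mathbb{P}\!\left[\,\bigl|\,\norm{Ax}_2^2 - 1\,\bigr| > \epsilon\,\right] \;\le\; 2\exp\!\left(-c\,\epsilon^2 m\right) \qquad \text{for } 0 < \epsilon < 1.
\]
Choosing $m = \bigl\lceil c^{-1}\epsilon^{-2}\log(2/\delta)\bigr\rceil = \Theta\bigl(\epsilon^{-2}\log(1/\delta)\bigr)$ makes the right-hand side at most $\delta$. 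Finally, on the complementary event $\bigl|\,\norm{Ax}_2^2 - 1\,\bigr| \le \epsilon$ one has $\norm{Ax}_2 \le \sqrt{1+\epsilon} \le 1+\epsilon$ and $\norm{Ax}_2 \ge \sqrt{1-\epsilon} \ge 1-\epsilon$ (both valid for $\epsilon \in (0,1)$), which is exactly \eqref{JLcondition}.

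The one step needing care is the concentration inequality: one must track the constant $c$ (roughly $c \approx 1/8$ suffices) and handle the two-sided deviation, since the upper and lower tails of $\chi^2$ decay at slightly different rates. If one instead insists on Rademacher entries for a fully discrete construction, this step is replaced by a Hanson--Wright-type bound (or a direct even-moment computation) for $\sum_i \langle R_i, x\rangle^2$, which is more involved but gives the same bound on $m$.
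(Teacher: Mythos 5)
Your proposal is correct. Note that the paper does not prove Lemma~\ref{RegularJL} at all: it is stated as background and attributed to \cite{JL}, so there is no internal argument to compare against. Your route---take $A = \frac{1}{\sqrt{m}}R$ with i.i.d.\ standard Gaussian entries, observe that for a fixed unit vector $x$ the quantity $m\norm{Ax}_2^2$ is $\chi^2$ with $m$ degrees of freedom, apply a standard Chernoff/sub-exponential concentration bound, and finish with the elementary inequalities $\sqrt{1+\epsilon}\le 1+\epsilon$ and $\sqrt{1-\epsilon}\ge 1-\epsilon$---is the standard modern proof of the distributional JL lemma and yields exactly the claimed $m=\Theta(\epsilon^{-2}\log(1/\delta))$. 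You are also right that no union bound is needed, since \eqref{JLcondition} is a per-vector guarantee and the bound is scale-invariant. The only point worth adding is historical: the original Johnson--Lindenstrauss argument used projection onto a uniformly random $m$-dimensional subspace with concentration on the sphere, rather than an i.i.d.\ Gaussian matrix; your Gaussian construction (or the Rademacher variant you mention, handled via a moment computation or Hanson--Wright) is the now-standard simplification and fully suffices for the statement as given.
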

\noindent The optimality of the dimension $m$ achieved by Lemma~$\ref{RegularJL}$ has been proven \cite{OptimalityJL1, OptimalityJL2}. 

To speed up projection time, it is useful to consider probability distributions over sparse matrices (i.e. matrices with a small number of nonzero entries per column). More specifically, for matrices with $s$ nonzero entries per column, the projection time for a vector $x$ goes down from $O(m\norm{x}_0)$ to $O(s\norm{x}_0)$, where $\norm{x}_0$ is the number of nonzero entries of $x$. In this context, Kane and Nelson \cite{KN12} constructed sparse JL distributions (which we define formally in \Cref{sec:sparseJLDist}), improving upon previous work \cite{Achlioptas, Li, DKS10}. Roughly speaking, a sparse JL distribution, as constructed in \cite{KN12}, boils down to drawing a random $m \times n$ matrix where each column contains exactly $s$ nonzero entries, each equal to $-1 / \sqrt{s}$ or $1 / \sqrt{s}$. Kane and Nelson show that sparse JL distributions achieve the same (optimal) dimension as Lemma~$\ref{RegularJL}$, while also satisfying a sparsity property.
\begin{theorem}[Sparse JL \cite{KN12}]
\label{SparseJL}
For any $n \in \mathbb{N}$ and $\epsilon, \delta \in (0,1)$, a sparse JL distribution $\mathcal{A}_{s,m,n}$ (defined formally in \Cref{sec:sparseJLDist}) over $m \times n$ matrices, with dimension $m = \Theta(\epsilon^{-2} \ln(1/\delta))$ and sparsity $s = \Theta(\epsilon^{-1} \ln(1/\delta))$, satisfies \eqref{JLcondition}.  
\end{theorem}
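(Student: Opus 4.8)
The plan is to establish \eqref{JLcondition} by the method of moments, following Kane and Nelson \cite{KN12}. By homogeneity we may assume $\norm{x}_2 = 1$, and we write a matrix $A$ drawn from $\mathcal{A}_{s,m}$ as $A_{rj} = \eta_{rj}\sigma_{rj}/\sqrt{s}$, where the $\sigma_{rj}\in\{-1,+1\}$ are independent Rademacher signs, $\eta_{rj}\in\{0,1\}$ indicates whether the entry $(r,j)$ is nonzero, and each column has exactly $s$ nonzeros, i.e. $\sum_r\eta_{rj}=s$. Set $Z := \norm{Ax}_2^2 - 1$. Expanding $\norm{Ax}_2^2 = \sum_r\bigl(\sum_j A_{rj}x_j\bigr)^2$ and separating $j=k$ from $j\neq k$, the diagonal part equals $\sum_j x_j^2\cdot\tfrac1s\sum_r\eta_{rj} = \sum_j x_j^2 = 1$, so the constant cancels and
\[
Z \;=\; \frac1s\sum_{r}\sum_{j\neq k}\eta_{rj}\eta_{rk}\,\sigma_{rj}\sigma_{rk}\,x_jx_k \;=\; \sigma^{\top}M\sigma ,
\]
where $\sigma$ collects the active signs and $M$ is the symmetric, zero-diagonal matrix that is block diagonal across rows $r$, with block $M^{(r)} = \tfrac1{2s}\bigl(x_{T_r}x_{T_r}^{\top}-\mathrm{diag}(x_{T_r}^2)\bigr)$ and $T_r := \{j:\eta_{rj}=1\}$ the set of columns active in row $r$. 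Thus $Z$ is a mean-zero Rademacher chaos of order two once we condition on the supports.

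\textbf{Step 1 (two norm estimates for $M$).} First, a deterministic operator-norm bound: each block satisfies $\norm{M^{(r)}}_{\mathrm{op}} \le \tfrac1{2s}\bigl(\norm{x_{T_r}}_2^2+\norm{x_{T_r}}_\infty^2\bigr) \le \tfrac1s\norm{x}_2^2 = \tfrac1s$, so $\norm{M}_{\mathrm{op}} = \max_r\norm{M^{(r)}}_{\mathrm{op}} \le \tfrac1s$ always. Second, a moment bound on the Frobenius norm: one computes $\norm{M}_F^2 = \tfrac1{4s^2}\sum_{j\neq k}x_j^2x_k^2\,Q_{jk}$, where $Q_{jk}:=\sum_r\eta_{rj}\eta_{rk}$ counts the collisions of columns $j$ and $k$. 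Exploiting the independence built into the construction, $Q_{jk}$ is a sum of $s$ independent indicators of mean $s/m$, hence $\mathbb{E}[Q_{jk}^{t}]^{1/t}\le C\bigl(\tfrac{s^2}{m}+t\bigr)$; combining this over the pairs via convexity in the weights $x_j^2x_k^2$ (which sum to at most $1$) gives $\bigl(\mathbb{E}_\eta\norm{M}_F^{\ell}\bigr)^{1/\ell}\le C\bigl(\tfrac1{\sqrt m}+\tfrac{\sqrt\ell}{s}\bigr)$.

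\textbf{Step 2 (Hanson--Wright, then Markov).} Fix an even integer $\ell$. Conditioned on the supports $\eta$, the moment form of the Hanson--Wright inequality (equivalently, decoupling followed by Khintchine) gives $\mathbb{E}_\sigma[Z^{\ell}]\le C^{\ell}\bigl(\sqrt\ell\,\norm{M}_F+\ell\,\norm{M}_{\mathrm{op}}\bigr)^{\ell}$. Taking $\mathbb{E}_\eta$ and inserting the two estimates from Step 1 yields $\bigl(\mathbb{E}[Z^{\ell}]\bigr)^{1/\ell}\le C'\bigl(\sqrt{\ell/m}+\ell/s\bigr)$. Markov's inequality applied to $Z^\ell$ then gives $\mathbb{P}[|Z|>\epsilon]\le\epsilon^{-\ell}\mathbb{E}[Z^\ell]\le\bigl(C'(\sqrt{\ell/m}+\ell/s)/\epsilon\bigr)^{\ell}$; choosing $\ell=\Theta(\log(1/\delta))$, $m=\Theta(\epsilon^{-2}\log(1/\delta))$ and $s=\Theta(\epsilon^{-1}\log(1/\delta))$ with large enough hidden constants makes $C'(\sqrt{\ell/m}+\ell/s)\le\epsilon/e$, so the probability is at most $e^{-\ell}\le\delta$, which is exactly \eqref{JLcondition}.

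\textbf{Main obstacle.} The crux is the Frobenius moment bound in Step 1: the naive estimate $\mathbb{E}_\eta\norm{M}_F^2\le 1/m$ does not suffice because we need the full $\ell$-th moment, and it is precisely the upper tail of the collision counts $Q_{jk}$ that produces the $\sqrt\ell/s$ term, which in turn is what forces the sparsity to be as large as $s=\Theta(\epsilon^{-1}\log(1/\delta))$ rather than smaller. A secondary difficulty is that the clean independence used to control the $Q_{jk}$ holds for the block-type construction but must be replaced by a negative-association or martingale argument for other sparse JL distributions, and one must check that the Hanson--Wright constant behaves well under the conditioning on $\eta$.
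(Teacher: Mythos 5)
Your argument is essentially correct, but it is a genuinely different route from what this paper does with Theorem~\ref{SparseJL}: the paper does not prove this statement at all---it is quoted from \cite{KN12}---and the closest it comes to reproving it is that the $v=1$ branch of Theorem~\ref{lowerbound} (equivalently Lemma~\ref{upper} applied with $v=1$ and $s=\Theta(\epsilon^{-1}\log(1/\delta))$) recovers it, via Lata{\l}a-type moment bounds for Rademacher linear and quadratic forms applied row by row (Lemma~\ref{generalsymmetric}, Lemma~\ref{sumiid}, Lemma~\ref{rowbound}) rather than via Hanson--Wright. What you wrote is the conditional-chaos proof in the style of \cite{NelsonNotes}: condition on the supports, use the moment form of Hanson--Wright to get $\left(\mathbb{E}_\sigma[Z^\ell]\right)^{1/\ell}\lesssim \sqrt{\ell}\,\norm{M}_F+\ell\,\norm{M}_{\mathrm{op}}$, and control $\norm{M}_F$ through the collision counts $Q_{jk}$. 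That route does prove Theorem~\ref{SparseJL}, and the paper concedes as much in \Cref{sec:momentbounding}; what it buys is brevity for the unrestricted statement, and what it gives up is exactly what this paper needs: \Cref{sec:looseHW} exhibits vectors of small $\ell_\infty$-to-$\ell_2$ ratio on which the Hanson--Wright bound is $\omega(\epsilon)$, so your method cannot be sharpened to the restricted sets $S_v$, and being a pure upper-bound technique it also cannot produce the matching moment lower bounds (Lemma~\ref{lower}) that drive Theorem~\ref{upperbound}. Two small repairs to your write-up: your block $M^{(r)}$ is off by a factor of $2$ (harmless, absorbed into constants), and your claim that $Q_{jk}$ is a sum of $s$ independent indicators of mean $s/m$ is specific to the block construction; for the general sparse JL distributions the theorem covers, expand $\mathbb{E}[Q_{jk}^t]$ over row tuples and use independence across columns together with the negative-correlation property $\mathbb{E}\left[\prod_{r\in S}\eta_{r,i}\right]\le (s/m)^{|S|}$ from the paper's definition, which dominates every term by the independent case and so yields the same bound $\norm{Q_{jk}}_t\lesssim s^2/m+t$; this discharges the caveat you flagged as a ``secondary difficulty.''
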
 

Sparse JL distributions are state-of-the-art sparse random projections, and achieve a sparsity that is nearly optimal when the dimension $m$ is $\Theta(\epsilon^{-2} \ln(1/\delta))$.\footnote{Nelson and Nguyen \cite{NNLower} showed that \textit{any} distribution satisfying \eqref{JLcondition} requires sparsity $\Omega(\epsilon^{-1} \ln(1/\delta)/\ln(1/\epsilon))$ when the dimension $m$ is $\Theta(\epsilon^{-2} \ln(1/\delta))$. Kane and Nelson \cite{KN12} also showed that the analysis of sparse JL distributions in Theorem \ref{SparseJL} is tight at $m = \Theta(\epsilon^{-2} \ln(1/\delta))$. } However, in practice, it can be necessary to utilize a lower sparsity $s$, since the projection time is linear in $s$. Resolving this issue, Cohen \cite{Cohen} extended the upper bound in Theorem \ref{SparseJL} to show that sparse JL distributions can achieve a lower sparsity with an appropriate gain in dimension. He proved the following dimension-sparsity tradeoffs:
\begin{theorem}[Dimension-Sparsity Tradeoffs \cite{Cohen}]
\label{DimSparsity}
For any $n \in \mathbb{N}$ and $\epsilon, \delta \in (0,1)$, a uniform sparse JL distribution $\mathcal{A}_{s,m,n}$ (defined formally in \Cref{sec:sparseJLDist}), with $s \le \Theta(\epsilon^{-1} \ln(1/\delta))$ and 
\[m \ge \min\left(2 \epsilon^{-2}/\delta, \epsilon^{-2} \ln(1/\delta) e^{\Theta\left(\epsilon^{-1} \ln(1/\delta)/s\right)}\right),\] satisfies $\eqref{JLcondition}$.
\end{theorem} 

\subsection*{Connection to Feature Hashing}
Sparse JL distributions have particularly close ties to feature hashing. In particular, the feature hashing scheme proposed by Weinberger et al. \cite{Weinberger} can be viewed as a special case of sparse JL, namely with $s = 1$. Interestingly, in practice, feature hashing can do much better than theoretical results, such as Theorem~$\ref{SparseJL}$ and Theorem~$\ref{DimSparsity}$, would indicate \cite{FKL}. An explanation for this phenomenon is that the highest error terms in sparse JL stem from vectors with mass concentrated on a very small number of entries, while in practice, the mass on feature vectors may be spread out between many coordinates. This motivates studying the tradeoff space for vectors with low $\ell_{\infty}$-to-$\ell_2$ ratio.

More formally, take $S_v$ to be $\left\{x \in \mathbb{R}^n \mid \frac{\norm{x}_{\infty}}{\norm{x}_2} \le v \right\}$, so that $S_1 = \mathbb{R}^n$ and $S_v \subsetneq S_w$ for $0 \le v < w \le 1$. Let $v(m, \epsilon, \delta, s)$ be the supremum over all $0 \le v \le 1$ such that a sparse JL distribution with sparsity $s$ and dimension $m$ satisfies $\eqref{JLcondition}$ for each $x \in S_v$. (That is, $v(m, \epsilon, \delta, s)$ is the maximum $v \in [0,1]$ such that for every $x \in \mathbb{R}^n$, if $\norm{x}_{\infty} \le v \norm{x}_2$ then \eqref{JLcondition} holds.\footnote{Technically, the quantity $v(m, \epsilon, \delta, s)$, as defined here, also depends on $n$. In particular, every vector $x \in \mathbb{R}^n$ satisfies $\norm{x}_{\infty} \ge \norm{x}_2 / \sqrt{n}$, so $\ell_{\infty}$-to-$\ell_2$ norm ratios below $1 / \sqrt{n}$ are not possible in $\mathbb{R}^n$. To avoid this dependence on $n$ and thus make the bounds cleaner, the quantity $v(m, \epsilon, \delta, s)$ is actually defined to be the infimum over all $n \in \mathbb{N}$ of the supremum over all $0 \le v \le 1$ such that a sparse JL distribution with sparsity $s$ and dimension $m$ satisfies $\eqref{JLcondition}$ for each $x \in S_v$. (That is, $v(m, \epsilon, \delta, s)$ is the maximum $v \in [0,1]$ such that for every $n \in \mathbb{N}$ and every $x \in \mathbb{R}^n$, if $\norm{x}_{\infty} \le v \norm{x}_2$ then \eqref{JLcondition} holds.)}) For $s = 1$, a line of work \cite{Weinberger, DKS10, Derandomized, DKT17, KN12} improved bounds on $v(m, \epsilon, \delta, 1)$, and was recently closed by Freksen et al. \cite{FKL}. 
\begin{theorem}[\cite{FKL}]
\label{FKLresult}
For any $m \in \mathbb{N}$ and $\epsilon, \delta \in (0, 1)$, the function $v(m, \epsilon, \delta, 1)$ is equal to $f(m, \epsilon, \ln(1/\delta))$ where: 
\[f(m, \epsilon, p) = 
\begin{cases}
1 & \text{ if } m \ge 2 \epsilon^{-2} e^p \\
\Theta\left(\sqrt{\epsilon}\min\left(\frac{\ln(\frac{m \epsilon}{p})}{p}, \frac{\sqrt{\ln(\frac{m \epsilon^2}{p})}}{\sqrt{p}}\right)\right) &\text{ if } \Theta(\epsilon^{-2} p) \le m < 2 \epsilon^{-2} e^p\\
0 & \text{ if } m \le \Theta(\epsilon^{-2} p).
\end{cases}
\]
\end{theorem}

\subsection*{Generalizing to Sparse Random Projections with $s > 1$}
While Theorem~$\ref{FKLresult}$ is restricted to the case of $s = 1$, dimensionality reduction schemes constructed using sparse random projections with sparsity $s > 1$ have been used in practice for projecting feature vectors. For example, sparse JL-like methods (with $s > 1$) have been used to project feature vectors in machine learning domains including visual tracking \cite{VisTrack}, face recognition \cite{FR}, and recently in ELM \cite{ELM}. Now, a variant of sparse JL is included in the Python sklearn library.\footnote{See \url{https://scikit-learn.org/stable/modules/random_projection.html}.} 

In this context, it is natural to explore how constructions with $s > 1$ perform on feature vectors, by studying $v(m, \epsilon, \delta, s)$ for sparse JL with $s > 1$. In fact, a related question was considered by Weinberger et al. \cite{Weinberger} for ``multiple hashing,'' an alternate distribution over sparse matrices constructed by adding $s$ draws from $\mathcal{A}_{1, m, n}$ and scaling by $1 / \sqrt{s}$. More specifically, they show that $v(m, \epsilon, \delta, s) \ge \min(1, \sqrt{s} \cdot v(m, \epsilon, \delta, 1))$ for multiple hashing. However, Kane and Nelson \cite{KN12} later showed that multiple hashing has worse geometry-preserving properties than sparse JL: that is, multiple hashing requires a larger sparsity than sparse JL to satisfy \eqref{JLcondition}. 

Characterizing $v(m, \epsilon, \delta, s)$ for sparse JL distributions, which are state-of-the-art, remained an open problem. In this work, we settle how $v(m, \epsilon, \delta, s)$ behaves for sparse JL with a general sparsity $s > 1$, giving tight bounds. Our theoretical result shows that sparse JL with $s > 1$, even if $s$ is a small constant, can achieve significantly better norm-preservation properties for feature vectors than sparse JL with $s = 1$. Moreover, we empirically demonstrate this finding. 

\subsection*{Main Results}

We show the following tight bounds on $v(m, \epsilon, \delta, s)$ for a general sparsity $s$: 
\begin{theorem}
\label{thm:mainresult}
For any $s, m \in \mathbb{N}$ such that $s \le m/e$, consider a uniform sparse JL distribution (defined in \Cref{sec:sparseJLDist}) with sparsity $s$ and dimension $m$.\footnote{We prove the lower bound on $v(m, \epsilon, \delta, s)$ in Theorem \ref{thm:mainresult} for \textit{any} sparse JL distribution.} If $\epsilon$ and $\delta$ are small enough\footnote{By ``small enough'', we mean the condition that $\epsilon, \delta \in (0, C')$ for some positive constant $C'$.}, the function $v(m, \epsilon, \delta, s)$ is equal to $f'(m, \epsilon, \ln(1/\delta), s)$, where $f'(m, \epsilon, p, s)$ is\footnote{Notice that the function $f'(m, \epsilon, p, s)$ is not defined for certain ``constant-factor'' intervals between the boundaries of regimes (e.g. $C_1 \epsilon^{-2} p \le m \le C_2 \epsilon^{-2} p$). See Appendix \ref{sec:mainresultsstatement} for a discussion.}: 
\small{
\[
\begin{cases}
1 & \text{if } m \ge \min\left(2 \epsilon^{-2} e^{p}, \epsilon^{-2} p e^{\Theta\left(\max\left(1, \frac{p \epsilon^{-1}}{s}\right)\right)}\right) \\
\Theta\left(\sqrt{\epsilon s}  \frac{\sqrt{\ln(\frac{m \epsilon^2}{p})}}{\sqrt{p}}\right) & \text{else, if } \max\left(\Theta(\epsilon^{-2} p), s \cdot e^{\Theta\left(\max\left(1, \frac{p \epsilon^{-1}}{s}\right)\right)}\right) \le  m \le \epsilon^{-2} e^{\Theta(p)} \\
\Theta\left(\sqrt{\epsilon s} \min\left(\frac{\ln(\frac{m \epsilon}{p})}{p}, \frac{\sqrt{\ln(\frac{m \epsilon^2}{p})}}{\sqrt{p}}\right)\right) & \text {else, if } \Theta(\epsilon^{-2} p) \le m \le \min\left(  \epsilon^{-2} e^{\Theta(p)}, s \cdot e^{\Theta\left(\max\left(1, \frac{p \epsilon^{-1}}{s}\right)\right)}\right) \\
0 & \text{if } m \le \Theta(\epsilon^{-2} p).
\end{cases}
\]}
\end{theorem}

Our main result, Theorem \ref{thm:mainresult}, significantly generalizes Theorem \ref{SparseJL}, Theorem \ref{DimSparsity}, and Theorem \ref{FKLresult}. Notice our bound in Theorem \ref{thm:mainresult} has up to four regimes. In the first regime, which occurs when $m \ge \min(2 \epsilon^{-2}/\delta, \epsilon^{-2} \ln(1/\delta) e^{\Theta(\max(1, \ln(1/\delta) \epsilon^{-1}/s))})$, Theorem \ref{thm:mainresult} shows $v(m, \epsilon, \delta, s) = 1$, so $\eqref{JLcondition}$ holds on the full space $\mathbb{R}^n$. Notice this boundary on $m$ occurs at the dimensionality-sparsity tradeoff in Theorem \ref{DimSparsity}. In the last regime, which occurs when $m \le \Theta(\epsilon^{-2} \ln (1/\delta))$, Theorem \ref{thm:mainresult} shows that $v(m, \epsilon, \delta, s) = 0$, so there are vectors with arbitrarily small $\ell_{\infty}$-to-$\ell_2$ norm ratio where $\eqref{JLcondition}$ does not hold. When $s \le \Theta(\epsilon^{-1}\ln(1/\delta))$, Theorem \ref{thm:mainresult} shows that up to two intermediate regimes exist. One of the regimes, $\Theta(\sqrt{\epsilon s}\min(\ln(\frac{m \epsilon}{p})/p, \sqrt{\ln(\frac{m \epsilon^2}{p})}/\sqrt{p}))$, matches the middle regime of $v(m, \epsilon, \delta, 1)$ in Theorem \ref{FKLresult} with an extra factor of $\sqrt{s}$, much like the bound for multiple hashing in \cite{Weinberger} that we mentioned previously. However, unlike the multiple hashing bound, Theorem \ref{thm:mainresult} sometimes has another regime, $\Theta(\sqrt{\epsilon s} \sqrt{\ln(\frac{m \epsilon^2}{p})}/\sqrt{p})$, which does not arise for $s = 1$ (i.e. in Theorem \ref{FKLresult}).\footnote{This regime does not arise for $s = 1$, since $e^{\Theta\left(p \epsilon^{-1}\right)} \geq \epsilon^{-2} e^{\Theta\left(p\right)}$ for sufficiently small $\epsilon$.} Intuitively, we expect this additional regime for sparse JL with $s$ close to $\Theta(\epsilon^{-1}\ln(1/\delta))$: at $s = \Theta(\epsilon^{-1} \ln(1/\delta))$ and $m = \Theta(\epsilon^{-2} \ln(1/\delta))$, Theorem~$\ref{SparseJL}$ tells us $v(m, \epsilon, \delta, s) = 1$, but if $\epsilon$ is a constant, then the branch $\Theta(\sqrt{\epsilon s} \ln\left(\frac{m \epsilon}{p}\right)/p)$ yields $\Theta(1/\sqrt{\ln(1/\delta)})$, while the branch $\Theta(\sqrt{\epsilon s} \sqrt{\ln(\frac{m \epsilon^2}{p})}/\sqrt{p})$ yields $\Theta(1)$. Thus, it is natural that the first branch disappears for large $m$. 

Our result elucidates that $v(m, \epsilon, \delta, s)$ increases approximately as $\sqrt{s}$, thus providing insight into how even small constant increases in sparsity can be useful in practice. Another consequence of our result is a lower bound on dimension-sparsity tradeoffs (Corollary \ref{dimsparsitylower} in Appendix \ref{sec:mainresultsstatement}) that essentially matches the upper bound in Theorem \ref{DimSparsity}. Moreover, we require new techniques to prove Theorem \ref{thm:mainresult}, for reasons that we discuss further in Section \ref{subsec:prooftechniques}.

We also empirically support our theoretical findings in Theorem \ref{thm:mainresult}. First, we illustrate with real-world datasets the potential benefits of using small constants $s > 1$ for sparse JL on feature vectors. We specifically show that $s = \left\{4, 8, 16\right\}$ consistently outperforms $s = 1$ in preserving the $\ell_2$ norm of each vector, and that there can be up to a \textit{factor of ten} decrease in failure probability for $s = 8, 16$ in comparison to $s = 1$. Second, we use synthetic data to illustrate phase transitions and other trends in Theorem \ref{thm:mainresult}. More specifically, we empirically show that $v(m, \epsilon, \delta, s)$ is not smooth, and that the middle regime(s) of $v(m, \epsilon, \delta, s)$ increases with $s$.  

% Previous work on studying $v(m, \epsilon, \delta, s)$ for $s > 1$ for the related construction of ``multiple hashing'' gives some intuition for the functions in our bound in Theorem \ref{thm:mainresult}. In \cite{Weinberger}, a reduction is given that demonstrates that $v(m, \epsilon, \delta, s) \ge \min(1, \sqrt{s} \cdot v(m, \epsilon, \delta, 1))$ for multiple hashing. This bound provides intuition for the $\sqrt{s}$ term arising in Theorem \ref{thm:mainresult}. However, our result differs in an important way. Multiple hashing is a different distribution than uniform sparse JL, and it is shown in \cite{KN12} that multiple hashing actually requires roughly an extra $\ln(1/\delta)$ factor on the sparsity to satisfy \eqref{JLcondition}. Moreover, the bound for multiple hashing in \cite{Weinberger} is different than our bound in Theorem \ref{thm:mainresult}: more specifically, it does not include regime (3) in Theorem \ref{thm:mainresult}, which we have shown is necessary for characterizing $v(m, \epsilon, \delta, s)$ for sparse JL.

\subsection{Preliminaries}
\label{sec:sparseJLDist}
Let $\mathcal{A}_{s,m,n}$ be a \textbf{sparse JL distribution} if the entries of a matrix $A \in \mathcal{A}_{s,m,n}$ are generated as follows. Let $A_{r,i} = \eta_{r,i} \sigma_{r,i} / \sqrt{s}$ where $\left\{\sigma_{r,i} \right\}_{r \in [m], i \in [n]}$ and $\left\{\eta_{r,i} \right\}_{r \in [m], i \in [n]}$ are defined as follows:
\begin{itemize}
\item The families $\left\{\sigma_{r,i} \right\}_{r \in [m], i \in [n]}$ and $\left\{ \eta_{r,i} \right\}_{r \in [m], i \in [n]}$ are independent from each other.
\item The variables $\left\{\sigma_{r,i} \right\}_{r \in [m], i \in [n]}$ are i.i.d Rademachers ($\pm 1$ coin flips).
\item The variables $\left\{ \eta_{r,i} \right\}_{r \in [m], i \in [n]}$ are identically distributed Bernoullis ($\left\{0,1\right\}$ random variables) with expectation $s/m$.
\item The $\left\{ \eta_{r,i} \right\}_{r \in [m], i \in [n]}$ are independent across columns but not independent within each column. For every column $1 \le i \le n$, it holds that $\sum_{r=1}^m \eta_{r,i} = s$. Moreover, the random variables are \textit{negatively correlated}: for every subset $S \subseteq [m]$ and every column $1 \le i \le n$, it holds that $\mathbb{E} \left[\prod_{r \in S} \eta_{r, i}\right] \le \prod_{r \in S} \mathbb{E} [\eta_{r, i}]$. 
\end{itemize}
A common special case is a \textbf{uniform sparse JL distribution}, generated as follows: for every $1 \le i \le n$, we \textit{uniformly} choose exactly $s$ of these variables in $\left\{\eta_{r,i}\right\}_{r \in [m]}$ to be $1$. When $s = 1$, every sparse JL distribution is a uniform sparse JL distribution, but for $s > 1$, this is not the case.

Another common special case is a \textbf{block sparse JL distribution}. This produces a different construction for $s > 1$. In this distribution, each column $1 \le i \le n$ is partitioned into $s$ blocks of $\floor{\frac{m}{s}}$ consecutive rows. In each block in each column, the distribution of the variables $\left\{\eta_{r,i}\right\}$ is defined by uniformly choosing \textit{exactly one} of these variables to be $1$.\footnote{Our lower bound in Theorem \ref{thm:mainresult} applies to this distribution, though our upper bound does not. An interesting direction for future work would be to generalize the upper bound to this distribution.}

\subsection{Proof Techniques}\label{subsec:prooftechniques}
We use the following notation. For any random variable $X$ and value $q \ge 1$, we call $\mathbb{E}[|X|^q]$ the $q$th \textit{moment} of $X$, where $\mathbb{E}$ denotes the expectation. We use $\norm{X}_q$ to denote the $q$-norm $\left(\mathbb{E}[|X|^q]\right)^{1/q}$.

For every $[x_1, \ldots, x_n] \in \mathbb{R}^n$ such that $\norm{x}_2 = 1$, we need to analyze tail bounds of an error term, which for the sparse JL construction is the following random variable:
\[\norm{Ax}^2_2 - 1 = \frac{1}{s} \sum_{i \neq j} \sum_{r=1}^m \eta_{r,i} \eta_{r,j} \sigma_{r,i} \sigma_{r,j} x_i x_j =: R(x_1, \ldots, x_n).\] 
An upper bound on the tail probability of $R(x_1, \ldots, x_n)$ is needed to prove the lower bound on $v(m, \epsilon, \delta, s)$ in Theorem \ref{thm:mainresult}, and a lower bound is needed to prove the upper bound on $v(m, \epsilon, \delta, s)$ in Theorem \ref{thm:mainresult}. It turns out that it suffices to tightly analyze the random variable moments $\mathbb{E}[(R(x_1, \ldots, x_n))^q]$. For the upper bound, we use Markov's inequality like in \cite{FKL, KN12, Original, NN13}, and for the lower bound, we use the Paley-Zygmund inequality like in \cite{FKL}: Markov's inequality gives a tail upper bound from upper bounds on moments, and the Paley-Zygmund inequality gives a tail lower bound from upper and lower bounds on moments. Thus, the key ingredient of our analysis is a \textit{tight bound} for $\norm{R(x_1, \ldots, x_n)}_q$ on $S_v = \left\{x \in \mathbb{R}^n \mid \frac{\norm{x}_{\infty}}{\norm{x}_2} \le v\right\}$ at \textit{each} threshold $v$ value. 

% We use the following notation in our analysis. For any random variable $X$ and value $q \ge 1$, we use $\norm{X}_q$ to denote the $q$-norm $\left(\mathbb{E}[|X|^q]\right)^{1/q}$, where $\mathbb{E}$ denotes the expectation. Given two scalar quantities $Q_1$ and $Q_2$ that are functions of some parameters, we use $Q_1 \simeq Q_2$ to denote that there exist positive universal constants $C_1 \le C_2$ such that $C_1Q_2 \le Q_1 \le C_2Q_2$, and we use $Q_1 \lesssim Q_2$ (resp. $Q_1 \gtrsim Q_2$) to denote that there exists a positive universal constant $C$ such that $Q_1 \le CQ_2$ (resp. $Q_1 \geq CQ_2)$.

While the moments of $R(x_1, \ldots, x_n)$ have been studied in previous analyses of sparse JL, we emphasize that it is not clear how to adapt these existing approaches to obtain a tight bound on every $S_v$. The moment bound that we require and obtain is far more general: the bounds in \cite{KN12, NelsonNotes} are limited to $\mathbb{R}^n = S_1$ and the bound in \cite{FKL} is limited to $s = 1$.\footnote{As described in \cite{FKL}, even for the case for $s = 1$, the approach in \cite{KN12} cannot be directly generalized to recover Theorem \ref{FKLresult}. Moreover, the approach in \cite{FKL}, though more precise for $s = 1$, is highly tailored to $s = 1$, and it is not clear how to generalize it to $s > 1$.} The non-combinatorial approach in \cite{NelsonNotes} for bounding $\norm{R(x_1, \ldots, x_n)}_q$ on $\mathbb{R}^n = S_1$ also turns out to not be sufficiently precise on $S_v$, for reasons we discuss in \Cref{sec:momentbounding}.\footnote{In predecessor work \cite{Me}, we give a non-combinatorial approach similar to \cite{NelsonNotes} for a sign-consistent variant of the JL distribution. Moreover, a different non-combinatorial approach for subspace embeddings is given in \cite{Cohen}. However, these approaches both suffer from issues in this setting that are similar to \cite{NelsonNotes}.} 

Thus, we require new tools for our moment bound. Our analysis provides a new perspective, inspired by the probability theory literature, that differs from the existing approaches in the JL literature. We believe our style of analysis is less brittle than combinatorial approaches \cite{FKL, KN12, Original, NN13}: in this setting, once the sparsity $s = 1$ case is recovered, it becomes straightforward to generalize to other $s$ values. Moreover, our approach can yield greater precision than the existing non-combinatorial approaches \cite{NelsonNotes, Cohen, Me}, which is necessary for this setting. Thus, we believe that our \textit{structural} approach to analyzing JL distributions could be of use in other settings. 

In \Cref{sec:momentbounding}, we present an overview of our methods and the key technical lemmas to analyze $\norm{R(x_1, \ldots, x_n)}_q$. We defer the proofs to the Appendix. In \Cref{sec:mainresultsproof}, we prove the tail bounds in Theorem~$\ref{thm:mainresult}$ from these moment bounds. In \Cref{sec:empirical}, we empirically evaluate sparse JL.
% we show on real-world datasets that sparse JL with $s \ge 4$ can achieve better norm-preservation properties than $s = 1$, and we empirically show the theoretical trends in Theorem \ref{thm:mainresult} on synthetic data for the block sparse JL distribution. 

\section{Sketch of Bounding the Moments of $R(x_1, \ldots, x_n)$}
\label{sec:momentbounding}

Our approach takes advantage of the structure of $R(x_1, \ldots, x_n)$ as a quadratic form of Rademachers (i.e. $\sum_{t_1,t_2} a_{t_1,t_2} \sigma_{t_1} \sigma_{t_2}$) with random variable coefficients (i.e. where $a_{t_1, t_2}$ is itself a random variable). For the upper bound, we need to analyze $\norm{R(x_1, \ldots, x_n)}_q$ for general vectors $[x_1, \ldots, x_n]$. For the lower bound, we only need to show $\norm{R(x_1, \ldots, x_n)}_q$ is large for single vector in each $S_v$, and we show we can select the vector in the $\ell_2$-unit ball with $1/v^2$ nonzero entries, all equal to $v$. For ease of notation, we denote this vector by $[v, \ldots, v, 0, \ldots, 0]$ for the remainder of the paper.

We analyze $\norm{R(x_1, \ldots, x_n)}_q$ using general moment bounds for Rademacher linear and quadratic forms. Though Cohen, Jayram, and Nelson \cite{NelsonNotes} also view $R(x_1, \ldots, x_n)$ as a quadratic form, we show in Appendix \ref{sec:looseHW} that their approach of bounding the Rademachers by gaussians is not sufficiently precise for our setting.\footnote{We actually made a similar conceptual point for a different JL distribution in our predecessor work \cite{Me}, but the alternate bound that we produce there also suffers from precision issues in this setting.}

In our approach, we make use of stronger moment bounds for Rademacher linear and quadratic forms, some of which are known to the probability theory community through Lata{\l}a's work in \cite{LatalaChaos, LatalaMoments} and some of which are new adaptions tailored to the constraints arising in our setting. More specifically, Lata{\l}a's bounds \cite{LatalaChaos, LatalaMoments} target the setting where the coefficients are scalars. In our setting, however, the coefficients are themselves random variables, and we need bounds that are \textit{tractable} to analyze in this setting, which involves creating new bounds to handle some cases. 

Our strategy for bounding $\norm{R(x_1, \ldots, x_n)}_q$ is to break down into rows. We define
\[Z_r(x_1, \ldots, x_n) := \sum_{1 \le i \neq j \le n} \eta_{r,i} \eta_{r,j} \sigma_{r,i} \sigma_{r,j} x_i x_j\] so that $R(x_1, \ldots, x_n) = \frac{1}{s} \sum_{r=1}^m Z_r(x_1, \ldots, x_n)$. We analyze the moments of $Z_r(x_1, \ldots, x_n)$, and then combine these bounds to obtain moment bounds for $R(x_1, \ldots, x_n)$. In our bounds, we use the notation $f \lesssim g$ (resp. $f \gtrsim g$) to denote $f \leq Cg$ (resp. $f \geq C g$) for some constant $C > 0$. 

% First, we describe our methods to bound the moments of each row (i.e. $Z_r(x_1, \ldots, x_n)$). To obtain a lower bound on the moments of $Z_r(v, \ldots, v, 0, \ldots, 0)$, we make use of the following tight bound on moments of quadratic forms of Rademacher random variables with random variable coefficients. We derive this bound from Lata{\l}a's bound\footnote{In fact, Lata{\l}a shows moment bounds for much more general quadratic forms, but for the application to JL, we only need the bound in the special case of Rademachers.} on Rademacher quadratic forms \cite{LatalaChaos}, and we defer the proof of this lemma from Lata{\l}a's result to \Cref{sec:usefulmomentboundsproofs}. 
% \begin{lemma}
% \label{usefulquadformbound}
% Let $T$ be an even integer, $\left\{\sigma_i\right\}_{1 \le i \le n}$ be independent Rademachers, and $(Y_{i,j})_{1 \le i, j \le n}$ be a $n \times n$ symmetric, nonnegative random matrix with zero diagonal (i.e. $Y_{i,i} = 0$) such that $\left\{Y_{i,j}\right\}_{1 \le i,j \le n}$ is independent from $\left\{\sigma_i\right\}_{1 \le i \le n}$. If $W_i = \sqrt{\sum_{1 \le j \le n} Y_{i,j}^2}$, then: 
% \[\norm{\sum_{1 \le i, j \le n} Y_{i,j} \sigma_i \sigma_j}_T \simeq \norm{\sup_{\norm{b}_2, \norm{c}_2 \le \sqrt{T}, \norm{b}_{\infty}, \norm{c}_{\infty} \le 1} \sum_{1 \le i, j \le n} Y_{i,j} b_i c_j}_T  + \norm{\sum_{1 \le i \le T} W_{(i)} + \sqrt{T} \sqrt{\sum_{T < i \le n} W_{(i)}^2}}_T \] where $W_{(1)} \ge W_{(2)} \ge \ldots \ge \ldots W_{(n)}$ is a permutation of $W_1, \ldots, W_n$. 
% \end{lemma}

\subsection{Bounding $\norm{Z_r(x_1, \ldots, x_n)}_q$}
We show the following bounds on $\norm{Z_r(x_1, \ldots, x_n)}_q$. For the lower bound, as we discussed before, it suffices to bound $\norm{Z_r(v, \ldots, v, 0, \ldots, 0)}_q$. For the upper bound, we need to bound $\norm{Z_r(x_1, \ldots, x_n)}_q$ for general vectors as a function of the $\ell_{\infty}$-to-$\ell_2$ norm ratio. 
\begin{lemma}
\label{rowbound}
Let $\mathcal{A}_{s,m, n}$ be a sparse JL distribution such that $s \le m/e$. Suppose that $x = [x_1, \ldots, x_n]$ satisfies $\norm{x}_{\infty} \le v$ and $\norm{x}_2 = 1$. If $T$ is even, then:
\[\norm{Z_r(x_1, \ldots, x_n)}_T \lesssim 
 \begin{cases}
    \frac{Ts}{m}, & \text{for } T=2, 3 \le T \le \frac{se}{mv^2} \\
    \min\left(\frac{T^2v^2}{\ln(mTv^2/s)^2}, \frac{T}{\ln(m/s)}\right)& \text{for }  T \ge 3, T \ge \frac{se}{mv^2}, \ln(Tmv^2/s) \le T\\
  v^2 \left(\frac{s}{mTv^2}\right)^{2/T}, & \text{for } T \ge 3, T \ge \frac{se}{mv^2}, \ln(Tmv^2/s) > T.
  \end{cases}
 \]
\end{lemma}
\begin{lemma}
\label{lowerboundrow}
Let $\mathcal{A}_{s,m, n}$ be a sparse JL distribution. Suppose $\frac{1}{v^2}$ and $T$ are even integers. 

Then, $\norm{Z_r(v, \ldots, v, 0, \ldots, 0)}_2 \gtrsim \frac{s}{m}$. Moreover, if $s \le m/e$ and $T \ge \frac{se}{mv^2}$, then 
\[\norm{Z_r(v, \ldots, v, 0, \ldots, 0)I_{\sum_{i=1}^{1/v^2} \eta_{1,i} = 2}}_T \gtrsim v^2\left(\frac{s}{mTv^2}\right)^{2/T}\] and 
\[
 \norm{Z_r(v, \ldots, v, 0, \ldots, 0)}_T \gtrsim
    \begin{cases}
    \frac{T^2v^2}{\ln^2(mv^2T/s)} & \text{ for } 1\le \ln(mv^2T/s) \le T, v \le \frac{\sqrt{\ln(m/s)}}{\sqrt{T}}\\
  v^2\left(\frac{s}{mTv^2}\right)^{2/T}  & \text{ for } \ln(mv^2T/s) > T.
  \end{cases}
\]
\end{lemma}

We now sketch our methods to prove Lemma \ref{rowbound} and Lemma \ref{lowerboundrow}. For the lower bound (Lemma \ref{lowerboundrow}), we can view $Z_r(v, \ldots, v, 0, \ldots, 0)$ as a quadratic form $\sum_{t_1, t_2} a_{t_1, t_2} \sigma_{t_1} \sigma_{t_2}$ where $(a_{t_1, t_2})_{t_1, t_2 \in [mn]}$ is an appropriately defined block-diagonal $mn$ dimensional matrix. We can write $\mathbb{E}_{\sigma, \eta}[(Z_r(v, \ldots, v, 0, \ldots, 0))^q]$ as $\mathbb{E}_{\eta} \left[\mathbb{E}_{\sigma}[(Z_r(v, \ldots, v, 0, \ldots, 0))^q]\right]$: for \textit{fixed} $\eta_{r,i}$ values, the coefficients are scalars. We make use of Lata{\l}a's tight bound on Rademacher quadratic forms with scalar coefficients \cite{LatalaChaos} to analyze $\mathbb{E}_{\sigma}[(Z_r(v, \ldots, v, 0, \ldots, 0))^q]$ as a function of the $\eta_{r,i}$. Then, we handle the randomness of the $\eta_{r,i}$ by taking an expectation of the resulting bound on $\mathbb{E}_{\sigma}[(Z_r(v, \ldots, v, 0, \ldots, 0))^q]$ over the $\eta_{r,i}$ values to obtain a bound on $\norm{Z_r(v, \ldots, v, 0, \ldots, 0)}_q$.  

For the upper bound (Lemma \ref{rowbound}), since Lata{\l}a's bound \cite{LatalaChaos} is tight for scalar quadratic forms, the natural approach would be to use it to upper bound $\mathbb{E}_{\sigma}[(Z_r(x_1, \ldots, x_n))^q]$ for general vectors. However, when the vector is not of the form $[v, \ldots, v, 0, \ldots, 0]$, the asymmetry makes the resulting bound intractable to simplify. Specifically, there is a term, which can be viewed as a generalization of an operator norm to an $\ell_2$ ball cut out by $\ell_{\infty}$ hyperplanes, that  becomes problematic when taking an expectation over the $\eta_{r,i}$ to obtain a bound on $\mathbb{E}_{\sigma, \eta}[(Z_r(x_1, \ldots, x_n))^q]$. Thus, we construct simpler estimates that avoid these complications while remaining sufficiently precise for our setting. These estimates take advantage of the structure of $Z_r(x_1, \ldots, x_n)$ and enable us to show Lemma \ref{rowbound}.  

\subsection{Obtaining bounds on $\norm{R(x_1, \ldots, x_n)}_q$}
Now, we use Lemma \ref{rowbound} and Lemma \ref{lowerboundrow} to show the following bounds on $\norm{R(x_1, \ldots, x_n)}_q$:
\begin{lemma}
\label{upper}
Suppose $\mathcal{A}_{s,m,n}$ is a sparse JL distribution such that $s \le m/e$, and let $x = [x_1, \ldots, x_n]$ be such that $\norm{x}_2 = 1$.
Then, $\norm{R(x_1, \ldots, x_n)}_2 \le \frac{\sqrt{2}}{\sqrt{m}}$. Now, suppose that $2 < q \le m$ is an even integer and $\norm{x}_{\infty} \le v$. If $\frac{se}{mv^2} \ge q$, then $\norm{R(x_1, \ldots, x_n)}_q \lesssim \frac{\sqrt{q}}{\sqrt{m}}$. If $\frac{se}{mv^2} < q$ and if there exists a constant $C_2 \ge 1$ such that $C_2q^3mv^4 \ge s^2$, then $\norm{R(x_1, \ldots, x_n)}_q \lesssim g$ where $g$ is:
\small{
\[\begin{cases} 
\max\left(\frac{\sqrt{q}}{\sqrt{m}}, \frac{C_2^{1/3}q^2v^2}{s \ln^2(qmv^2/s)}\right) & \text{ if } \ln(\frac{qmv^4}{s^2}) \le 2, \ln(\frac{qmv^2}{s}) \le q\\ 
\frac{\sqrt{q}}{\sqrt{m}} & \text{ if } \ln(\frac{qmv^4}{s^2}) \le 2, \ln(\frac{qmv^2}{s})> q\\ 
\max\left(\frac{\sqrt{q}}{\sqrt{m}}, \frac{qv^2}{s\ln(qmv^4/s^2)}, \min\left( \frac{C_2^{1/3}q^2v^2}{s \ln^2(qmv^2/s)}, \frac{q}{s\ln(m/s)}\right)\right) & \text{ if } \ln(\frac{qmv^4}{s^2}) > 2, \ln(\frac{qmv^2}{s}) \le q\\
\max\left(\frac{\sqrt{q}}{\sqrt{m}}, \frac{qv^2}{s\ln(qmv^4/s^2)}\right) & \text{ if } \ln(\frac{qmv^4}{s^2}) > 2, \ln(\frac{qmv^2}{s})> q. \\
\end{cases}
\]}
\end{lemma}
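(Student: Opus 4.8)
The plan is to use the row decomposition $R(x_1,\ldots,x_n)=\tfrac1s\sum_{r=1}^m Z_r(x_1,\ldots,x_n)$, estimate a single row $\norm{Z_r}_q$, and then reassemble the sum. The bound $\norm{R}_2\le\sqrt2/\sqrt m$ is immediate: expanding $\mathbb{E}[R^2]$, the expectation over the Rademachers is nonzero only when the four signs pair up, forcing $r=r'$ and $\{i,j\}=\{i',j'\}$; negative correlation then gives $\mathbb{E}[\eta_{r,i}\eta_{r,j}]\le(s/m)^2$, so $\mathbb{E}[R^2]\le\tfrac2m\big(\sum_i x_i^2\big)^2=\tfrac2m$. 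For an even $q>2$ the strategy has four steps: (i) condition on the sparsity pattern $\eta$ and bound $\norm{Z_r\,|\,\eta}_q$ as a Rademacher quadratic form; (ii) take the $L^q(\eta)$-norm, which reduces matters to moments of the scalar $W_r:=\sum_i\eta_{r,i}x_i^2$ together with a deterministic contribution from the few largest coordinates of $x$; (iii) pass from $\{Z_r\}$ to $R=\tfrac1s\sum_r Z_r$ via negative correlation of $\eta$ across rows together with a sharp moment inequality for i.i.d.\ sums; (iv) optimize the resulting nested suprema and sort the output into the four regimes defining $f$.

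For step (i), fix $\eta$ and write $T_r=\{i:\eta_{r,i}=1\}$. Then $Z_r=\sum_{i\ne j\in T_r}x_ix_j\sigma_{r,i}\sigma_{r,j}=L_r^2-W_r$, where $L_r=\sum_{i\in T_r}x_i\sigma_{r,i}$ is a Rademacher linear form with $\mathbb{E}_\sigma[L_r^2]=W_r$; equivalently $Z_r$ is the Rademacher chaos with coefficient matrix $A_r=x_{T_r}x_{T_r}^{\top}$ minus its diagonal, so that $\norm{A_r}_{\mathrm{HS}}\le W_r$, $\norm{A_r}_{\mathrm{op}}\le W_r+v^2$, and $\max_{i\ne j\in T_r}|x_ix_j|\le v^2$. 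Applying sharp moment bounds for Rademacher linear and quadratic forms --- Latała's estimates where they remain tractable for these (random) coefficients, and new variants where they do not --- yields a conditional bound of the schematic shape $\norm{Z_r\,|\,\eta}_q\lesssim qW_r+(\text{large-coordinate term})$, the first summand being the ``bulk'' contribution $\simeq\norm{L_r\,|\,\eta}_{2q}^2$ once $L_r$ is spread out, and the second governed by the entries of $x$ of size $\simeq v$ that fall in $T_r$ (and bounded deterministically by $\lesssim q^2v^2$ after using $\norm{x}_\infty\le v$).

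For step (ii), taking the $q$-th moment over $\eta$ is where the work lies, and it reduces to moments of $W_r=\sum_i\eta_{r,i}x_i^2$. The crucial point is that these summands are \emph{independent} (distinct columns), lie in $[0,v^2]$, and have total mean $s/m$; the sharp Bennett/Poissonian moment estimate then gives, for $2\le p\le q$, $\norm{W_r}_p\lesssim\tfrac sm+v\sqrt{sp/m}+\tfrac{pv^2}{\log(pmv^2/s)}$ (with the usual adjustment when $pmv^2/s$ is near $1$, which is also where the hypothesis $C_2q^3mv^4\ge s^2$ keeps the estimates non-degenerate). This is the source of the two thresholds $\log(qmv^2/s)$ versus $q$ and $\log(qmv^4/s^2)$ versus $2$ in the statement. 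For step (iii) I would use the negative correlation of the $\eta_{r,i}$ across rows to dominate $\norm{\sum_r Z_r}_q$ by its value for independent rows, then invoke the sharp i.i.d.-sum moment inequality (Latała) of the form $\norm{\sum_{r\le m}Z_r}_q\simeq\sup_{2\le p\le q}\tfrac qp\big(\tfrac mq\big)^{1/p}\norm{Z_r}_p$, and divide by $s$; this produces the competing quantities $\tfrac{\sqrt q}{\sqrt m}$, $\tfrac{qv^2}{s\log(qmv^4/s^2)}$, $\tfrac{q^2v^2}{s\log^2(qmv^2/s)}$, $\tfrac q{s\log(m/s)}$ (the $\log(m/s)$ coming from optimizing the $(m/q)^{1/p}$ factor at $p\simeq\log(m/s)$), and a case analysis on which one dominates, together with the two thresholds above, yields exactly $f$.

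The main obstacle is steps (i)--(ii): the off-the-shelf route --- Gaussianize $Z_r$ and apply Hanson--Wright, as in \cite{NelsonNotes} --- is unavailable in spirit (the analysis will also need these moment estimates toward a matching \emph{lower} bound, which forbids replacing Rademachers by Gaussians) and, even for the upper bound, is not tight enough near the critical window $q\simeq se/(mv^2)$ and in the $\log$-saturated regime, which is exactly where the comparison $qmv^4/s^2$ versus $e^2$ arises. So the real work is formulating and proving the refined moment bounds for Rademacher forms whose coefficients are themselves random variables, and then carrying out cleanly the nested optimization --- over the inner index in the linear-form/$W_r$ estimate and over the outer index in the i.i.d.-sum estimate --- so that the four case boundaries come out matching, up to constants on the exponent of $m$, those appearing later in Theorem~\ref{upperbound}.
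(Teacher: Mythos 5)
Your overall architecture matches the paper's: decompose into rows $R=\tfrac1s\sum_r Z_r$, use the negative correlations to dominate by independent copies of the rows, apply Lata{\l}a's i.i.d.-sum bound (Lemma~\ref{sumiid}) to reduce to $\sup_{2\le T\le q}\tfrac qT(m/q)^{1/T}\norm{Z_1}_T$, and then optimize; the $\norm{R}_2\le\sqrt2/\sqrt m$ computation is also the paper's. The gap is in your steps (i)--(ii), which is precisely the step the whole lemma hinges on. Your schematic per-row bound $\norm{Z_r\mid\eta}_q\lesssim qW_r+(\text{large-coordinate term})$ with the bulk term $qW_r$ is an operator-norm (Hanson--Wright-shaped) bound, and after taking $\eta$-moments via your Bennett estimate it can only ever produce $q\norm{W_r}_q\simeq\frac{q^2v^2}{\log(qmv^2/s)}$ --- a \emph{single} logarithm --- whereas the lemma (and the matching lower bound, Lemma~\ref{lowerboundrow}) requires $\frac{q^2v^2}{\log^2(qmv^2/s)}$. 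This is exactly the looseness the paper isolates in \Cref{sec:looseHW}: at the critical $v\simeq\sqrt{\epsilon}\log(m\epsilon/p)/p$ the single-log bound is $\omega(\epsilon)$, so it cannot recover Theorem~\ref{lowerbound}. Conditioning on $\eta$ does not rescue it: the critical contribution comes from rows whose intersection with the size-$\simeq v$ coordinates has cardinality $K_r\simeq T/\log(Tmv^2/s)\ll T$, and on that event the correct conditional value is $\simeq K_r^2v^2$ (Rademacher, not Gaussian, behavior), so the $\log^2$ gain comes from the $\eta$-moments of the random count $K_r$. Your proposed fix --- bounding the large-coordinate term ``deterministically by $\lesssim q^2v^2$'' --- discards exactly this randomness; it is also not a valid deterministic bound (a row may contain up to $\min(s,1/v^2)\gg q$ coordinates of size $\simeq v$), and even where it holds, feeding a constant $q^2v^2$ per row into the i.i.d.-sum step gives $\simeq q^2v^2/s$ at $T=q$, with no $\log^2(qmv^2/s)$ factor and no route to the $\frac{q}{s\log(m/s)}$ alternative in the third branch of $f$.

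For comparison, the paper avoids conditioning altogether: it treats $Y_i=\eta_{r,i}\sigma_{r,i}$ as i.i.d.\ symmetric variables and proves a new zero-diagonal squared-linear-form bound (Lemma~\ref{generalsymmetric}), giving $\norm{Z_r}_T\lesssim v^2\sup_{1\le t\le T/2}\tfrac{T^2}{t^2}(\tfrac{s}{mTv^2})^{1/t}$, which is what yields both the $\log^2$ regime and the $(s/(mTv^2))^{2/T}$ regime; and for the remaining window it proves Lemma~\ref{easybound} by splitting off only the top $\lceil T/\log(m/s)\rceil$ coordinates (deterministic contribution $\lesssim T/\log(m/s)$, not $q^2v^2$) and applying the linear-form bound of Proposition~\ref{simpler} to the rest. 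Your last paragraph correctly anticipates that Hanson--Wright is too weak and that new Rademacher-form bounds with random coefficients are needed, but the bound you actually write down reintroduces that weakness; as a smaller point, the $\log(m/s)$ in the third branch comes from this per-row top-coordinate split, not from optimizing the $(m/q)^{1/p}$ factor at $p\simeq\log(m/s)$. To repair the proposal you would need to replace the $qW_r$ bulk plus deterministic cap by an estimate that retains the distribution of the number of large coordinates per row (or adopt the paper's unconditional treatment), and then your steps (iii)--(iv) would go through essentially as in \Cref{sec:auxupper}.
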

\normalsize{}
\begin{lemma}
\label{lower}
Suppose $\mathcal{A}_{s,m,n}$ is a uniform sparse JL distribution. Let $q$ be a power of $2$, and suppose that $0 < v \le 0.5$ and $\frac{1}{v^2}$ is an even integer. If $qv^2 \le s$, then $\norm{R(v, \ldots, v, 0, \ldots, 0)}_q \gtrsim \frac{\sqrt{q}}{\sqrt{m}}$. If $m \ge q$,  $2 \le \ln(qmv^4/s^2) \le q$, $2qv^2 \le 0.5 s \ln(qmv^4/s^2)$, and $s \le m/e$, then $\norm{R(v, \ldots, v, 0, \ldots, 0)}_q \gtrsim \frac{qv^2}{s \ln(qmv^4/s^2)}$. If $s \le m/e$, $v \le \frac{\sqrt{\ln(m/s)}}{\sqrt{q}}$, and $1 \le \ln(qmv^2/s) \le q$, then $\norm{R(v, \ldots, v, 0 \ldots, 0)}_q \gtrsim \frac{q^2v^2}{s \ln^2(qmv^2/s)}$. 
\end{lemma}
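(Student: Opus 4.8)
The plan is to use the row decomposition $R = \frac{1}{s}\sum_{r=1}^m Z_r$ together with the observation that, conditionally on the hash pattern $\{\eta_{r,i}\}$, distinct rows depend on disjoint (hence independent) collections of signs. Since $q$ is a power of $2$, it is even, so $\norm{R}_q^q = \mathbb{E}[R^q] = s^{-q}\,\mathbb{E}\big[(\sum_{r}Z_r)^q\big]$, and it suffices to lower bound $\mathbb{E}[(\sum_r Z_r)^q]$. Expanding the $q$-th power over the indices $(r_\ell,i_\ell,j_\ell)$ and then taking the expectation over the signs, only terms in which every $(r_\ell,i_\ell)$ occurs an even number of times survive, and each such term equals $\big(\prod_\ell x_{i_\ell}x_{j_\ell}\big)\,\mathbb{E}\big[\prod_\ell \eta_{r_\ell,i_\ell}\eta_{r_\ell,j_\ell}\big]$, which is \emph{nonnegative} because $x_i = v \ge 0$ and the $\eta$'s are $\{0,1\}$-valued. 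Thus $\mathbb{E}[(\sum_r Z_r)^q]$ is a sum of nonnegative contributions, and we may lower bound it by retaining any convenient subcollection.

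First I would reorganize the retained terms by the multiset of rows they touch, turning the lower bound into a weighted sum of ``moments of products of rows'' $\mathbb{E}\big[\prod_{r\in T}Z_r^{a_r}\big]$; this is the role of the general result. Here the negative correlations of the uniform sparse JL distribution must be handled: the $\eta$-moments factor over columns by independence, and within a column $\mathbb{E}\big[\prod_{k< a}\eta_{r^{(k)},i}\big] = \prod_{k<a}\frac{s-k}{m-k}$, which is within a constant factor of the ``independent'' value $(s/m)^a$ provided the number $a$ of distinct rows meeting a column stays small --- and it is the hypothesis $s \le m/2$ that makes this controllable. A counting argument tailored to this setting then bounds the multiplicities of index-tuples realizing each row-pattern and reduces everything to individual row moments; the upshot is a Latała-type lower bound of the form
\[
\norm{\textstyle\sum_{r}Z_r}_q \;\gtrsim\; \max\!\Big( \sqrt{q}\,\big(\textstyle\sum_r \norm{Z_r}_2^2\big)^{1/2},\;\; \sup_{2 \le t \le q}\frac{q}{t}\big(\textstyle\sum_r \norm{Z_r}_t^t\big)^{1/q}\Big)
\]
up to universal constants.

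It then remains to estimate the row moments. Writing $Z_r = v^2\big((\sum_{k=1}^{N_r}\sigma_{r,i_k})^2 - N_r\big)$ with $N_r = \#\{i \le 1/v^2 : \eta_{r,i} = 1\}$ a Binomial-type variable with mean $s/(mv^2)$, moment bounds for Rademacher linear forms give $\norm{(\sum_{k=1}^{N_r}\sigma_k)^2 - N_r}_t \simeq N_r\min(t,N_r)$ conditionally on $N_r$, hence $\norm{Z_r}_t^t \simeq v^{2t}\,\mathbb{E}\big[(N_r\min(t,N_r))^t\big]$, which is in turn governed by the moments $\norm{N_r}_t \simeq \max\big(\frac{s}{mv^2},\,\frac{t}{\log(tmv^2/s)}\big)$. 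Substituting back: the Gaussian term becomes $\frac{1}{s}\sqrt{q}\,(\sum_r\norm{Z_r}_2^2)^{1/2} \simeq \sqrt{q}/\sqrt{m}$ (using $\sum_r\norm{Z_r}_2^2 \simeq s^2/m$), valid once $qv^2 \lesssim s$, which gives the first bound; an interior optimum of the $\sup$ over $t$ produces $\frac{qv^2}{s\log(qmv^4/s^2)}$ under $2 \le \log(qmv^4/s^2) \le q$, with the side condition $2qv^2 \le 0.5\,s\log(qmv^4/s^2)$ ensuring this branch dominates, giving the second bound; and the endpoint $t = q$ --- a single heaviest row with $N_r$ as large as its Poisson-type upper tail permits --- produces $\frac{q^2v^2}{s\log^2(qmv^2/s)}$ under $v \le \sqrt{\log(m/s)/q}$ and $1 \le \log(qmv^2/s) \le q$, giving the third bound.

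The main obstacle is twofold. First, obtaining a \emph{tight two-sided} estimate for $\norm{Z_r}_t$ --- a Rademacher quadratic form over a random number $N_r$ of variables --- at the critical scales of $t$; the scalar-coefficient chaos bounds of Latała do not directly apply, since the $\eta$'s act as random coefficients, so part of the work is to establish the needed estimates as standalone moment inequalities. Second, carrying the chosen subcollection of nonnegative terms through the counting argument so that the reduction to (conditionally) independent rows, together with the $s \le m/2$ control on the negative correlations, loses only constant factors exactly where the three regimes meet. Once the $s = 1$ case is recovered the generalization to arbitrary $s$ is comparatively mechanical, but locating a favorable $\eta$-event and tracking constants through it is the delicate part.
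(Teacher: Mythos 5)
Your overall architecture mirrors the paper's: expand $\mathbb{E}[(\sum_r Z_r)^q]$, use nonnegativity of the surviving terms to retain a subcollection (Proposition~\ref{combinerowseasy}), run a counting argument to control the uniform-sparse-JL correlations and reduce to individual-row moments (Lemma~\ref{genterm}), lower bound row moments via two-sided Rademacher chaos estimates conditioned on the number of nonzeros (Lemma~\ref{lowerboundrow}), and read off the three branches at the scales $t=2$, $t\approx\log(qmv^4/s^2)$, $t=q$. However, the pivotal inequality you rest this on --- the ``Lata{\l}a-type lower bound'' $\norm{\sum_r Z_r}_q \gtrsim \sup_{2\le t\le q}\frac{q}{t}\bigl(\sum_r\norm{Z_r}_t^t\bigr)^{1/q}$ --- is a genuine gap. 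As written it is not even scale-invariant (replacing $Z_r$ by $cZ_r$ scales the left side by $c$ and the right side by $c^{t/q}$); the intended form is presumably $\frac{q}{t}\bigl(\frac{m}{q}\bigr)^{1/t}\norm{Z_1}_t$ as in Lemma~\ref{sumiid}. More substantively, the two-sided iid Lata{\l}a bound is not available here because the rows are \emph{not} independent, and the counting argument does not deliver the unrestricted row moment $\norm{Z_1}_t$ at intermediate $t$: the term-by-term comparison with the independent case only goes through when the independent rows are restricted to the event of exactly two nonzero entries, which is why Lemma~\ref{genterm} produces $\norm{Z_1 I_{\sum_i\eta_{1,i}=2}}_{q/T}$ (except at $q/T=2$), and why the paper's middle branch (Lemma~\ref{alpha2}) is proved from that indicator-restricted moment. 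Your sketch quietly uses the full $\norm{Z_r}_t$ at all $t$, which the counting argument does not justify; it happens that the target value $\frac{qv^2}{s\log(qmv^4/s^2)}$ coincides with the $M=2$ contribution, so the plan is repairable, but as stated the key inequality is unproven.

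A secondary issue is that you misattribute where the hypotheses enter. The conditions $qv^2\le s$ and $2qv^2\le 0.5\,s\log(qmv^4/s^2)$ are not ``dominance'' side conditions: they are exactly what guarantees $\lceil 2Tv^2\rceil\le s$ at the chosen $T$, i.e., that the counting argument can assign column indices so that no column is hit more than $s$ times (the case where a column is hit more than $s$ times is the real danger, since those expectations vanish identically). Meanwhile the factors $\prod_{k<a}\frac{s-k}{m-k}$ are within $C^{-a}(s/m)^a$ for \emph{all} $a\le s$ by a Stirling bound, irrespective of $s\le m/2$; the hypothesis $s\le m/2$ is instead used in the row-moment lower bound, to keep $(1-s/m)^{m/s}$ bounded below. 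These are fixable bookkeeping points, but together with the unproven combining inequality they mean the proposal, as written, does not yet constitute a proof.
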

We now sketch how to prove bounds on $\norm{R(x_1, \ldots, x_n)}_q$ using bounds on $\norm{Z_r(x_1, \ldots, x_n)}_T$. To show Lemma \ref{upper}, we show that making the row terms $Z_r(x_1, \ldots, x_n)$ independent does not decrease $\norm{R(x_1, \ldots, x_n)}_q$, and then we apply a general result from \cite{LatalaMoments} for moments of sums of i.i.d symmetric random variables. For Lemma \ref{lower}, handling the correlations between the row terms $Z_r(x_1, \ldots, x_n)$ requires more care. We show that the negative correlations induced by having exactly $s$ nonzero entries per column do not lead to significant loss, and then stitch together $\norm{R(v, \ldots, v, 0, \ldots, 0)}_q$ using the moments of $Z_r(v, \ldots, v, 0, \ldots, 0)$ that contribute the most.

\section{Proof of Main Result from Moment Bounds}
\label{sec:mainresultsproof}
We now sketch how to prove Theorem \ref{thm:mainresult}, using Lemma \ref{upper} and Lemma \ref{lower}. First, we simplify these bounds at the target parameters to obtain the following: 
\begin{lemma}
\label{upperfinal}
Let $\mathcal{A}_{s,m,n}$ be a sparse JL distribution, and suppose $\epsilon$ and $\delta$ are small enough, $s \le m /e$, $\Theta(\epsilon^{-2} \ln(1/\delta)) \le m < 2 \epsilon^{-2}/\delta$,
$v \le f'(m, \epsilon, \ln(1/\delta), s)$, and $p = \Theta(\ln(1/\delta))$ is even. If $x = [x_1, \ldots, x_n]$ satisfies $\norm{x}_{\infty} \le v$ and $\norm{x}_2 = 1$, then $\norm{R(x_1, \ldots, x_n)}_p \leq \frac{\epsilon}{2}$. 
\end{lemma}
\begin{lemma}
\label{lowerupperfinal}
There is a universal constant $D$ satisfying the following property. Let $\mathcal{A}_{s,m,n}$ be a uniform sparse JL distribution, and suppose $\epsilon, \delta$ are small enough, $s \le m/e$, $f'(m, \epsilon, \ln(1/\delta), s) \le 0.5$, and $q$ is an even integer such that $q = \min(m /2, \Theta(\ln(1/\delta)))$. For each $\psi > 0$, there exists $v \le f'(m, \epsilon, \ln(1/\delta), s) + \psi$, such that $\norm{R(v, \ldots, v, 0, \ldots, 0)}_q \ge 2 \epsilon$ and $\frac{\norm{R(v, \ldots, v, 0, \ldots, 0)}_q}{\norm{R(v, \ldots, v, 0, \ldots, 0)}_{2q}} \geq D$. 
\end{lemma}

Now, we use Lemma \ref{upperfinal} and Lemma \ref{lowerupperfinal} to prove Theorem \ref{thm:mainresult}. 
\begin{proof}[Proof of Theorem~$\ref{thm:mainresult}$]
Since the maps in $\mathcal{A}_{s,m,n}$ are linear, it suffices to consider unit vectors $x$. 
First, we prove the lower bound on $v(m, \epsilon, \delta, s)$. To handle $m \ge 2 \epsilon^{-2} / \delta$, we take $q=2$ in Lemma \ref{upperfinal} and apply Chebyshev's inequality. Otherwise, we take $p = \ln(1/\delta)$ (approximately) and apply Lemma \ref{upperfinal} and Markov's inequality. We see that $\mathbb{P}[|\norm{Ax}_2^2 - 1| \ge \epsilon]$ can be expressed as:
\[\mathbb{P}[|R(x_1, \ldots, x_n)| \ge \epsilon] = \mathbb{P}[R(x_1, \ldots, x_n)^p \ge \epsilon^p] \le \epsilon^{-p} \mathbb{E}[R(x_1, \ldots, x_n)]^p \le \delta.\]
Thus, condition \eqref{JLcondition} is satisfied for $x \in S_v$ when $v \le f'(m, \epsilon, \ln(1/\delta), s)$ as desired. 
% When $m \ge \epsilon^{-2} p e^{\frac{p \epsilon^{-1}}{s}}$, we have that $C_v \sqrt{\epsilon s} \frac{\sqrt{\ln(m \epsilon^2)}}{\sqrt{p}}\ge 1$ for some constant $C_v > 0$, so we know that $v(m, \epsilon, \delta, s) = 1$. Moreover, we need to take a $\min$ with $1$ since $S_v = S_1$ for $v \ge 1$.

Now, we prove the upper bound on $v(m, \epsilon, \delta, s)$. We need to lower bound the tail probability of \\
$R(v,\ldots, v, 0, \ldots,0)$, and to do this, we use the Paley-Zygmund inequality applied to $q$th moments. Let $D$ be defined as in Lemma \ref{lowerupperfinal}, and take $q = \min(m/2, \max(2, \frac{\ln(1/\delta)-2}{-2\ln(D)}))$. By the Paley-Zygmund inequality and Lemma \ref{lowerupperfinal}, there exists $v \le f'(m, \epsilon, \ln(1/\delta), s) + \psi$ such that:
\[\mathbb{P}[|R(v, \ldots, v, 0, \ldots, 0)| > \epsilon] \ge 0.25 \left(\frac{\norm{R(v, v, \ldots, v, 0, \ldots, 0)}_q}{\norm{R(v, v, \ldots, v, 0, \ldots, 0)}_{2q}}\right)^{2q} \ge 0.25 D^{2q} > \delta.\]
Thus, it follows that $\sup_{x \in S_{f'(m, \epsilon, \ln(1/\delta), s) + \psi}, \norm{x}_2 = 1} \mathbb{P}[|\norm{Ax}_2^2 - 1| > \epsilon]  > \delta$ as desired. 
\end{proof}

\section{Empirical Evaluation}\label{sec:empirical}
Recall that for sparse JL distributions with sparsity $s$, the projection time for an input vector $x$ is $O(s \norm{x}_0)$, where $\norm{x}_0$ is the number of nonzero entries in $x$. Since this grows linearly in $s$, in order to minimize the impact on projection time, we restrict to small constant $s$ values (i.e. $1 \le s \le 16$). In Section \ref{subsec:realworld}, we demonstrate on real-world data the benefits of using $s > 1$. In Section \ref{subsec:synthetic}, we illustrate trends in our theoretical bounds on synthetic data. Additional graphs can be found in Appendix \ref{sec:additionalexperiments}. For all experiments, we use a block sparse JL distribution to demonstrate that our theoretical upper bounds also empirically generalize to non-uniform sparse JL distributions. 
\subsection{Real-World Datasets}\label{subsec:realworld}
We considered two bag-of-words datasets: the News20 dataset \cite{News20} (based on newsgroup documents), and the Enron email dataset \cite{Dataset} (based on e-mails from the senior management of Enron).\footnote{Note that the News20 dataset is used in \cite{DKT17}, and the Enron dataset is from the same collection as the dataset used in \cite{FKL}, but contains a larger number of documents.} Both datasets were pre-processed with the standard \texttt{tf-idf} preprocessing. In this experiment, we evaluated how well sparse JL preserves the $\ell_2$ norms of the vectors in the dataset. An interesting direction for future work would be to empirically evaluate how well sparse JL preserves other aspects of the geometry of real-world data sets, such as the $\ell_2$ distances between pairs of vectors. 

In our experiment, we estimated the failure probability $\hat{\delta}(s,m, \epsilon)$ for each dataset as follows. Let $D$ be the number of vectors in the dataset, and let $n$ be the dimension ($n = 101631$, $D = 11314$ for News20; $n = 28102$, $D = 39861$ for Enron). We drew a matrix $M \sim \mathcal{A}_{s,m,n}$  from a block sparse JL distribution. Then, we computed $\frac{\norm{Mx}_2}{\norm{x}_2}$ for each vector $x$ in the dataset, and used these values to compute an estimate $\hat{\delta}(s, m, \epsilon) = \frac{\text{ number of vectors $x$ such that } \frac{\norm{Mx}_2}{\norm{x}_2} \not\in 1 \pm \epsilon}{D}$. We ran 100 trials to produce 100 estimates $\hat{\delta}(s, m, \epsilon)$. 

\begin{figure}[!htb]
\centering
\begin{minipage}[b]{.5\textwidth}
  \centering
  \includegraphics[scale=0.45]{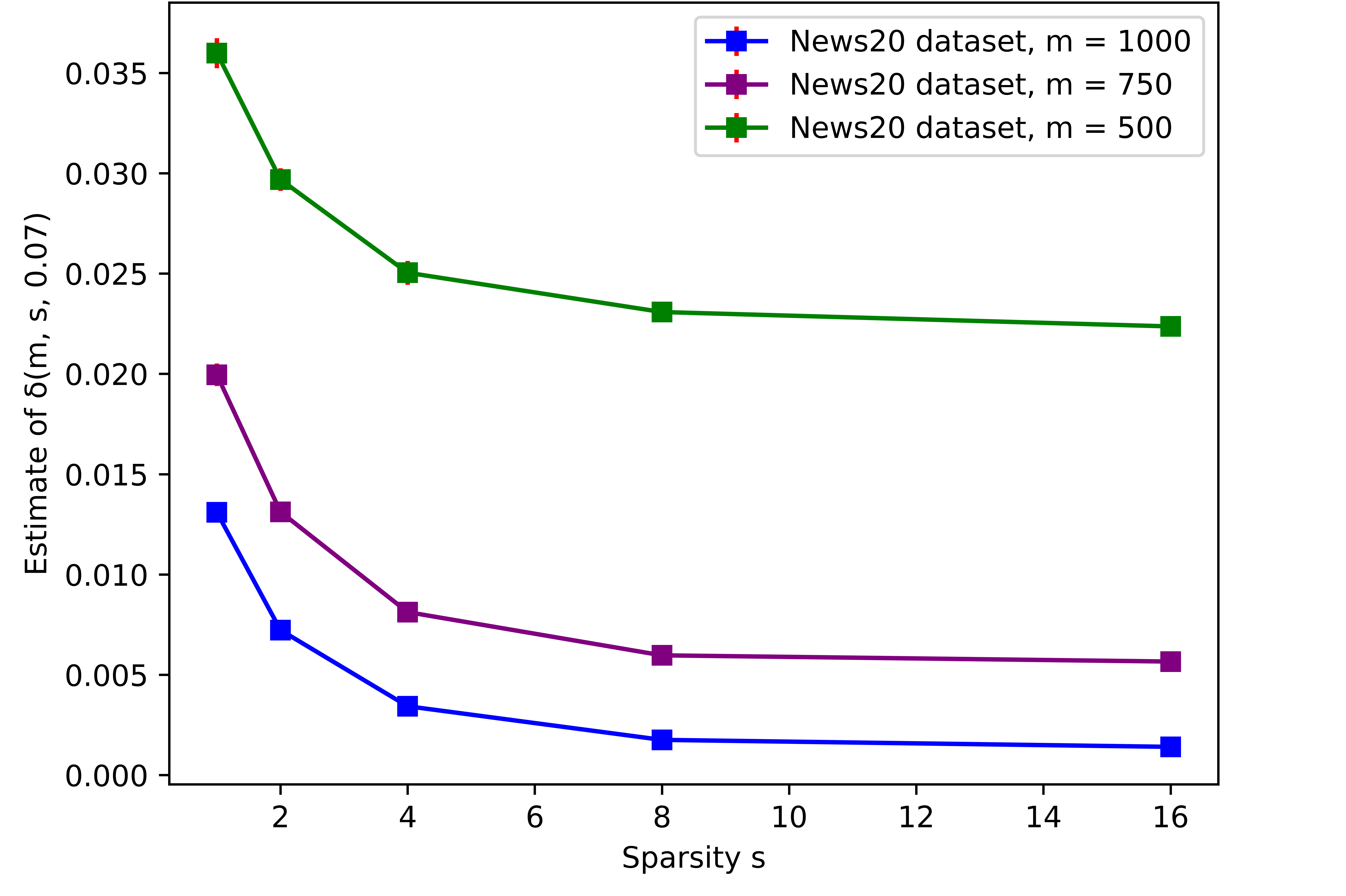}
  \caption{News20: $\hat{\delta}(m, s, 0.07)$ v. $s$}
  \label{fig:news20007}
\end{minipage}%
\begin{minipage}[b]{.5\textwidth}
  \centering
  \includegraphics[scale=0.45]{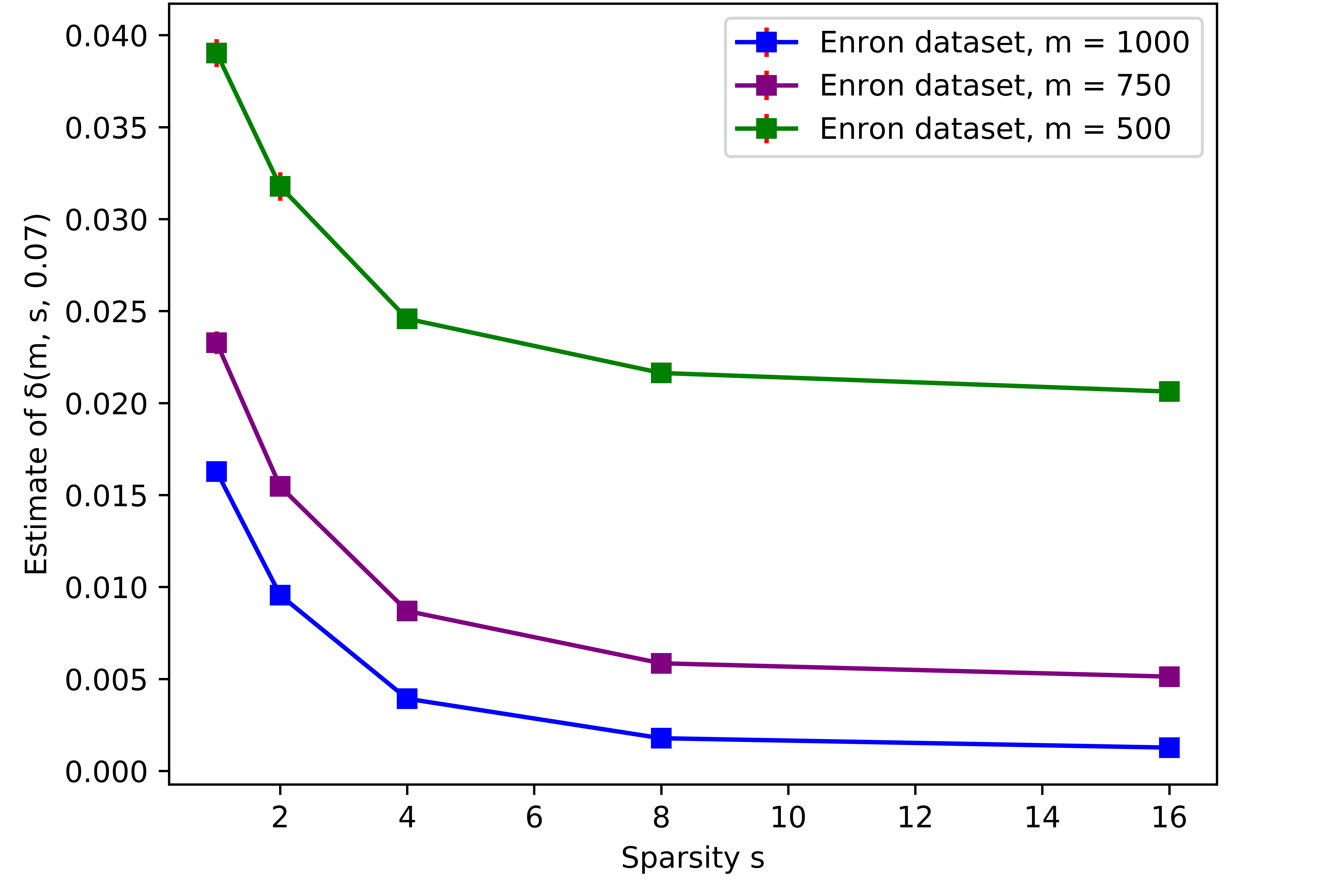}
  \caption{Enron: $\hat{\delta}(m, s, 0.07)$ vs. $s$}
  \label{fig:enron007}
\end{minipage}
% \begin{minipage}[b]{.5\linewidth}
%   \centering
%   \includegraphics[scale=0.48]{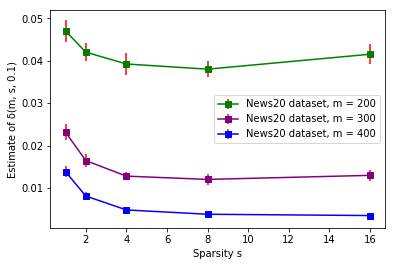}
%   \caption{News20: $\hat{\delta}(m, s, 0.1)$ vs. $s$}
%   \label{fig:news2001}
% \end{minipage}%
% \begin{minipage}[b]{.5\linewidth}
%   \centering
%   \includegraphics[scale=0.48]{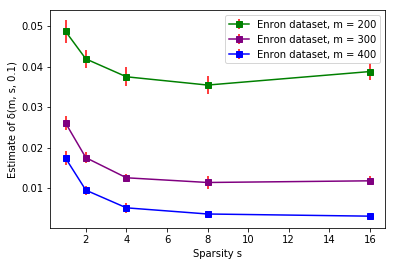}
%   \caption{Enron: $\hat{\delta}(m, s, 0.1)$ vs. $s$}
%   \label{fig:enron01}
% \end{minipage}
\end{figure}

Figure \ref{fig:news20007} and Figure \ref{fig:enron007} show the mean and error bars (3 standard errors of the mean) of $\hat{\delta}(s, m, \epsilon)$ at $\epsilon = 0.07$. We consider $s \in \left\{1, 2, 4, 8, 16\right\}$, and choose $m$ values so that $0.01 \le \hat{\delta}(1,m,\epsilon) \le 0.04$.

% In the supplementary material, we show plots for other $\epsilon$ and $\delta$ values -- the shape looks quite similar. 

All of the plots show that $s \in \left\{2, 4, 8, 16\right\}$ achieves a lower failure probability than $s = 1$, with the differences most pronounced when $m$ is larger. In fact, at $m = 1000$, there is a \textit{factor of four} decrease in $\delta$ between $s = 1$ and $s = 4$, and a \textit{factor of ten} decrease between $s = 1$ and $s = 8, 16$. We note that in plots in the Appendix, there is a slight increase between $s = 8$ and $s = 16$ at some $\epsilon, \delta, m$ values (see Appendix \ref{sec:additionalexperiments} for a discussion of this non-monotonicity in $s$); however $s > 1$ still consistently beats $s = 1$. Thus, these findings demonstrate the potential benefits of using small constants $s > 1$ in sparse JL in practice, which aligns with our theoretical results. 

% Furthermore, Figure \ref{fig:news20007} and Figure \ref{fig:enron007} show a consistent decrease in $\hat{\delta}(s, m, \epsilon)$ as a function of $s$. 

% Lastly, the plots show that sparse JL yields similar $\hat{\delta}(s, m, \epsilon)$ values (and thus behaves similarly) on the two datasets. 
% either delta v. sparsity for a fixed epsilon at many different m values, or for a fixed m at many different epsilon values? (another option is a 3d plot)

\subsection{Synthetic Datasets}\label{subsec:synthetic}
We used synthetic data to illustrate the phase transitions in our bounds on $v(m, \epsilon, \delta, s)$ in Theorem \ref{thm:mainresult} for a block sparse JL distribution. For several choices of $s, m, \epsilon, \delta$, we computed an estimate $\hat{v}(m, \epsilon, \delta, s)$ of $v(m, \epsilon, \delta, s)$  as follows. Our experiment borrowed aspects of the experimental design in \cite{FKL}. Our synthetic data consisted of binary vectors  (i.e. vectors whose entries are in $\left\{0,1\right\}$). The binary vectors were defined by a set $W$ of values exponentially spread between $0.03$ and $1$\footnote{We took $W = \left\{w \mid w^{-2} \in \left\{986, 657, 438, 292, 195, 130, 87, 58, 39, 26, 18, 12, 9, 8, 7, 6, 5, 4, 3, 2, 1\right\}\right\}$.}: for each $w \in W$, we constructed a binary vector $x^w$ where the first $1/w^2$ entries are nonzero, and computed an estimate $\hat{\delta}(s, m, \epsilon, w)$ of the failure probability  of the block sparse JL distribution on the specific vector $x^w$ (i.e. $\mathbb{P}_{A \in \mathcal{A}_{s,m,1/w^2}}[\norm{Ax^w}_2 \not\in (1\pm\epsilon) \norm{x^w}_2]$). We computed each $\hat{\delta}(s, m, \epsilon, w)$ using 100,000 samples from a block sparse JL distribution, as follows. In each sample, we independently drew a matrix $M \sim A_{s,m,1/w^2}$ and computed the ratio $\frac{\norm{Mx^w}_2}{\norm{x^w}_2}$. Then, we took $\hat{\delta}(s, m, \epsilon, w) := (\text{number of samples where } \frac{\norm{Mx^w}_2}{\norm{x^w}_2} \not\in 1 \pm \epsilon)/T$. Finally, we used the estimates $\hat{\delta}(s, m, \epsilon, w)$ to obtain the estimate  $\hat{v}(m, \epsilon, \delta, s) = \max \left\{v \in W \mid \hat{\delta}(s, m, \epsilon, w) < \delta \text{ for all } w \in W \text{ where } w \le v \right\}$.

Why does this procedure estimate $v(m, \epsilon, \delta, s)$? With enough samples, $\hat{\delta}(s, m, \epsilon, w) \rightarrow \mathbb{P}_{A \in \mathcal{A}_{s,m,1/w^2}}[\norm{Ax^w}_2 \not\in (1\pm\epsilon) \norm{x^w}_2]$.\footnote{With 100,000 samples, running our procedure twice yielded the same $\hat{v}(m, \epsilon, \delta, s)$ values both times.} As a result, if $x^w$ is a ``violating'' vector, i.e. $\hat{\delta}(s, m, \epsilon, w) \ge \delta$, then likely $\mathbb{P}_{A \in \mathcal{A}_{s,m,n}}[\norm{Ax^w}_2 \not\in (1\pm\epsilon) \norm{x^w}_2] \ge \delta$, and so $\hat{v}(m, \epsilon, \delta, s) \ge v(m, \epsilon, \delta, s)$. For the other direction, we use that in the proof of Theorem 1.5, we show that asymptotically, if a ``violating'' vector (i.e. $x$ s.t. $\mathbb{P}_{A \in \mathcal{A}_{s,m,n}}[\norm{Ax}_2 \not\in (1\pm\epsilon) \norm{x}_2] \ge \delta$) exists in $S_v$, then there's a ``violating'' vector of the form $x^w$ for some $w \le \Theta(v)$. Thus, the estimate $\hat{v}(m, \epsilon, \delta, s) = \Theta(v(m, \epsilon, \delta, s))$ as $T \rightarrow \infty$ and as precision in $W$ goes to $\infty$.

% could repeat all of the experiments 5 times and then say that the results looked the same 
\begin{figure}[!htb]
\centering
\begin{minipage}[b]{.5\textwidth}
  \centering
  \includegraphics[scale=0.45]{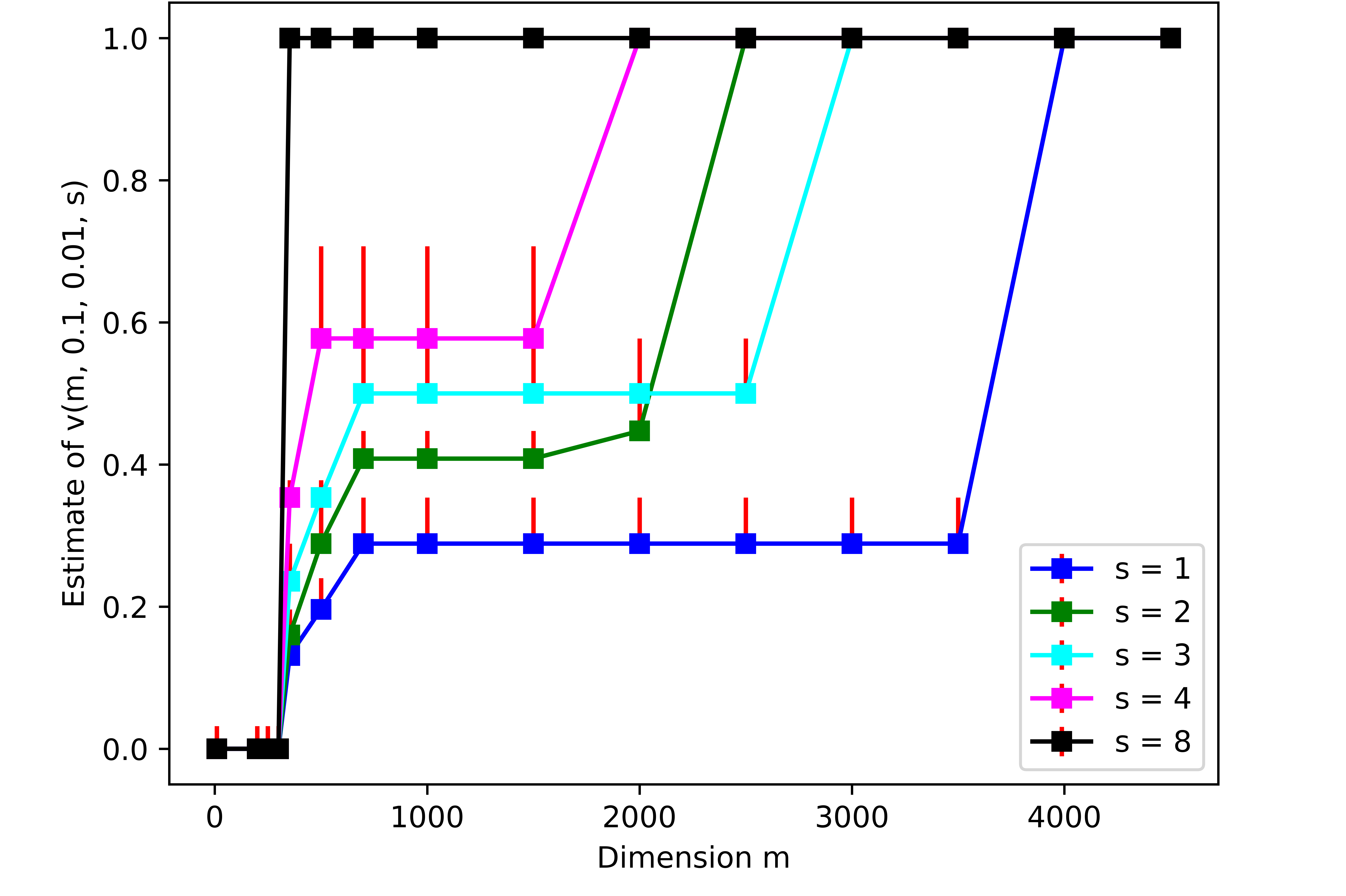}
  \caption{Phase transitions of $\hat{v}(m, 0.1, 0.01,s)$}
  \label{fig:vphase1}
\end{minipage}%
\begin{minipage}[b]{.5\textwidth}
  \centering
    \includegraphics[scale=0.45]{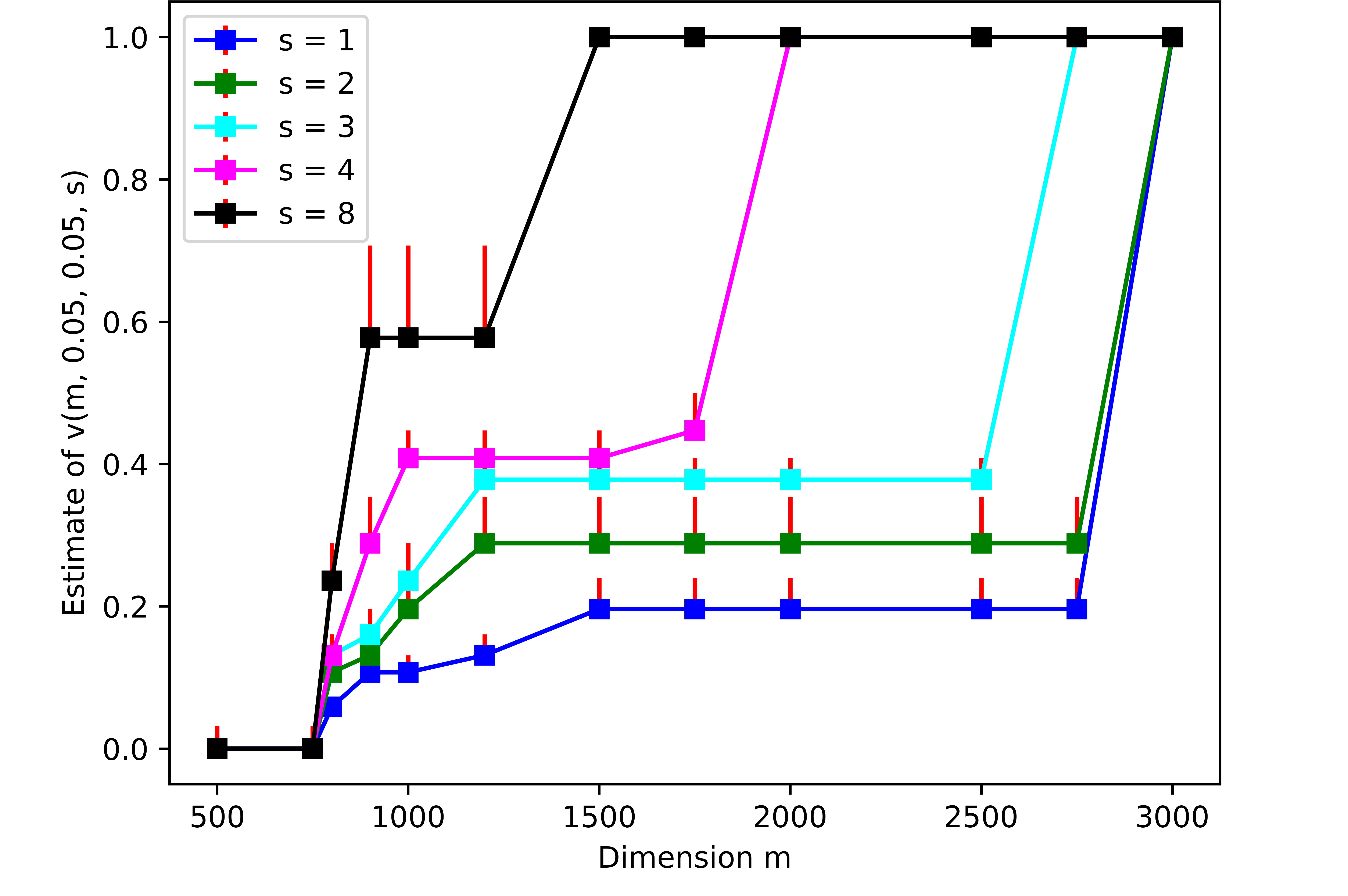}
  \caption{Phase transitions of $\hat{v}(m, 0.05, 0.05,s)$}
  \label{fig:vphase2}
\end{minipage}
\end{figure}

Figure \ref{fig:vphase1} and Figure \ref{fig:vphase2} show $\hat{v}(m, \epsilon, \delta, s)$ as a function of dimension $m$ for $s \in \left\{1, 2, 3, 4, 8\right\}$ for two settings of $\epsilon$ and $\delta$. The error-bars are based on the distance to the next highest $v$ value in $W$. 

Our first observation is that for each set of $s, \epsilon, \delta$ values considered, the curve $\hat{v}(m, \epsilon, \delta, s)$ has ``sharp'' changes as a function of $m$. More specifically, $\hat{v}(m, \epsilon, \delta, s)$ is $0$ at small $m$, then there is a phase transition to a nonzero value, then an increase to a higher value, then an interval where the value appears ``flat'', and lastly a second phase transition to $1$. The first phase transition is shared between $s$ values, but the second phase transition occurs at different dimensions $m$ (but is within a factor of $3$ between $s$ values). Here, the first phase transition likely corresponds to $\Theta(\epsilon^{-2} \ln(1/\delta))$ and the second phase transition likely corresponds to $\min\left(\epsilon^{-2} e^{\Theta(\ln(1/\delta))}, \epsilon^{-2} \ln(1/\delta) e^{\Theta\left(\ln(1/\delta) \epsilon^{-1}/s\right)}\right)$. 

Our second observation is that as $s$ increases, the ``flat'' part occurs at a higher y-coordinate. Here, the increase in the ``flat'' y-coordinate as a function of $s$ corresponds to the $\sqrt{s}$ term in $v(m, \epsilon, \delta, s)$. Technically, according to Theorem \ref{thm:mainresult}, the ``flat'' parts should be increasing in $m$ at a slow rate: the empirical ``flatness'' likely arises since $W$ is a finite set in the experiments.

Our third observation is that $s > 1$ generally outperforms $s = 1$ as Theorem \ref{thm:mainresult} suggests: that is, $s> 1$ generally attains a higher $\hat{v}(m, \epsilon, \delta, s)$ value than $s = 1$. We note at large $m$ values (where $\hat{v}(m, \epsilon, \delta, s)$ is close to $1$), lower $s$ settings sometimes attain a higher $\hat{v}(m, \epsilon, \delta, s)$ than higher $s$ settings (e.g. the second phase transition doesn't quite occur in decreasing order of $s$ in Figure \ref{fig:vphase1}): see Appendix \ref{sec:additionalexperiments} for a discussion of this non-monotonicity in $s$.\footnote{In Appendix \ref{sec:additionalexperiments}, we also show more examples where at large $m$ values, lower $s$ settings attain a higher $\hat{v}(m, \epsilon, \delta, s)$ than higher $s$ settings.} Nonetheless, in practice, it's unlikely to select such a large dimension $m$, since the $\ell_{\infty}$-to-$\ell_2$ guarantees of smaller $m$ are likely sufficient. Hence, a greater sparsity generally leads to a better $\hat{v}(m, \epsilon, \delta, s)$ value, thus aligning with our theoretical findings. 

\bibliographystyle{plain} 
\bibliography{bib.bib}

\newpage 
\appendix

\begin{center}
\Large{\textbf{APPENDIX}}
\end{center}

\normalsize{}

In Appendix \ref{sec:mainresultsstatement}, we prove our corollary regarding dimension-sparsity tradeoffs and discuss some of the subtleties of Theorem \ref{thm:mainresult}. In Appendix \ref{sec:looseHW}, we show that the Hanson-Wright bound is too loose to prove Theorem \ref{thm:mainresult}. In Appendix \ref{sec:usefulmomentboundsproofs}, we state and prove useful moment bounds that we use throughout the analysis. In Appendix \ref{sec:analyzerow}, we prove our moment bounds for $Z_r(x_1, \ldots, x_n)$ in Lemma \ref{rowbound} and Lemma \ref{lowerboundrow}. In Appendix \ref{sec:combinerows}, we prove our moment bounds for $R(x_1, \ldots, x_n)$ in Lemma \ref{upper} and Lemma \ref{lower}. In Appendix \ref{sec:auxupper}, we prove auxiliary lemmas needed in the proof of Lemma \ref{upper}. In Appendix \ref{sec:auxlower}, we prove auxiliary lemmas needed in the proof of Lemma \ref{lower}. In Appendix \ref{sec:functionbounding}, we prove our simplified moment bounds for $R(x_1, \ldots, x_n)$ in Lemma \ref{upperfinal} and Lemma \ref{lowerupperfinal}. In Appendix \ref{sec:additionalexperiments}, we provide additional experimental results on real-world and synthetic datasets as well as additional discussion. 

\section{Discussion of Theoretical Results}\label{sec:mainresultsstatement}

We discuss some of the subtleties of Theorem \ref{thm:mainresult}. When $m \ge \min(2 \epsilon^{-2} e^{p}, \epsilon^{-2} p e^{\Theta(\max(1, p \epsilon^{-1}/s))})$, where $p = \ln(1/\delta)$, we show that $v(m, \epsilon, \delta, s) = 1$, which means that the norm-preserving condition holds on the full space. This generalizes Cohen's bound \cite{Cohen} to a slightly more general family of sparse JL distributions, as we discuss below. When $m \le \Theta(\epsilon^{-2} \ln (1/\delta))$, we show that $v(m, \epsilon, \delta, s) = 0$. For the  remaining regimes, $\sqrt{\epsilon s} \sqrt{\ln(\frac{m \epsilon^2}{p})} /\sqrt{p}$ and
$\sqrt{\epsilon s}\min\left( \ln(\frac{m \epsilon}{p})/p, \sqrt{\ln(\frac{m\epsilon^2}{p})}/\sqrt{p}\right)$, our upper and lower bounds on $v(m, \epsilon, \delta, s)$ match up to constant factors. 

In terms of the boundaries between regimes, we emphasize that in Theorem \ref{thm:mainresult}, the function $f'(m, \epsilon, \delta, s)$ may not be defined for certain intervals between the boundaries of regimes, since there may be different absolute constants in different boundaries. More specifically, these intervals are $C_1 \epsilon^{-2} p \le m \le C_2 \epsilon^{-2}p$, $\epsilon^{-2} e^{C_1p} \le m \le 2 \epsilon^{-2} e^p$, and $s \cdot e^{C_1 \max(1, p\epsilon^{-1}/s)} \le m \le s \cdot e^{C_2 \max(1, p\epsilon^{-1}/s)}$. These gaps arise because the boundaries between the regimes on our upper and lower bounds on $v(m, \epsilon, \delta, s)$ can have different absolute constants, so we don't have precise control on $v(m, \epsilon, \delta, s)$ in these gaps. Nonetheless, the gaps only span a constant factor range on the exponent in the dimension $m$. 

We now state the dimension-sparsity tradeoffs that follow from our bounds:
\begin{corollary}
\label{dimsparsitylower}
Suppose that $\epsilon$ and $\delta$ are sufficiently small and $s \le m/e$. 
If $\mathcal{A}_{s,m,n}$ is any sparse JL distribution, then $v(m, \epsilon, \delta, s) =  1$ when $m \ge \min\left(2 \epsilon^{-2} / \delta, \epsilon^{-2} \ln(1/\delta) e^{\Theta(\max(1, \ln(1/\delta) \epsilon^{-1}/s))} \right)$. If $\mathcal{A}_{s,m,n}$ is a uniform sparse JL distribution, then $v(m, \epsilon, \delta, s) \le 1/2$ when $m \le \min\left(\epsilon^{-2} e^{\Theta(\ln(1/\delta))}, \epsilon^{-2} \ln(1/\delta) e^{\Theta(\max(1, \ln(1/\delta) \epsilon^{-1}/s))} \right)$, apart from a constant-factor interval $C_1 \epsilon^{-2} \ln(1/\delta) \le m \le C_2 \epsilon^{-2} \ln(1/\delta)$ where we do not have a bound on the behavior of sparse JL. 
\end{corollary}
\begin{proof}[Proof of Corollary~$\ref{dimsparsitylower}$]
The first statement follows from the fact the lower bound in Theorem \ref{thm:mainresult} holds for any sparse JL distribution. For the upper bound, we also use Theorem \ref{thm:mainresult}. Let's set $C_v \sqrt{\epsilon s} \frac{\sqrt{\ln(m \epsilon^2}}{\sqrt{p}} = \frac{1}{2}$, where $C_v$ is the implicit constant in the upper bound. This solves to 
$m = \epsilon^{-2} p e^{\frac{C_L p \epsilon^{-1}}{s}}$ for some constant $C_L$ as desired. We also have the condition that $m \le \epsilon^{-2} e^{\Theta(\ln(1/\delta))}$ for this regime to be reached. We can obtain the $\max$ with $1$ on the exponent, by using that $v(m, \epsilon, \delta, s) = 0$ when $m \le \Theta(\epsilon^{-2}\ln(1/\delta))$. To avoid having a gap when $m = s \cdot e^{\Theta(\max(1, \ln(1/\delta) \epsilon^{-1}/s))}$, we implicitly use that our lower bound actually doesn't have a gap between these regimes (though there may be a gap in the boundary between the lower bound and upper bound). Thus, we only have to keep the gap $C_1 \epsilon^{-2} \ln(1/\delta) \le m \le C_2 \epsilon^{-2} \ln(1/\delta)$ where we do not have a lower bound.
\end{proof}
\noindent Notice that the  upper and lower bounds in Corollary~$\ref{dimsparsitylower}$ also match up to constant factors on the exponent in the dimension $m$. 

\section{Hanson-Wright is too loose even for $s = 1$}
\label{sec:looseHW}

Though Cohen, Jayram, and Nelson \cite{NelsonNotes} also view $R(x_1, \ldots, x_n)$ as a quadratic form, we show that their approach is not sufficiently precise for our setting. They upper bound the moments of $R(x_1, \ldots, x_n)$ by the gaussian case through considering:
\[\tilde{R}(x_1, \ldots, x_n) = \frac{1}{s} \sum_{r=1}^m \sum_{i \neq j} \eta_{r,i} \eta_{r,j} g_{r,i} g_{r,j} x_i x_j\] where the $g_{r,i}$ are i.i.d standard gaussians. They use the fact Rademachers are subgaussian to conclude that $\norm{R(x_1, \ldots, x_n)}_q \le \norm{\tilde{R}(x_1, \ldots, x_n)}_q$. In order to obtain upper bounds on $\norm{\tilde{R}(x_1, \ldots, x_n)}_q$, they use the Hanson-Wright bound, a tight bound on moments of gaussian quadratic forms. However, we need different technical tools for two reasons.
\begin{enumerate}
\item First, in order to upper bound $v(m, \epsilon, \delta, s)$, we need to \textit{lower bound} $\norm{|R(x_1, \ldots, x_n)}_q$, and thus cannot simply consider $\norm{\tilde{R}(x_1, \ldots, x_n)}_q$. 
\item Second, even to lower bound $v(m, \epsilon, \delta, s)$, using $\norm{\tilde{R}(x_1, \ldots, x_n)}_q$ as a upper bound for $\norm{R(x_1, \ldots, x_n)}_q$ is not sufficiently strong. Below, we give a counter-example, i.e. a vector $x$, where $\norm{\tilde{R}(x_1, \ldots, x_n)}_q$ is too large to recover a tight lower bound.
\end{enumerate}
Thus, we cannot use the Hanson-Wright bound in this setting, and need to come up with a better bound on $\norm{R(x_1, \ldots, x_n)}_q$ that does not implicitly replace Rademachers by gaussians. The second point is similar in flavor to the conceptual point made in \cite{Me}, where a sign-consistent variant of sparse JL was analyzed using an \textit{upper} bound for Rademacher quadratic forms. However, the bound in \cite{Me} also turns out to be loose in this setting and also can't be used to obtain either a sufficiently tight upper bound or a lower bound for $R(x_1, \ldots, x_n)$.

We now show point (2): that the Hanson-Wright bound is not sufficiently strong to obtain a lower bound $v(m, \epsilon, \delta, s)$. 
We consider $\tilde{R}(x_1, \ldots, x_n) = \frac{1}{s} \sum_{r=1}^m \sum_{i \neq j} \eta_{r,i} \eta_{r,j} g_{r,i} g_{r,j} x_i x_j$, as above, where the $g_{r,i}$ are i.i.d standard gaussians. We consider $p$ equal to $\ln(1/\delta)$ rounded up to the nearest even integer, and we consider a vector of the form $[v, \ldots, v, 0, \ldots, 0]$ where $\frac{1}{v^2}$ is an integer and $v \ge 0$. We show $\norm{\tilde{R}(v, \ldots, v, 0, \ldots, 0)}_p \gtrsim \omega(\epsilon)$ for a certain $v$ value, where we know it to be true that $\norm{R(v, \ldots, v, 0, \ldots, 0)}_p \lesssim \epsilon$. 

Let's consider a vector $[v, \ldots, v, 0, \ldots, 0]$ where $\frac{1}{v^2}$ is an integer and $v \ge 0$. We apply the Hanson-Wright bound (which is tight for gaussians) to obtain:
\begin{align*}
    \norm{\tilde{R}(v, \ldots, v, 0, \ldots, 0)}_p 
    &\gtrsim  pv^2 \norm{\sup_{\norm{x}_2, \norm{y}_2 \le 1} \sum_{r=1}^m \sum_{1 \le i \neq j \le N} \eta_{r,i} \eta_{r,j} x_{r,i} y_{r,j}}_p  \\
    &\geq pv^2 \norm{\sup_{\norm{x}_2, \norm{y}_2 \le 1} \sum_{1 \le i \neq j \le N} \eta_{1,i} \eta_{1,j} x_{i} y_{j}}_p.
\end{align*}
Let $M = \sum_{i=1}^N \eta_{1,i}$. Let $S \subseteq [N]$ be the set of indices where $\eta_{1,i} = 1$. We can set the vector to $x_i = y_i = \frac{1}{\sqrt{M}}$ for all $i \in S$ and $0$ elsewhere. This gives us: 
\[ \norm{\sup_{\norm{x}_2, \norm{y}_2 \le 1} \sum_{1 \le i \neq j \le N} \eta_{1,i} \eta_{1,j} x_{i} y_{j}}_p \geq  \norm{M - 1}_p = \norm{\sum_{i=1}^N \eta_{1,i} - 1}_p \gtrsim \norm{I_{\sum_{i=1}^N \eta_{1,i} \ge 2} \sum_{i=1}^N \eta_{1,i}}_p.  \]
We can expand out this moment to obtain:
\begin{align*}
\mathbb{E}\left[\left(I_{\sum_{i=1}^N \eta_{1,i} \ge 2} \sum_{i=1}^N \eta_{1,i}\right)^p\right] &\geq C^p \sum_{M=2}^p {N \choose M} M^p \left(\frac{s}{m}\right)^M \left(1 - \frac{s}{m}\right)^M \\
&\geq C^p \sum_{M=2}^p \left(\frac{N}{p}\right)^M \left(\frac{p}{M}\right)^M M^p \left(\frac{s}{m}\right)^M \left(1 - \frac{s}{m}\right)^M \\
&= C^p \sum_{M=2}^p \left(\frac{s}{pmv^2}\right)^M M^p \left(\frac{p}{M}\right)^M \left(1 - \frac{s}{m}\right)^M.
\end{align*}
Since $M \le p$, we know that $\left(\frac{p}{M}\right)^M \ge 1$. Moreover, as long as $p \ge \frac{se}{mv^2}$, we know that $\left(1 - \frac{s}{m}\right)^{M/p} \ge \left(1 - \frac{s}{m}\right)^{N/p} \ge \left(1 - \frac{s}{m}\right)^{\frac{m}{s}} \ge 0.3$. Thus we obtain a bound of
\[D^p \sum_{M=2}^p M^p \left(\frac{s}{pmv^2}\right)^M. \]
If $2 \le \frac{p}{\ln(pmv^2/s)} \le p$ (which can be written as $1 \le \ln(pmv^2/s) \le \frac{p}{2}$), then we know that:
\[pv^2 \norm{\sup_{\norm{x}_2, \norm{y}_2 \le 1} \sum_{1 \le i \neq j \le N} \eta_{1,i} \eta_{1,j} x_{i} y_{j}}_p \gtrsim \frac{p^2v^2}{\ln(pmv^2/s)}.\] 
We show that when $s = 1$, the bound $v = \sqrt{\epsilon} \frac{\ln(\frac{m \epsilon}{p})}{p}$ will produce $\norm{R(v, \ldots, v, 0, \ldots, 0)}_p \gtrsim \omega(\epsilon)$. At this $v$ value, we know that: 
\[1 \le \frac{pmv^2}{e} = \ln^2(\frac{m \epsilon}{p}) \frac{m \epsilon}{pe}. \] If we have that $\ln(\frac{m \epsilon}{p}) \le \sqrt{p}$, then we know that $v \le \frac{1}{\sqrt{p}}$ and $\ln(pmv^2) \le \frac{p}{2}$. However, the bound 
\[\frac{p^2v^2}{\ln(pmv^2)} \gtrsim\frac{\epsilon \ln^2(m \epsilon/p)}{\ln(\frac{m \epsilon}{p})} \geq \epsilon \ln(\frac{m \epsilon}{p}) = \omega(\epsilon). \]   

\section{Useful Moment Bounds}
\label{sec:usefulmomentboundsproofs}
The key quadratic form bound for Rademachers that we use is:
\begin{lemma}
\label{usefulquadformbound}
Let $T$ be an even integer, $\left\{\sigma_i\right\}_{1 \le i \le n}$ be independent Rademachers, and $(Y_{i,j})_{1 \le i, j \le n}$ be a $n \times n$ symmetric, nonnegative random matrix with zero diagonal (i.e. $Y_{i,i} = 0$) such that $\left\{Y_{i,j}\right\}_{1 \le i,j \le n}$ is independent from $\left\{\sigma_i\right\}_{1 \le i \le n}$. If $W_i = \sqrt{\sum_{1 \le j \le n} Y_{i,j}^2}$, then: 
\[\norm{\sum_{1 \le i, j \le n} Y_{i,j} \sigma_i \sigma_j}_T \simeq \norm{\sup_{\norm{b}_2, \norm{c}_2 \le \sqrt{T}, \norm{b}_{\infty}, \norm{c}_{\infty} \le 1} \sum_{1 \le i, j \le n} Y_{i,j} b_i c_j}_T  + \norm{\sum_{1 \le i \le T} W_{(i)} + \sqrt{T} \sqrt{\sum_{T < i \le n} W_{(i)}^2}}_T \] where $W_{(1)} \ge W_{(2)} \ge \ldots \ge \ldots W_{(n)}$ is a permutation of $W_1, \ldots, W_n$. 
\end{lemma}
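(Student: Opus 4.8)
The plan is to obtain this statement by conditioning on the random matrix $(Y_{i,j})$ and invoking Lata\l a's moment bound for Rademacher quadratic forms with \emph{scalar} coefficients \cite{LatalaChaos}. Recall that this bound says that for a fixed symmetric nonnegative $n\times n$ matrix $(a_{i,j})$ with $a_{i,i}=0$ and an even integer $T\ge 2$, writing $w_i=\sqrt{\sum_j a_{i,j}^2}$ and $w_{(1)}\ge\cdots\ge w_{(n)}$ for the decreasing rearrangement,
\[
\Bigl\|\sum_{i,j}a_{i,j}\sigma_i\sigma_j\Bigr\|_T \;\simeq\; \sup_{\|b\|_2,\|c\|_2\le\sqrt{T},\ \|b\|_\infty,\|c\|_\infty\le 1}\ \sum_{i,j}a_{i,j}b_ic_j \;+\; \Bigl(\sum_{i\le T}w_{(i)} + \sqrt{T}\sqrt{\sum_{T<i\le n}w_{(i)}^2}\Bigr),
\]
with \emph{universal} implied constants, independent of $(a_{i,j})$, of $n$, and of $T$. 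If Lata\l a's reference states the estimate in a slightly different parametrization, a first small step is to rewrite it into this form; for a nonnegative matrix the supremum over $\|b\|_\infty,\|c\|_\infty\le 1$ may be taken over $b,c\ge 0$, which is what makes the formulations match.

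Next I would fix a realization of $(Y_{i,j})$. Since $\{Y_{i,j}\}$ is independent of $\{\sigma_i\}$, conditionally on $(Y_{i,j})$ the quantity $\sum_{i,j}Y_{i,j}\sigma_i\sigma_j$ is exactly a scalar-coefficient Rademacher quadratic form of the type handled by Lata\l a's bound, and its hypotheses (symmetry, zero diagonal, nonnegativity, $T$ even) hold by assumption. Hence, for every realization of $(Y_{i,j})$,
\[
\Bigl(\mathbb{E}_\sigma\Bigl|\sum_{i,j}Y_{i,j}\sigma_i\sigma_j\Bigr|^T\Bigr)^{1/T} \;\simeq\; S(Y) + L(Y),
\]
where $S(Y)=\sup_{\|b\|_2,\|c\|_2\le\sqrt{T},\ \|b\|_\infty,\|c\|_\infty\le 1}\sum_{i,j}Y_{i,j}b_ic_j$ and $L(Y)=\sum_{i\le T}W_{(i)}+\sqrt{T}\sqrt{\sum_{T<i\le n}W_{(i)}^2}$ are nonnegative functions of $(Y_{i,j})$, and the implied constants are the universal ones from Lata\l a's bound, in particular not depending on the realization.

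The final step is to pass from this conditional estimate to the unconditional one by taking $T$-th moments over $(Y_{i,j})$. By the tower property and Tonelli's theorem, $\|\sum_{i,j}Y_{i,j}\sigma_i\sigma_j\|_T$ equals the $L_T(Y)$-norm of the conditional $L_T(\sigma)$-norm, i.e.\ $\bigl\|(\mathbb{E}_\sigma|\cdot|^T)^{1/T}\bigr\|_T$. The map $X\mapsto\|X\|_T$ on nonnegative random variables is monotone and positively homogeneous, so applying it to both sides of the pointwise relation $(\mathbb{E}_\sigma|\cdot|^T)^{1/T}\simeq S(Y)+L(Y)$ yields $\|\sum_{i,j}Y_{i,j}\sigma_i\sigma_j\|_T\simeq\|S(Y)+L(Y)\|_T$. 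Finally, since $S(Y),L(Y)\ge 0$, the triangle inequality (for the upper bound) together with $\|S+L\|_T\ge\max(\|S\|_T,\|L\|_T)\ge\tfrac12(\|S\|_T+\|L\|_T)$ (for the lower bound) give $\|S(Y)+L(Y)\|_T\simeq\|S(Y)\|_T+\|L(Y)\|_T$, which is precisely the right-hand side of the lemma.

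The argument has no deep obstacle: the points requiring care are (i) quoting Lata\l a's estimate in exactly the Rademacher form above and confirming that its equivalence constants are absolute, so that they survive the outer $L_T(Y)$-norm; and (ii) checking that all of Lata\l a's hypotheses hold conditionally on $(Y_{i,j})$ — in particular that restricting the supremum to $\|b\|_\infty,\|c\|_\infty\le 1$ (rather than signed test vectors) is legitimate because $(Y_{i,j})$ is nonnegative. I expect (i) — locating and matching the precise normalization of Lata\l a's Rademacher chaos bound — to be the main bookkeeping step, with the rest being the routine conditioning-and-Fubini argument sketched above.
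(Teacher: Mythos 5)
Your proposal is correct and follows essentially the same route as the paper's own proof: condition on the realization of $(Y_{i,j})$, apply Lata{\l}a's scalar-coefficient Rademacher chaos bound (the paper's Lemma~\ref{quadformbound}), pass to the unconditional $T$-th moment via the tower property, and then split the norm of the sum of the two nonnegative terms into the sum of the two norms using the triangle inequality in one direction and nonnegativity (losing only a factor of $2$) in the other. The only cosmetic difference is that the paper states Lata{\l}a's bound already in the needed parametrization, so your bookkeeping concern about matching normalizations does not arise.
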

We derive Lemma \ref{usefulquadformbound} from Lata{\l}a's bound on Rademacher quadratic forms \cite{LatalaChaos}. In fact, Lata{\l}a shows moment bounds for much more general quadratic forms, but for the application to JL, we only need the following bound in the special case of Rademachers: 
\begin{lemma}[\cite{LatalaChaos}]
\label{quadformbound}
Let $T$ be an even natural number. Let $\sigma_1, \ldots, \sigma_n$ be independent Rademachers and let $(a_{i,j})$ a symmetric matrix with zero diagonal. Then:
\[\norm{\sum_{1 \le i, j\le n} a_{i,j} \sigma_i \sigma_j}_T \simeq \left(\sup_{\norm{b}_2, \norm{c}_2 \le \sqrt{T}, \norm{b}_{\infty}, \norm{c}_{\infty} \le 1} \sum_{1 \le i, j \le n} a_{i,j} b_i c_j\right)  + \sum_{1 \le i \le T} A_{(i)} + \sqrt{T} \sqrt{\sum_{T < i \le n} (A_{(i)})^2} \] where $A_i = \sqrt{\sum_{1 \le j \le n} a_{i,j}^2}$ and $A_{(1)} \ge A_{(2)} \ldots \ge \ldots A_{(n)}$ is a permutation of $A_1, \ldots, A_n$.  
\end{lemma}
To prove Lemma~$\ref{usefulquadformbound}$, we apply Lemma~$\ref{quadformbound}$ to the case where the $a_{i,j}$ are themselves random variables:
\begin{proof}[Proof of Lemma~$\ref{usefulquadformbound}$]
Let $Q = \sum_{1 \le i \neq j \le n} Y_{i,j} \sigma_i \sigma_j$. Applying Lemma~$\ref{quadformbound}$, we have that:
\begin{align*}
\left(\mathbb{E}_{Y, \sigma} [Q^T]\right)^{1/T} &=\ \left(\mathbb{E}_Y \mathbb{E}_{\sigma} [Q^T]\right)^{1/T} \\
&= \left(\mathbb{E}_Y \left[\norm{\sum_{1 \le i \neq j \le n} Y_{i,j} \sigma_i \sigma_j}^T_T\right]\right)^{1/T} \\
&\simeq \norm{\sup_{\norm{b}_2, \norm{c}_2 \le \sqrt{T}, \norm{b}_{\infty}, \norm{c}_{\infty} \le 1} \sum_{1 \le i \neq j \le n} Y_{i,j} b_i c_j  +\sum_{1 \le i \le T} W_{(i)} + \sqrt{T} \sqrt{\sum_{T < i \le n} W_{(i)}^2}}_T \\
&\simeq \norm{\sup_{\norm{b}_2, \norm{c}_2 \le \sqrt{T}, \norm{b}_{\infty}, \norm{c}_{\infty} \le 1} \sum_{1 \le i \neq j \le n} Y_{i,j} b_i c_j}_T  + \norm{\sum_{1 \le i \le T} W_{(i)} + \sqrt{T} \sqrt{\sum_{T < i \le n} W_{(i)}^2}}_T
\end{align*}
where the last line follows from the fact that the the $Y_{i,j}$ are nonnegative, so each term is nonnegative, so the triangle inequality results in at most a factor of $2$ of gain. 
\end{proof}

Now, we consider linear forms of symmetric random variables. Theoretically, moments of these forms can be derived from Theorem 2 in \cite{LatalaMoments} (a tight bound on moments of weighted sums of symmetric random variables). However, reducing the tight bound to the form that we want would require some simplifications. Instead, we give a direct proof of a weaker bound that is sufficiently tight for our setting.
\begin{proposition}
\label{simpler}
Suppose that $T \ge 1$ is an integer. Suppose that $Y_1, Y_2, \ldots, Y_n$ are i.i.d symmetric random variables and suppose that $x = [x_1, \ldots, x_n]$ satisfies $\norm{x}_2 \le 1$ and $\norm{x}_{\infty} \le v$. Then, we have that 
\[\norm{\sum_{i} Y_i x_i}_{2T} \lesssim v \left(\sup_{1 \le t \le T} \frac{T}{t} \left(\frac{1}{Tv^2}\right)^{\frac{1}{2t}} \norm{Y_i}_{2t}\right)  .\] 
\end{proposition}
\begin{proof}[Proof of Proposition~$\ref{simpler}$]
Let $k = 2 v \left(\sup_{1 \le t \le T} \frac{T}{t} \left(\frac{1}{Tv^2}\right)^{1/(2t)} \norm{Y_i}_{2t}\right)  $. Observe that 
\begin{align*}
 \mathbb{E}[\left(\frac{\sum_{i} Y_i x_i}{k^2}\right)^{2T}]
&= \sum_{d_1 + d_2 + \ldots + d_n = T, d_i \le T} \frac{2T!}{2d_1!\ldots 2d_n!} \prod_{i=1}^n  \mathbb{E}\left[\left(\frac{Y_ix_i}{k}\right)^{2d_i}\right]\\
&\le C^T \sum_{d_1 + d_2 + \ldots + d_n = T, d_i \le T} \frac{(2T)^{2T}}{(2d_1)^{2d_1}\ldots (2d_n)^{2d_n}} \prod_{i=1}^n  \mathbb{E}\left[\left(\frac{Y_ix_i}{k}\right)^{2d_i}\right]\\
&\le C^T \prod_{i=1}^n \sum_{0 \le d_i \le T} \frac{(2T)^{2d_i}}{(2d_i)^{2d_i}} \mathbb{E}\left[\left(\frac{Y_ix_i}{k}\right)^{2d_i}\right]\\
&= C^T \prod_{i=1}^n \left(1 +\sum_{1 \le d_i \le T}  \left(\frac{T x_i \norm{Y_i}_{2d_i} v}{vd_ik}\right)^{2d_i}\right) 
\end{align*}
Now, we use the fact that $|x_i| \le v$ and the condition on $k$ to obtain that this is bounded by 
\[C^T \prod_{i=1}^n \left(1 + \frac{x_i^2}{v^2} \sum_{1 \le d_i \le T}  \left(\frac{T v \norm{Y_i}_{2d_i}}{d_i k}\right)^{2d_i}\right) \le C^T \prod_{i=1}^n \left(1 + Tx_i^2 \right) \le C^T \prod_{i=1}^n e^{Tx_i^2} \le C^T e^T.\]
\end{proof}

We now bound moments of squares of linear forms with a zero diagonal, i.e. $\sum_{i \neq j} Y_i Y_j  x_i x_j$. This structure of random variable theoretically falls under the scope of Lemma \ref{usefulquadformbound}. However, as mentioned in Section 2.1, the first term of \ref{usefulquadformbound}, which is an operator-norm-like term for an asymmetric random matrix in this setting, becomes intractable to manage. We give an alternate (weaker) upper bound that is both tractable to analyze and sufficiently tight for our setting. Our proof of this bound is similar to our proof of Proposition \ref{simpler} presented above. Since random variables with a zero diagonal are common in the JL literature \cite{KN12, Original, NN13}, we believe this moment bound could be of broader use.
\begin{lemma}
\label{generalsymmetric}
Suppose that $Y_1, Y_2, \ldots, Y_n$ are i.i.d symmetric random variables and suppose that $x = [x_1, \ldots, x_n]$ satisfies $\norm{x}_2 = 1$ and $\norm{x}_{\infty} \le v$. Let $T$ be an even natural number. Then, we have that 
\[\norm{\sum_{i \neq j} Y_i Y_j  x_i x_j}_T \lesssim v^2 \left(\sup_{1 \le t \le T/2} \frac{T^2}{t^2} \left(\frac{1}{Tv^2}\right)^{1/t} \norm{Y_i}^2_{2t}\right)  .\] 
\end{lemma}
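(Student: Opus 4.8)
The plan is to exploit the identity
\[
\sum_{i\ne j}Y_iY_jx_ix_j \;=\; \Bigl(\sum_i x_iY_i\Bigr)^2 - \sum_i x_i^2Y_i^2 ,
\]
together with a decoupling step, which turns this zero-diagonal quadratic form into a product of two \emph{independent} linear forms plus a lower-order diagonal correction. Each of the two resulting pieces can then be handled by a Lata\l{}a-type moment bound for linear forms, so that the operator-norm-like term of Lemma~\ref{usefulquadformbound} never has to be analyzed.

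\textbf{Step 1 (decoupling and the split).} Since the $Y_i$ are independent and symmetric and the coefficient array $(x_ix_j)_{i\ne j}$ has zero diagonal, the standard decoupling inequality for quadratic chaos gives, with a universal constant, $\norm{\sum_{i\ne j}Y_iY_jx_ix_j}_T \lesssim \norm{\sum_{i\ne j}Y_iY_j'x_ix_j}_T$, where $(Y_i')$ is an independent copy of $(Y_i)$. Writing $\sum_{i\ne j}Y_iY_j'x_ix_j = \bigl(\sum_i x_iY_i\bigr)\bigl(\sum_j x_jY_j'\bigr) - \sum_i x_i^2Y_iY_i'$, using independence of the two copies, and applying the triangle inequality yields
\[
\norm{\sum_{i\ne j}Y_iY_jx_ix_j}_T \;\lesssim\; \norm{\sum_i x_iY_i}_T^2 \;+\; \norm{\sum_i x_i^2Y_iY_i'}_T .
\]
The value of this split is precisely that it replaces the vanilla estimate $\norm{\sum_i x_iY_i}_{2T}^2$ — which both doubles the moment order and discards the diagonal term — by $\norm{\sum_i x_iY_i}_T^2$ together with a genuinely smaller correction.

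\textbf{Step 2 (bounding the two terms).} For the first term I would invoke Lata\l{}a's moment bound for linear forms of i.i.d.\ symmetric variables (stated in \Cref{sec:analyzerowupper}). The hypotheses $\norm{x}_2=1$ and $\norm{x}_\infty\le v$ give $\bigl(\sum_i|x_i|^s\bigr)^{1/s}\le v^{1-2/s}$ for every $s\ge 2$ and force $\norm{x}_0\ge v^{-2}$; feeding these into Lata\l{}a's bound shows that the coefficient vector with $v^{-2}$ equal entries is, up to universal constants, extremal, so that $\norm{\sum_i x_iY_i}_T \lesssim \norm{v\sum_{i=1}^{1/v^2}Y_i}_T \lesssim v\sup_{1\le t\le T/2}\frac{T}{t}(Tv^2)^{-1/(2t)}\norm{Y_i}_{2t}$. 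Squaring, and using $\sup_t g(t)^2=(\sup_t g(t))^2$ for $g\ge 0$, gives exactly the claimed bound for this term. For the second term, $\sum_i x_i^2Y_iY_i'$ is a sum of the i.i.d.\ symmetric variables $Y_iY_i'$ weighted by the nonnegative numbers $x_i^2$, which sum to $1$ and are bounded by $v^2$; since $\sum_i(x_i^2)^s\le v^{2s-2}$, the majorizing weight vector $(v^2,\dots,v^2,0,\dots)$ with $v^{-2}$ entries is again extremal, so Lata\l{}a's bound (with $\norm{Y_iY_i'}_s=\norm{Y_i}_s^2$) gives $\norm{\sum_i x_i^2Y_iY_i'}_T \lesssim v^2\norm{\sum_{i=1}^{1/v^2}Y_iY_i'}_T \lesssim v^2\sup_{1\le t\le T/2}\frac{T}{t}(Tv^2)^{-1/(2t)}\norm{Y_i}_{2t}^2$. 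A short calculation showing $\frac{T}{t}(Tv^2)^{-1/(2t)}\gtrsim 1$ throughout the relevant range of $t$ then bounds this by the claimed $v^2\sup_{1\le t\le T/2}\frac{T^2}{t^2}(Tv^2)^{-1/t}\norm{Y_i}_{2t}^2$, and combining the two bounds finishes the proof.

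\textbf{Main obstacle.} The delicate part is Step 2's reduction inside Lata\l{}a's bound. One must verify that spreading out the mass of $x$ can only decrease the relevant moments, so that the $v^{-2}$-sparse vector is extremal up to universal constants; and, more importantly, one must check that Lata\l{}a's supremum — whose range of exponents $s\in[\max(2,Tv^2),T]$ and whose ``effective support'' factor both depend on $\norm{x}_0$ — collapses, after applying $\sum_i|x_i|^s\le v^{s-2}$, to exactly the normalized factor $(Tv^2)^{-1/(2t)}$ appearing in the statement, rather than to a weaker $(v^2)^{-1/(2t)}$-type factor; getting this exponent right is precisely what separates the sharp bound from the vanilla one, and it is the reason the result cannot be deduced by crudely squaring a linear-form estimate. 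The remaining bookkeeping — extending Lata\l{}a's range of exponents to the stated $t\in[1,T/2]$, reconciling the even-integer normalization, and keeping all constants universal — is routine. By contrast, the decoupling inequality and the triangle split in Step 1 are entirely standard.
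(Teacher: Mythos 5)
Your overall plan is sound and genuinely different from the paper's proof: the paper establishes Lemma~\ref{generalsymmetric} by directly expanding the $T$-th moment, using the zero-diagonal structure to cap the exponent of each $Y_i$ at $T$ (the constraint $d_i \le T/2$ in the multinomial expansion), which is where the $T/2$ endpoint of the supremum comes from; your decoupling argument explains the same endpoint more conceptually, since $\norm{(\sum_i x_iY_i)(\sum_j x_jY_j')}_T = \norm{\sum_i x_iY_i}_T^2$ only requires the $T$-th (not $2T$-th) moment of the linear form, and the de la Pe\~na--Montgomery-Smith decoupling inequality for order-two tetrahedral forms does hold with a universal constant independent of $T$, so Step 1 is fine. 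The identity, the triangle inequality, the factorization by independence, and the final absorption of the diagonal term (your observation that $\frac{T}{t}(Tv^2)^{-1/(2t)} \gtrsim 1$ on $1 \le t \le T/2$, equivalently $(Tv^2)^{1/(2t)} \le T^{1/(2t)} \le e\,T/t$) are all correct.

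The genuine flaw is the mechanism you propose in Step 2: the claim that the flat vector with $v^{-2}$ equal entries is extremal, i.e.\ that $\norm{\sum_i x_iY_i}_T \lesssim \norm{v\sum_{i=1}^{1/v^2}Y_i}_T$ for all admissible $x$, is false in general. Take $Y_i$ Rademacher, $T \ge v^{-2}$, and $x$ uniform over $T$ coordinates with entries $T^{-1/2} \le v$: then $\norm{\sum_i x_iY_i}_T \simeq \sqrt{T}$, while $\norm{v\sum_{i\le 1/v^2}Y_i}_T \simeq 1/v \ll \sqrt{T}$, so spreading mass beyond $v^{-2}$ coordinates can strictly increase high moments, and the same issue afflicts your flat-vector majorization for the diagonal term (whose odd-to-even moment conversion inside Lemma~\ref{sumiid} also loses an unbounded factor $(N/T)^{1/(t(t+1))}$ if done naively). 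Fortunately the detour is unnecessary: Proposition~\ref{simpler}, the very bound you cite from \Cref{sec:analyzerowupper}, already holds for an arbitrary weight vector, so applying it with parameter $T/2$ to the weights $x_i$ gives $\norm{\sum_i x_iY_i}_T \lesssim v\sup_{1\le t\le T/2}\frac{T}{t}(\frac{1}{Tv^2})^{1/(2t)}\norm{Y_i}_{2t}$ directly, and applying it to the weights $x_i^2$ (which satisfy $\norm{x^2}_2 \le v \le 1$ and $\norm{x^2}_\infty \le v^2$, with $\norm{Y_iY_i'}_{2t} = \norm{Y_i}_{2t}^2$) gives the diagonal term with factor $(\frac{1}{Tv^4})^{1/(2t)}$, which is absorbed into the stated right-hand side by the same calculation $(Tv^2)^{1/(2t)} \le e\,T/t$. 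With that repair your argument is a complete and correct alternative proof of Lemma~\ref{generalsymmetric}; it buys a modular explanation of the $T/2$ endpoint and avoids both the operator-norm term of Lemma~\ref{usefulquadformbound} and the paper's hands-on combinatorial expansion, at the cost of invoking decoupling and reusing the linear-form estimate whose proof is essentially the same computation.
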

\allowdisplaybreaks
\begin{proof}[Proof of Lemma \ref{generalsymmetric}]
Let $k =2  v \left(\sup_{1 \le t \le T/2} \frac{T}{t} \left(\frac{1}{Tv^2}\right)^{1/(2t)} \norm{Y_i}_{2t}\right)  $. 
Observe that 
\begin{align*}
 \mathbb{E}\left[\left(\frac{\sum_{i \neq j} Y_i Y_j x_i x_j}{k^2}\right)^T\right]
&\le \sum_{d_1 + d_2 + \ldots + d_n = T, d_i \le T/2} \frac{2T!}{2d_1!\ldots 2d_n!} \prod_{i=1}^n  \mathbb{E}\left[\left(\frac{Y_ix_i}{k}\right)^{2d_i}\right]\\
&\le C^T \sum_{d_1 + d_2 + \ldots + d_n = T, d_i \le T/2} \frac{(2T)^{2T}}{(2d_1)^{2d_1}\ldots (2d_n)^{2d_n}} \prod_{i=1}^n  \mathbb{E}\left[\left(\frac{Y_ix_i}{k}\right)^{2d_i}\right]\\
&\le C^T \prod_{i=1}^n \sum_{0 \le d_i \le T/2} \frac{(2T)^{2d_i}}{(2d_i)^{2d_i}} \mathbb{E}\left[\left(\frac{Y_ix_i}{k}\right)^{2d_i}\right]\\
&= C^T \prod_{i=1}^n \left(1 +\sum_{1 \le d_i \le T/2}  \left(\frac{T x_i \norm{Y_i}_{2d_i} v}{vd_ik}\right)^{2d_i}\right) 
\end{align*}
Now, we use the fact that $|x_i| \le v$ and the condition on $k$ to obtain that this is bounded by 
\[C^T \prod_{i=1}^n \left(1 + \frac{x_i^2}{v^2} \sum_{1 \le d_i \le T/2}  \left(\frac{T v \norm{Y_i}_{2d_i}}{d_i k}\right)^{2d_i}\right) \le C^T \prod_{i=1}^n \left(1 + Tx_i^2 \right) \le C^T \prod_{i=1}^n e^{Tx_i^2} \le C^T e^T.\]
\end{proof}

Lata{\l}a \cite{LatalaMoments} gives the following nice bound on sums of i.i.d symmetric random variables that we use for combining bounds on rows $Z_r(x_1, \ldots, x_n)$ in Lemma \ref{upper}. 
\begin{lemma}[\cite{LatalaMoments}]
\label{sumiid}
Suppose that $q$ is an even natural number. Suppose that $Y_1, \ldots, Y_n$ are i.i.d symmetric random variables. Then:
\[\norm{\sum_{i=1}^n Y_i}_{q} \lesssim \sup_{2 \le T \le q} \frac{q}{T} \left(\frac{n}{q}\right)^{1/T} \norm{Y_i}_{T}. \]
\end{lemma}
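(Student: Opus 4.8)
\emph{Proof proposal.} The plan is to establish the $\lesssim$ bound directly by the moment method, using that $q$ is even and the $Y_i$ are symmetric, in the spirit of the proof of Lemma~\ref{generalsymmetric}. First I would expand $\mathbb{E}[(\sum_{i=1}^n Y_i)^q]$ multinomially; by independence it is a weighted sum of monomials $\prod_i \mathbb{E}[Y_i^{d_i}]$ over $d_1+\cdots+d_n = q$, and symmetry kills every monomial with an odd exponent. Writing the surviving exponents as $d_i = 2e_i$ with $\sum_i e_i = q/2$ and using that the $Y_i$ are identically distributed, this gives
\[\mathbb{E}\Big[\Big(\sum_{i=1}^n Y_i\Big)^q\Big] = \sum_{\substack{e_1,\dots,e_n\ge 0\\ e_1+\cdots+e_n = q/2}} \frac{q!}{\prod_i (2e_i)!}\,\prod_i \norm{Y_i}_{2e_i}^{2e_i},\]
with the convention $\norm{Y_i}_0^0 = 1$. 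I would then group the terms by the support size $k = \#\{i : e_i>0\}$, which runs over $1 \le k \le \min(n, q/2)$: a term of support size $k$ is determined by a choice of $k$ active coordinates ($\binom{n}{k}$ of them) and a composition of $q/2$ into $k$ positive parts ($\binom{q/2-1}{k-1}$ of them).

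For a single term of support size $k$, Stirling's bounds $q!\le q^q$, $(2e_i)!\ge (2e_i/e)^{2e_i}$, together with $\prod_{i : e_i>0} q^{2e_i} = q^q$, yield
\[\frac{q!}{\prod_i (2e_i)!}\,\prod_i \norm{Y_i}_{2e_i}^{2e_i} \le e^q \prod_{i:e_i>0}\Big(\frac{q\,\norm{Y_i}_{2e_i}}{2e_i}\Big)^{2e_i}.\]
The move is then to match each factor against the normalization in the statement. Put $A := \sup_{1\le e\le q/2}\frac{q}{2e}(n/q)^{1/(2e)}\norm{Y_i}_{2e}$; this is at most the right-hand side of the lemma, since it is a supremum over the even integers $T\in[2,q]$. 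Then
\[\Big(\frac{q\,\norm{Y_i}_{2e}}{2e}\Big)^{2e} = \frac{q}{n}\Big(\frac{q}{2e}\Big(\frac{n}{q}\Big)^{1/(2e)}\norm{Y_i}_{2e}\Big)^{2e} \le \frac{q}{n}\,A^{2e},\]
so a term of support size $k$ contributes at most $e^q (q/n)^k A^q$. Summing over the $\binom{n}{k}\binom{q/2-1}{k-1}$ terms of support size $k$ and using $\binom{n}{k}\le n^k/k!$ and $\binom{q/2-1}{k-1}\le (q/2)^k/k!$, the coordinate count $\binom{n}{k}$ cancels against the $(q/n)^k$, and one is left with
\[\mathbb{E}\Big[\Big(\sum_{i=1}^n Y_i\Big)^q\Big] \le e^q A^q \sum_{k\ge 0}\frac{(q^2/2)^k}{(k!)^2} \le e^{(1+\sqrt 2)q}A^q,\]
where the last inequality uses $(k!)^2\ge (2k)!/4^k$ and $\sum_k (2q^2)^k/(2k)! = \cosh(\sqrt 2\,q)$. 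Taking $q$-th roots gives $\norm{\sum_i Y_i}_q \lesssim A$, which is the claim.

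The Stirling estimates and the $\cosh$-type series bound are routine. The one genuinely delicate point — and the real content of the statement — is the bookkeeping in the last display: the multiplicity $\binom{n}{k}$ of supports of size $k$ must be exactly absorbed by the factor $q/n$ that the normalization $(n/q)^{1/T}$ supplies at each of the $k$ active coordinates, so that the sum over all support sizes collapses to a single $C^q$. This matching is precisely what forces the weight $(n/q)^{1/T}$ inside the supremum; any other exponent there would break the cancellation. Finally, note that only the upper bound $\lesssim$ is asserted, so the matching lower bound of Lata{\l}a \cite{LatalaMoments} is not needed; one could alternatively invoke \cite{LatalaMoments} directly, but the self-contained argument above is cleaner and is all we use, since the lemma is only applied with $q$ even.
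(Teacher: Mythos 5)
Your proof is correct, but it follows a genuinely different route from the paper: the paper gives no argument for Lemma~\ref{sumiid} at all, importing it directly as the upper-bound half of Lata{\l}a's theorem on moments of sums of i.i.d.\ random variables \cite{LatalaMoments}, whereas you give a self-contained derivation by multinomial expansion. Your bookkeeping checks out: symmetry kills odd exponents; the per-term estimate via $q!\le q^q$ and $(2e_i)!\ge(2e_i/e)^{2e_i}$ is valid; the normalization identity $\bigl(\tfrac{q\norm{Y_i}_{2e}}{2e}\bigr)^{2e}=\tfrac{q}{n}\bigl(\tfrac{q}{2e}(\tfrac{n}{q})^{1/(2e)}\norm{Y_i}_{2e}\bigr)^{2e}$ correctly produces one factor of $q/n$ per active coordinate, which absorbs $\binom{n}{k}\le n^k/k!$; and the count of exponent patterns with support size $k$, namely $\binom{n}{k}\binom{q/2-1}{k-1}$ with $k\le\min(n,q/2)$, together with $\binom{q/2-1}{k-1}\le(q/2)^k/k!$ and the $\cosh$ bound, gives the final $C^qA^q$. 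Since your $A$ is a supremum over even $T\in[2,q]$ it is dominated by the right-hand side of the lemma, so the inequality goes the right way. In substance your argument is the same device the paper uses to prove Lemma~\ref{generalsymmetric} and Proposition~\ref{simpler} (normalize by the supremum, expand, and collapse the product to $e^{T\sum x_i^2}$-type bounds), applied now to a plain sum rather than a linear or quadratic form. What the citation buys is generality — Lata{\l}a's result is two-sided, holds for all real $p\ge 2$, and does not require symmetry, only mean zero — while your argument buys a short, self-contained proof of exactly the one-sided, even-$q$ case the paper actually invokes.
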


We give a general lower bound on moments of certain (potentially correlated) sums of identically distributed random variables, that we use in proving Lemma \ref{lower}. 
\begin{proposition}
\label{combinerowseasy}
Let $Y_1, \ldots, Y_n$ be identically distributed (but not necessarily independent) random variables, such that the joint distribution is a symmetric function of $Y_1, \ldots, Y_n$ and for any integers $d_1, \ldots d_n \ge 0$, it is true that $\mathbb{E}[\prod_{1 \le i \le n} Y_i^{d_i}] \ge 0$. For any natural number $q$ and natural number $T$ that divides $q$, it is true that
\[\norm{\sum_{i=1}^n Y_i}_q \ge T \left(\frac{n}{q}\right)^{T/q} \norm{Y_1Y_2\ldots Y_T}^{1/T}_{q/T}  \]
\end{proposition}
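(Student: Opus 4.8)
The plan is to expand the $q$-th moment of $\sum_{i=1}^n Y_i$ and discard all but a carefully chosen family of terms that are controlled by $\norm{Y_1\cdots Y_T}_{q/T}$. Since $z^q\le|z|^q$ for every real $z$, we have $\norm{\sum_{i=1}^n Y_i}_q^q=\mathbb{E}\left|\sum_{i=1}^n Y_i\right|^q\ge\mathbb{E}\left(\sum_{i=1}^n Y_i\right)^q$, and the multinomial theorem gives
\[\mathbb{E}\left(\sum_{i=1}^n Y_i\right)^q=\sum_{d_1+\cdots+d_n=q}\frac{q!}{d_1!\cdots d_n!}\,\mathbb{E}\left[\prod_{i=1}^n Y_i^{d_i}\right].\]
Every summand is nonnegative by hypothesis, so the right-hand side only decreases if we keep just the ``balanced diagonal'' multi-indices: those $(d_1,\dots,d_n)$ with exactly $T$ nonzero coordinates, each equal to $q/T$.

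Next I count and evaluate the retained terms using the two structural hypotheses. There are $\binom nT$ such multi-indices, and since the joint law of $Y_1,\dots,Y_n$ is a symmetric function of its arguments, each contributes exactly $\frac{q!}{((q/T)!)^T}\,\mathbb{E}[(Y_1\cdots Y_T)^{q/T}]$; using that $q/T$ is even (as it is in the applications here, where $q$ and $T$ are powers of two with $T<q$) this last expectation equals $\mathbb{E}\left|Y_1\cdots Y_T\right|^{q/T}=\norm{Y_1\cdots Y_T}_{q/T}^{q/T}$. Hence
\[\norm{\sum_{i=1}^n Y_i}_q^q\;\ge\;\binom nT\,\frac{q!}{((q/T)!)^T}\,\norm{Y_1\cdots Y_T}_{q/T}^{q/T},\]
so after taking $q$-th roots the proposition reduces to the purely numerical inequality $\binom nT\,\frac{q!}{((q/T)!)^T}\ge T^q\,(n/q)^T$.

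To finish, I would verify this inequality with Stirling's approximation. Writing the central multinomial coefficient as the telescoping product $\frac{q!}{((q/T)!)^T}=\prod_{k=1}^T\binom{kq/T}{q/T}$ and applying $\binom ab\ge(a/b)^b$ bounds it below by $(T!)^{q/T}$, while $\binom nT\ge(n/T)^T$ for $n\ge T$; these are of the right shape, and the two-sided Stirling estimates — in particular the $(ne/T)^T$ growth of $\binom nT$ together with the $\sqrt{2\pi(\cdot)}$ correction factors in $q!$ and $(q/T)!$ — provide the remaining slack, so that $\binom nT\,\frac{q!}{((q/T)!)^T}$ exceeds $T^q(n/q)^T$ by a factor of order $(e^2/2\pi)^{T/2}(q/T)^{T/2}\sqrt{q/T}\ge1$.

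The step I expect to be the main obstacle is precisely this last numerical estimate. The one-sided bounds $(k/e)^k\le k!\le k^k$ are too lossy here: they misjudge $\binom nT\,\frac{q!}{((q/T)!)^T}$ by a factor exponential in $T$, so one must use the sharper two-sided Stirling form to land on the clean constant $T$ (rather than $T/e$ or the like) multiplying $(n/q)^{T/q}$. A smaller secondary point is the moment-to-norm identification $\mathbb{E}[(Y_1\cdots Y_T)^{q/T}]=\norm{Y_1\cdots Y_T}_{q/T}^{q/T}$, which is where evenness of $q/T$ enters and where the nonnegativity and symmetry hypotheses are used jointly.
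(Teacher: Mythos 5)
Your skeleton is the same as the paper's: expand $\mathbb{E}[(\sum_i Y_i)^q]$, use the nonnegativity of mixed moments to discard everything except the ``balanced'' multi-indices with $T$ nonzero exponents equal to $q/T$, and use exchangeability to count those terms. That part is correct, and your explicit remark that $\mathbb{E}[(Y_1\cdots Y_T)^{q/T}]=\norm{Y_1\cdots Y_T}_{q/T}^{q/T}$ needs $q/T$ even (true in every application here) is a point the paper's proof passes over silently.

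The genuine gap is in your final numerical step. The constant-free inequality $\binom{n}{T}\,\frac{q!}{((q/T)!)^T}\ge T^q(n/q)^T$ that you reduce to is false in general: for $T=q$ the left side is $n(n-1)\cdots(n-q+1)<n^q$, which is the right side; and even in the regime relevant here ($q/T=2$), taking $n=T=3$, $q=6$ gives $\binom{3}{3}\frac{6!}{(2!)^3}=90<3^6(1/2)^3=91.125$. The flaw in your Stirling accounting is that you invoke the $(ne/T)^T$ growth of $\binom{n}{T}$ as if it were a lower bound; in general only $\binom{n}{T}\ge (n/T)^T$ is guaranteed, and the missing factor $e^T$ is exactly what your claimed slack $(e^2/2\pi)^{T/2}(q/T)^{T/2}\sqrt{q/T}$ needs when $q/T$ is a small constant and $n$ is comparable to $T$. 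Note that the paper does not prove the constant-free form either: its proof carries factors $C^q$, $C_2^q$, $C'^q$ through the counting chain and, after taking $q$-th roots, yields the bound only up to a universal constant, i.e. $\norm{\sum_{i=1}^n Y_i}_q\gtrsim T(n/q)^{T/q}\norm{Y_1\cdots Y_T}_{q/T}^{1/T}$, which is all that Lemmas~\ref{alpha1}, \ref{alpha2}, and \ref{alpha3} use. If you likewise settle for a universal constant, the elementary bounds $\binom{n}{T}\ge(n/T)^T$ and $\frac{q!}{((q/T)!)^T}\ge (T/e)^q$ (from $q!\ge(q/e)^q$ and $k!\le k^k$) close the argument immediately, and your proof then coincides with the paper's; insisting on the exact constant $T$ would require a different argument (and, given the counterexamples above, additional hypotheses relating $n$, $T$, and $q$).
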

\begin{proof}[Proof of Proposition \ref{combinerowseasy}]
The proof follows from expanding $\mathbb{E}[\left(\sum_{i=1}^n Y_i \right)^{q}]$ and using the fact that $\mathbb{E}[\prod_{1 \le i \le n} Y_i^{d_i}] \ge 0$ so that we can restrict to a subset of the terms. By the symmetry of the joint distribution, we know that for $1 \le r_1 \neq r_2 \neq r_T \le n$, we know that  $\mathbb{E}[Y_{r_1}^{q/T} \ldots Y_{r_T}^{q/T}] = \mathbb{E}[Y_1^{q/T} \ldots Y_T^{q/T}]$. The number of terms of the form $\mathbb{E}[Y_{r_1}^{q/T} \ldots Y_{r_T}^{q/T}]$ in $\mathbb{E}[\left(\sum_{i=1}^n Y_i \right)^{q}]$ is: 
\[{n \choose T} {q \choose {q/T, q/T, \ldots, q/T}} \geq C^q \left(\frac{n}{T}\right)^T \frac{q!}{\left((q/T)! \right)^{T}} \geq C^q \left(\frac{n}{T}\right)^T T^q  \geq C_2^q \left(\frac{n}{q}\right)^T \left(\frac{q}{T}\right)^T T^q \geq C'^q \left(\frac{n}{q}\right)^T T^q.\]This implies that 
\[\mathbb{E}\left[\left(\sum_{i=1}^n Y_i \right)^{q}\right] \geq C'^q \left(\frac{n}{q}\right)^T T^q \mathbb{E}\left[Y_1^{q/T} \ldots Y_T^{q/T}\right]\] and the statement follows from taking $1/q$th powers.
\end{proof}

We prove a lemma involving the Paley-Zygmund inequality applied to $p$th moments, that we use implicitly in the proof of the upper bound in Theorem \ref{thm:mainresult}. 
\begin{lemma}
\label{ourpz}
Suppose that $K > 0$ and $Z$ is a nonnegative random variable, such that $\norm{Z}_q \ge 2K$ and $\norm{Z}_{2q}$ is finite. Then, 
\[\mathbb{P}[Z > K] \ge 0.25 \left(\frac{\norm{Z}_q}{\norm{Z}_{2q}}\right)^{2q}. \]
\end{lemma}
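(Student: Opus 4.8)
\textbf{Proof proposal for Lemma~\ref{ourpz}.} The plan is to reduce this to the ordinary Paley--Zygmund inequality (Lemma~\ref{pz}) by applying it not to $Z$ itself but to the random variable $W := Z^q$. Since $Z$ is nonnegative, $W$ is a nonnegative random variable, and it has finite variance because $\mathbb{E}[W^2] = \mathbb{E}[Z^{2q}] = \norm{Z}_{2q}^{2q}$ is finite by hypothesis. Thus Lemma~\ref{pz} applies to $W$.

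Applying Lemma~\ref{pz} to $W$ gives
\[
\mathbb{P}\!\left[Z^q > 2^{-1}\mathbb{E}[Z^q]\right] \;\ge\; \frac{\mathbb{E}[Z^q]^2}{4\,\mathbb{E}[Z^{2q}]} \;=\; \frac{\norm{Z}_q^{2q}}{4\,\norm{Z}_{2q}^{2q}} \;=\; 0.25\left(\frac{\norm{Z}_q}{\norm{Z}_{2q}}\right)^{2q},
\]
using $\mathbb{E}[Z^q]=\norm{Z}_q^{q}$ and $\mathbb{E}[Z^{2q}]=\norm{Z}_{2q}^{2q}$. So it remains to argue that the event $\{Z^q > 2^{-1}\mathbb{E}[Z^q]\}$ is contained in the event $\{Z > K\}$, after which monotonicity of probability finishes the proof.

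For the containment, I would use the assumption $\norm{Z}_q \ge 2K$: this gives $\mathbb{E}[Z^q] = \norm{Z}_q^{q} \ge (2K)^q = 2^q K^q$, hence $2^{-1}\mathbb{E}[Z^q] \ge 2^{q-1}K^q \ge K^q$ since $q \ge 1$. Therefore, on the event $Z^q > 2^{-1}\mathbb{E}[Z^q]$ we have $Z^q > K^q$, and since $Z \ge 0$ and $K > 0$ this is equivalent to $Z > K$. Combining, $\mathbb{P}[Z > K] \ge \mathbb{P}[Z^q > 2^{-1}\mathbb{E}[Z^q]] \ge 0.25\,(\norm{Z}_q/\norm{Z}_{2q})^{2q}$, as claimed.

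There is no real obstacle here; the only point requiring any care is the threshold comparison $2^{-1}\mathbb{E}[Z^q] \ge K^q$, which is exactly where the hypothesis $\norm{Z}_q \ge 2K$ (rather than a weaker bound) is used — the factor $2$ in $2K$ is what compensates for the factor $2^{-1}$ coming out of Lemma~\ref{pz}. One should also note $q \ge 1$ so that passing from $Z^q > K^q$ to $Z > K$ is legitimate, which holds in all our applications since $q$ is a power of $2$.
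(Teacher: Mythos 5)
Your proposal is correct and follows essentially the same route as the paper: apply Paley--Zygmund (Lemma~\ref{pz}) to $Z^q$ and then use the hypothesis $\norm{Z}_q \ge 2K$ together with monotonicity of $x \mapsto x^q$ to show the event $\{Z^q > 2^{-1}\mathbb{E}[Z^q]\}$ is contained in $\{Z > K\}$. Your write-up is if anything slightly more careful, making explicit the use of $q \ge 1$ and the finiteness of $\mathbb{E}[Z^{2q}]$ needed for Lemma~\ref{pz}.
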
 
We use the Paley-Zygmund inequality, which says the following:
\begin{lemma}[Paley-Zygmund]
\label{pz}
Suppose that $Z$ is a nonnegative random variable with finite variance. Then, 
\[\mathbb{P}[Z > 2^{-1} \mathbb{E}[Z]] \ge \frac{\mathbb{E}[Z]^2}{4\mathbb{E}[Z^2]}. \]
\end{lemma}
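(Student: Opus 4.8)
This is a classical inequality, and the plan is the standard second-moment argument: split the first moment of $Z$ across the threshold $\mathbb{E}[Z]/2$ and control the tail piece with Cauchy--Schwarz. We may assume $\mathbb{E}[Z] > 0$, since otherwise $Z = 0$ almost surely and the bound is vacuous; finite variance then guarantees $0 < \mathbb{E}[Z^2] < \infty$, so the right-hand side is well defined.

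First I would write, using $Z \ge 0$ and writing $I_E$ for the indicator of an event $E$,
\[ \mathbb{E}[Z] \;=\; \mathbb{E}\!\left[Z\, I_{Z \le \mathbb{E}[Z]/2}\right] \;+\; \mathbb{E}\!\left[Z\, I_{Z > \mathbb{E}[Z]/2}\right]. \]
On the first event $Z \le \mathbb{E}[Z]/2$ pointwise, so the first summand is at most $\mathbb{E}[Z]/2$. For the second summand, Cauchy--Schwarz applied to $Z$ and $I_{Z > \mathbb{E}[Z]/2}$ gives
\[ \mathbb{E}\!\left[Z\, I_{Z > \mathbb{E}[Z]/2}\right] \;\le\; \sqrt{\mathbb{E}[Z^2]}\,\sqrt{\mathbb{E}\!\left[I_{Z > \mathbb{E}[Z]/2}\right]} \;=\; \sqrt{\mathbb{E}[Z^2]}\,\sqrt{\mathbb{P}\!\left[Z > \mathbb{E}[Z]/2\right]}. \]

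Combining the two bounds yields $\mathbb{E}[Z] \le \mathbb{E}[Z]/2 + \sqrt{\mathbb{E}[Z^2]}\,\sqrt{\mathbb{P}[Z > \mathbb{E}[Z]/2]}$, hence $\mathbb{E}[Z]/2 \le \sqrt{\mathbb{E}[Z^2]}\,\sqrt{\mathbb{P}[Z > \mathbb{E}[Z]/2]}$; squaring and dividing through by $\mathbb{E}[Z^2]$ gives exactly $\mathbb{P}[Z > \mathbb{E}[Z]/2] \ge \mathbb{E}[Z]^2/(4\,\mathbb{E}[Z^2])$. There is no real obstacle here: the only points needing care are the trivial case $\mathbb{E}[Z] = 0$ and using the complementary events $\{Z \le \mathbb{E}[Z]/2\}$ and $\{Z > \mathbb{E}[Z]/2\}$ so that the pointwise bound on one piece and the probability appearing in the statement come out with the stated constants. (The identical argument with threshold $\theta\,\mathbb{E}[Z]$ for $\theta \in (0,1)$ gives $\mathbb{P}[Z > \theta\,\mathbb{E}[Z]] \ge (1-\theta)^2 \mathbb{E}[Z]^2/\mathbb{E}[Z^2]$; only $\theta = 1/2$ is needed for the applications in \Cref{sec:mainresultsproof}.)
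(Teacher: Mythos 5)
Your proof is correct: it is the standard second-moment argument (split $\mathbb{E}[Z]$ at the threshold $2^{-1}\mathbb{E}[Z]$ and control the tail piece with Cauchy--Schwarz applied to $Z$ and the indicator), and the degenerate case $\mathbb{E}[Z]=0$ is dispatched sensibly. Note that the paper states Lemma~\ref{pz} without proof, citing it as the classical Paley--Zygmund inequality, so there is no in-paper argument to diverge from; your derivation is exactly the textbook one and would serve as a self-contained justification, with the parenthetical generalization to an arbitrary threshold $\theta\,\mathbb{E}[Z]$ being standard as well (only $\theta = 1/2$ is used in \Cref{sec:mainresultsproof}).
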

\begin{proof}[Proof of Lemma~$\ref{ourpz}$]
We apply Lemma~$\ref{pz}$ to $Z^p$ to obtain that: 
\[ \mathbb{P}[Z^p > 2^{-1} \mathbb{E}[Z^p]] \ge 0.25 \frac{\mathbb{E}[Z^p]^{2}}{\mathbb{E}[Z^{2p}]} = 0.25 \left(\frac{\norm{Z}_p}{\norm{Z}_{2p}}\right)^{2p}. \]
If $\norm{Z}_p \ge 2K$, then we know that 
\[\mathbb{P}[Z > K] = \mathbb{P}[Z^p > K^p] \ge \mathbb{P}[Z^p > 2^{-p} \mathbb{E}[Z^p]] \ge \mathbb{P}[Z^p > 2^{-1} \mathbb{E}[Z^p]]\] and then we can apply the above result. 
\end{proof}

\section{Proofs of Lemma \ref{rowbound} and Lemma \ref{lowerboundrow}}
\label{sec:analyzerow}
We analyze the moments of $Z_r(x_1, \ldots, x_n)$, proving Lemma \ref{lowerboundrow} and Lemma \ref{rowbound}. Our lower bound in Lemma \ref{lowerboundrow} holds for $\norm{Z_r(v, \ldots, v, 0, \ldots, 0}_q$ as well as $\norm{Z_r(v, \ldots, v, 0, \ldots, 0)I_{\sum_{i=1}^{1/v^2} \eta_{r,i} = 2}}_T$ (for technical reasons discussed in \Cref{sec:combinerows}). Our upper bound in Lemma \ref{rowbound} holds for $\norm{Z_r(x_1, \ldots, x_n)}_q$. 
In \Cref{sec:analyzerowlower}, we prove Lemma~$\ref{lowerboundrow}$. In \Cref{sec:analyzerowupper}, we prove Lemma~$\ref{rowbound}$.

\subsection{Proof of Lemma~$\ref{lowerboundrow}$}
\label{sec:analyzerowlower}
The key ingredient of the proof is Lemma~$\ref{usefulquadformbound}$ (for Rademacher quadratic forms). We can view $Z_r(v, \ldots, v, 0, \ldots, 0)$ as the following quadratic form:
\[Z_r(v, \ldots, v, 0, \ldots, 0) = v^2 \sum_{1 \le i \neq j \le N} \eta_{r,i} \eta_{r,j} \sigma_{r,i} \sigma_{r,j},\] where $N = \frac{1}{v^2}$. Since the support of $\eta_{r,i}$ is $\left\{0,1\right\}$ and due to symmetry of this random variable, it is tractable to analyze the expressions in Lemma~$\ref{usefulquadformbound}$.
\begin{proof}[Proof of Lemma~$\ref{lowerboundrow}$]
First, we handle the case of $T = 2$:
\begin{align*}
\mathbb{E}[Z_r(v, \ldots, v, 0, \ldots, 0)]^2 &= v^4 \mathbb{E}\left[\left(\sum_{i \neq j} \eta_{r,i} \eta_{r,j} \sigma_{r,i} \sigma_{r,j}\right)^2\right] \\
&= 2v^4  \mathbb{E}\left[\sum_{i \neq j} \eta_{r,i} \eta_{r,j} \right] = 2v^4 \left(\frac{s}{m}\right)^2 N(N-1) \geq \frac{v^4N^2s^2}{m^2} = \frac{s^2}{m^2}
\end{align*}
as desired.

Now we consider $T > 2$, and we prove a bound on $\norm{Z_1(v, \ldots, v,  0, \ldots, 0)}_T$. We see that $\norm{Z_1(v, \ldots, v,  0, \ldots, 0)}_T  = v^2\norm{\sum_{i \neq j} \eta_{1,i} \eta_{1,j} \sigma_{1,i} \sigma_{1,j}}_T$. Fix $1 \le M \le \min(N,T)$. We use Lemma $\ref{usefulquadformbound}$ with $Y_{i,j} = \eta_{1,i} \eta_{1,j} I_{M = \sum_{k=1}^{N} \eta_{1,k}}$ to compute $\norm{\sum_{i \neq j} \eta_{1,i} \eta_{1,j} \sigma_{1,i} \sigma_{1,j} I_{M = \sum_{k=1}^{N} \eta_{1,k}}}_T$. We will then aggregate over $2 \le M \le T$ and not even count $M = 1$ or $T < M \le N$. We only use the operator-norm-like term in Lemma \ref{usefulquadformbound}. Observe that 
\[I_{M = \sum_{k=1}^{N} \eta_{1,k}}\sup_{\norm{b}_2, \norm{c}_2 \le \sqrt{T}, \norm{b}_{\infty}, \norm{c}_{\infty} \le 1} \sum_{i \neq j} \eta_{1,i} \eta_{1,j} b_i c_j\] is equal to
\[I_{M = \sum_{k=1}^{N} \eta_{1,k}} \sup_{\norm{b}_2, \norm{c}_2 \le \sqrt{T}, \norm{b}_{\infty}, \norm{c}_{\infty} \le 1} \sum_{i, j \mid \eta_{1,i} = 1, \eta_{1,j} = 1} b_i c_j \ge I_{M = \sum_{k=1}^{N} \eta_{1,k}} M(M-1),\] where we set $b_i = 1$ on all $i$ such that $\eta_{1,i} = 1$ and $c_j =1$ on all $j$ such that $\eta_{1,j} = 1$. 

Since the events $M = \sum_{k=1}^{N} \eta_{1,k}$ are disjoint across different $M$ values, we know that:
\begin{align*}
\norm{\sum_{i \neq j} \eta_{1,i} \eta_{1,j} \sigma_{1,i} \sigma_{1,j}}_T &\gtrsim
\left(\sum_{M=2}^{\min(T, N)} \norm{\sum_{i \neq j} \eta_{1,i} \eta_{1,j} \sigma_{1,i} \sigma_{1,j}I_{M = \sum_{k=1}^{N} \eta_{1,k}} }_T^T \right)^{1/T} \\
&\gtrsim \left(\sum_{M=2}^{\min(T,N)} \norm{I_{M = \sum_{k=1}^{N} \eta_{1,k}}\sup_{\norm{b}_2, \norm{c}_2 \le \sqrt{T}, \norm{b}_{\infty}, \norm{c}_{\infty} \le 1} \sum_{i \neq j} \eta_{1,i} \eta_{1,j} b_i c_j}_T^T \right)^{1/T}  \\
&\gtrsim \left(\sum_{M=2}^{\min(T,N)} \norm{I_{M = \sum_{k=1}^{N} \eta_{1,k}} M^2}_T^T \right)^{1/T}  \\
&= \left(\sum_{M=2}^{\min(T,N)} \mathbb{P}[M = \sum_{i=1}^N \eta_{1,i}] M^{2T}\right)^{1/T}  \\
&= \left(\sum_{M=2}^{\min(T,N)} {N \choose M} \left(\frac{s}{m}\right)^M \left(1- \frac{s}{m}\right)^{N-M} M^{2T}\right)^{1/T}  \\
&\gtrsim \left(\sum_{M=2}^{\min(T,N)} \left(\frac{Ns}{mT}\right)^M \left(\frac{T}{M}\right)^M \left(1- \frac{s}{m}\right)^{N-M} M^{2T}\right)^{1/T}  \\
&\gtrsim \left(\sum_{M=2}^{\min(T,N)} \left(\frac{s}{mTv^2}\right)^M \left(1- \frac{s}{m}\right)^{N-M} M^{2T}\right)^{1/T}  \\
&\gtrsim \left(\sum_{M=2}^{\min(T,N)} \left(\frac{s}{mTv^2}\right)^M M^{2T}\right)^{1/T},
\end{align*}
where the last line follows from the fact that since $T \ge \frac{se}{mv^2}$ and $s \le m/e$, we know that:
\[\left(1- \frac{s}{m}\right)^{\frac{N-M}{T}} \ge \left(1- \frac{s}{m}\right)^{\frac{N}{T}} \ge \left(1- \frac{s}{m}\right)^{\frac{Nmv^2}{se}} \ge \left(1- \frac{s}{m}\right)^{\frac{m}{s}} \ge 0.25.\]

Setting $t = T/M$, we obtain, up to constants:
\[\sup_{2 \le M \le \min(T, N)} \left(\frac{s}{mTv^2}\right)^{M/T} M^{2} = \sup_{\max(1, T/N) \le t \le T/2} \left(\frac{T^2}{t^2}\right) \left(\frac{s}{mTv^2}\right)^{1/t}.\] We can take a derivative to obtain the two expressions in the lemma statement at the following regimes of parameters: $\max(1, Tv^2) \le \ln(Tmv^2/s) \le T$ and $\ln(Tmv^2/s) > T$. The second regime aligns with the lemma statement. Thus it suffices to show that when $v \le \frac{\sqrt{\ln(m/s)}}{\sqrt{T}}$, it is true that $Tv^2 \le \ln(Tmv^2/s)$. This is a straightforward calculation\footnote{In fact, $v = \frac{\sqrt{\ln(m/s)}}{\sqrt{T}}$ is very close to the value where $Tv^2 = \ln(Tmv^2/s)$, so this approximation is essentially tight.}.

Now, let's consider the case where we want to bound $\norm{Z_1(v, \ldots, v,  0, \ldots, 0) I_{\sum_{k=1}^N \eta_{1,k} = 2}}_T$. It follows from the above calculations, without taking the sum that we obtain a lower bound of 
\[\left({N \choose 2} \left(\frac{s}{m}\right)^2 \left(1- \frac{s}{m}\right)^{N-2}\right)^{1/T} \gtrsim \left(\frac{s}{mTv^2}\right)^{2/T}. \]
\end{proof}

\subsection{Proof of Lemma~$\ref{rowbound}$} 
\label{sec:analyzerowupper}
In Section 2.1, we discussed the tractability issues with using the general quadratic form moment bound Lemma~$\ref{usefulquadformbound}$ to upper bound $\norm{Z_r(x_1, \ldots, x_n)}_q$. Thus, we require simpler bounds that are easier to analyze. Linear forms naturally arise in the upper bound since $Z_r(x_1, \ldots, x_n) = \left(\sum_{1 \le i \le n} \eta_{r,i} \sigma_{r,i} x_i \right)^2 - \sum_{1 \le i \le n} \eta_{r,i} x_i^2 \le \left(\sum_{1 \le i \le n} \eta_{r,i} \sigma_{r,i} x_i \right)^2$.
However, it turns out that a vanilla linear form bound (e.g. Proposition \ref{simpler}) here is weak due to the loss arising from ignoring the $\sum_{1 \le i \le n} \eta_{r,i} x_i^2$ term. Thus, we use Lemma \ref{generalsymmetric} (our generalized bound tailored to squares of linear forms with a zero diagonal) to obtain: 
\begin{lemma}
\label{supexpr}
If $\norm{x}_{\infty} \le v$ and $\norm{x}_2 \le 1$, then we have that: 
\[\norm{Z_r(x_1, \ldots, x_n)}_T = \norm{\sum_{i \neq j} \eta_{r,i}\eta_{r,j} \sigma_{r,i} \sigma_{r,j} x_i x_j}_T \lesssim v^2 \left(\sup_{1 \le t \le T/2} \frac{T^2}{t^2} \left(\frac{s}{mTv^2}\right)^{1/t}\right).\]
\end{lemma}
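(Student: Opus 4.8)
The plan is to apply Lemma~\ref{generalsymmetric} directly, with coefficients $Y_i := \eta_{r,i}\sigma_{r,i}$ for $1 \le i \le n$. First I would check the hypotheses. For a \emph{fixed} row $r$, the masks $\{\eta_{r,i}\}_{i\in[n]}$ are independent across columns — this is precisely the part of the sparse JL definition that \emph{is} independent, the negative correlations being confined to within a single column — and the signs $\{\sigma_{r,i}\}_{i\in[n]}$ are i.i.d.\ Rademachers independent of the $\eta$'s. Hence the $Y_i$ are i.i.d., and they are symmetric thanks to the $\sigma_{r,i}$ factor. By homogeneity we may assume $\norm{x}_2 = 1$: replacing $x$ by $x/\norm{x}_2$ multiplies $Z_r$ by $\norm{x}_2^2$ and enlarges the $\ell_\infty$ bound to $v/\norm{x}_2$, and since $\norm{x}_2 \le 1$ one checks this only increases the claimed right-hand side (alternatively, the proof of Lemma~\ref{generalsymmetric} uses $\norm{x}_2 = 1$ only through the final bound $e^{T\norm{x}_2^2} \le e^T$, so it already applies when $\norm{x}_2 \le 1$). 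Rewriting $Z_r(x_1,\ldots,x_n) = \sum_{i \neq j}\eta_{r,i}\eta_{r,j}\sigma_{r,i}\sigma_{r,j}x_i x_j = \sum_{i\neq j} Y_i Y_j x_i x_j$, Lemma~\ref{generalsymmetric} gives
\[\norm{Z_r(x_1,\ldots,x_n)}_T \lesssim v^2\left(\sup_{1 \le t \le T/2}\frac{T^2}{t^2}\left(\frac{1}{Tv^2}\right)^{1/t}\norm{Y_i}_{2t}^2\right).\]

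It then remains to evaluate $\norm{Y_i}_{2t}$, which is a one-line computation: since $\eta_{r,i}$ is $\{0,1\}$-valued and $\sigma_{r,i}$ is a sign, $|Y_i|^{2t} = \eta_{r,i}^{2t}|\sigma_{r,i}|^{2t} = \eta_{r,i}$, so $\mathbb{E}[|Y_i|^{2t}] = \mathbb{E}[\eta_{r,i}] = s/m$ and hence $\norm{Y_i}_{2t}^2 = (s/m)^{1/t}$. Substituting into the display above and folding $(1/(Tv^2))^{1/t}$ together with $(s/m)^{1/t}$ into $(s/(mTv^2))^{1/t}$ yields exactly the stated bound.

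There is no genuine obstacle at this step: the substantive work — the zero-diagonal squared-linear-form estimate that avoids implicitly replacing Rademachers by gaussians — was already carried out in Lemma~\ref{generalsymmetric}, and the only things to get right here are (i) justifying that within one row the $\eta_{r,i}$ are honestly independent, so that a bound requiring i.i.d.\ coefficients is legitimately available (this is also the conceptual reason we use Lemma~\ref{generalsymmetric} rather than the tight quadratic-form estimate Lemma~\ref{usefulquadformbound}, which for a general vector $x$ would force us to control an operator-norm-like $\sup$ term whose summands $\eta_{r,i}\eta_{r,j}x_ix_j$ lack useful symmetry), and (ii) the moment identity $\norm{\eta_{r,i}\sigma_{r,i}}_{2t} = (s/m)^{1/(2t)}$. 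The $\sup$-expression produced here is still in raw form; turning it into the explicit piecewise bound of Lemma~\ref{rowbound} is a separate elementary optimization (differentiate in $t$, and split into cases according to whether $\log(Tmv^2/s)$ is below or above $T$), done afterward in \Cref{sec:analyzerowupper}.
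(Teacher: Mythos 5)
Your proposal is correct and follows the paper's own route exactly: the paper's proof of Lemma~\ref{supexpr} is the one-line observation that one may take $Y_i = \eta_{r,i}\sigma_{r,i}$ in Lemma~\ref{generalsymmetric}, with the substitution $\norm{Y_i}_{2t}^2 = (s/m)^{1/t}$ left implicit. Your additional checks (independence of the $\eta_{r,i}$ within a single row across columns, symmetry via the signs, and the harmless relaxation from $\norm{x}_2 = 1$ to $\norm{x}_2 \le 1$) are exactly the details the paper suppresses, so nothing further is needed.
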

\begin{proof}
This can be seen by simply taking $Y_i = \eta_{r,i} \sigma_{r,i}$ in Lemma~$\ref{generalsymmetric}$. 
\end{proof}

It turns out that using only this bound would lose the $m \ge s \cdot e^{\Theta\left(\max(1, \frac{p \epsilon^{-1}}{s})\right)}$ branch in the lower bound on $v(m, \epsilon, \delta, s)$ in Theorem \ref{thm:mainresult}. The lower bound on moments of $\norm{Z_r(v, \ldots, v, 0, \ldots, 0)}_T$ in Lemma~$\ref{lowerboundrow}$ sheds light on where this loss may be arising. We see that the problematic case is when $v \ge \frac{\sqrt{\ln(m/s)}}{\sqrt{T}} =: v_1$, and so we require a new bound for this regime. Since the vector $[v_1, \ldots, v_1, 0, \ldots, 0]$ is in $S_v$ when $v_1 \le v$, we can't hope to beat the bound of $||Z_r(v_1, \ldots, v_1, 0, \ldots, 0)||_T \gtrsim \frac{T^2v_1^2}{\ln^2(Tmv_1^2/s)} \simeq \frac{T}{\ln(m/s)}$ from Lemma~$\ref{lowerboundrow}$. We show that we can match this value: 
\begin{lemma}
\label{easybound}
Suppose that $x = [x_1, \ldots, x_n]$ satisfies $\norm{x}_2 = 1$  and $\norm{x}_{\infty} < v$. If  $s \le m/e$, $T \ge \frac{se}{mv^2}$, $T \ge 3$, $T \ge \ln(mv^2/s)$, then: 
\[\norm{Z_r(x_1, \ldots, x_n)}_T = \norm{\sum_{i \neq j} \eta_{r,i}\eta_{r,j} \sigma_{r,i} \sigma_{r,j} x_i x_j}_T \le \norm{\sum_{i} \eta_{r,i} \sigma_{r,i}  x_i}^2_{2T} \lesssim \frac{T}{\ln(m/s)}.\]
\end{lemma}

The proof of this bound requires a new technique that handles larger $|x_i|$ entries, while still managing the many smaller $|x_i|$ that are still allowed to be present. We separate out $|x_i| \ge \frac{\sqrt{\ln(m/s)}}{\sqrt{T}}$ and $|x_i| \le \frac{\sqrt{\ln(m/s)}}{\sqrt{T}}$. In the quadratic form formulation of $Z_r(x_1, \ldots, x_n)$, this separation cannot be carried out, since there would be cross-terms between $|x_i| \ge \frac{\sqrt{\ln(m/s)}}{\sqrt{T}}$ and $|x_i| \le \frac{\sqrt{\ln(m/s)}}{\sqrt{T}}$. As a result, we require the linear form bound (Proposition~$\ref{simpler}$) for $|x_i| \le \frac{\sqrt{\ln(m/s)}}{\sqrt{T}}$, and it turns out to be sufficiently tight in this regime. 
\begin{proof}[Proof of Lemma~$\ref{easybound}$]
WLOG, assume that $|x_1| \ge |x_2| \ge \ldots \ge |x_n|$. Let $P = \ceil{\frac{T}{\ln(m/s)}}$. We know that
\[\norm{\sum_{i} \eta_{r,i} \sigma_{r,i}  x_i}_{2T} \le \norm{\sum_{1 \le i \le P} \eta_{r,i} \sigma_{r,i}  x_i}_{2T} + \norm{\sum_{i > P} \eta_{r,i} \sigma_{r,i}  x_i}_{2T}.\]
For $1 \le i \le P$, we use the bound $|\sum_{i=1}^p \eta_{r,i} \sigma_{r,i} x_i| \le \sum_{i=1}^p |x_i| \le \sqrt{\ceil{\frac{T}{\ln(m/s)}}} \le 2 \sqrt{\frac{T}{\ln(m/s)}}$. For the remaining terms, we take $Y_i = \eta_{r,i} \sigma_{r,i}$ in Proposition~$\ref{simpler}$ to obtain the following upper bound\footnote{Observe that the upper endpoint of $T$ on the $\sup$ expression does not match with the upper endpoint of $T/2$ on the $\sup$ expression in Lemma~$\ref{supexpr}$, and in fact, it turns out that this bound is not sufficiently strong to recover Theorem~$\ref{thm:mainresult}$. This is sufficiently tight here, since we are focusing on the case where $\ln(Tmv'^2/s)$ is \textit{small}.} for $|x_i| \le v' := \frac{\sqrt{\ln(m/s)}}{\sqrt{T}}$ and $\norm{x}_2 \le 1$:
\[\norm{\sum_{i} \eta_{r,i} \sigma_{r,i}  x_i}_{2T} \lesssim v' \left(\sup_{1 \le t \le T} \frac{T}{t} \left(\frac{s}{mTv'^2}\right)^{\frac{1}{2t}}\right).\] 
Based on the conditions in this lemma statement, we know that $\frac{mTv'^2}{s} = \frac{m T \ln(m/s)}{sT} = \frac{m}{s\ln(m/s)} \ge e$. Thus taking a derivative, we obtain that this can be upper bounded by taking $t = \ln(mTv'^2/s)$ which yields:
\[\frac{Tv'}{\ln(mTv'^2/s)}  = \frac{Tv'}{\ln(m/s) - \ln\ln(m/s)} \le \frac{Tv'}{0.5 \ln(m/s)} = 2 \frac{\sqrt{T}}{\sqrt{\ln(m/s)}}. \]
\end{proof}

Finally, combining Lemma~$\ref{supexpr}$ and Lemma~$\ref{easybound}$ yields Lemma~$\ref{rowbound}$:
\begin{proof}[Proof of Lemma~$\ref{rowbound}$]
We apply Lemma~$\ref{supexpr}$ at $T = 2$ to directly obtain $\frac{Ts}{m}$, and for $T \ge 3$, we apply Lemma~$\ref{supexpr}$ and take a derivative to obtain: 
\[\norm{\sum_{i \neq j} \eta_{r,i}\eta_{r,j} \sigma_{r,i} \sigma_{r,j} x_i x_j}_T \lesssim v^2
 \begin{cases}
    \frac{Ts}{mv^2}, & \text{for } se \ge mTv^2 \\
    \frac{T^2}{\ln(mTv^2/s)^2}, & \text{for } se \le mTv^2, \ln(Tmv^2/s) \le T \\
     \left(\frac{s}{mTv^2}\right)^{2/T}, & \text{for } \ln(Tmv^2/s) \ge T, se \le mTv^2
  \end{cases}
  .\]
To obtain the desired bound, we also include the bound from Lemma~$\ref{easybound}$ in the middle regime.
\end{proof}

\section{Combining rows to bound $\norm{R(x_1, \ldots, x_n)}_q$}
\label{sec:combinerows}
Now, we show to move from bounds on moments of individual rows (i.e. $Z_r(x_1, \ldots, x_n)$) to bounds on moments of $R(x_1, \ldots, x_n)$. In \Cref{sec:proofupper}, we obtain an upper bound on $\norm{R(x_1, \ldots, x_n)}_q$, thus proving Lemma~$\ref{upper}$. In \Cref{sec:prooflower}, we obtain a lower bound on $\norm{R(x_1, \ldots, x_n)}_q$, thus proving Lemma~$\ref{lower}$.
\subsection{Proof of Lemma~$\ref{upper}$}
\label{sec:proofupper}
Since the $\eta_{r,i}$ are negatively correlated, we can always upper bound the moments of $R(x_1, \ldots, x_n)$ by the case of a sum of \textit{independent} random variables when $q$ is even\footnote{This can easily be seen by expanding.} $Z'_r(x_1, \ldots, x_n) \sim  Z_r(x_1, \ldots, x_n)$. 
\begin{equation}
\label{momentupperboundeq}
 s \cdot\norm{R(x_1, \ldots, x_n)}_q \le \norm{\sum_{r=1}^m Z_r(x_1, \ldots, x_n)}_q \le  \norm{\sum_{r=1}^m Z'_r(x_1, \ldots, x_n)}_q \lesssim \sup_{2 \le T \le q} \frac{q}{T} \left(\frac{m}{q}\right)^{1/T} \norm{Z_1(x_1, \ldots, x_n)}_T,
\end{equation}
where the last inequality follows from Lemma~$\ref{sumiid}$. Thus, it remains to analyze the $\sup$ expression. It turns out that each regime of bounds in Lemma~$\ref{rowbound}$ collapses to one value, so the different regimes in Lemma~$\ref{rowbound}$ correspond to different parts of the $\max$ expressions in Lemma~$\ref{upper}$. Depending on the parameters, some of these regimes may not exist, as is reflected by branches of the $\max$ expression sometimes vanishing in Lemma~$\ref{rowbound}$. We defer the computation to \Cref{sec:auxupper}. 

\subsection{Proof of Lemma~$\ref{lower}$}
\label{sec:prooflower}
Moving from a lower bound on the moments of individual rows given by Lemma~$\ref{lowerboundrow}$ to moments of \\
$R(v, \ldots, v, 0, \ldots, 0)$ is more delicate. Unlike in the upper bound, the negative correlations between random variables require some care to handle, even with the simplification that the $s$ nonzero entries in a column are chosen uniformly at random. For example, the conditional distribution of $\eta_{s+1,1} \mid \eta_{1,1} = \eta_{2,1} = \ldots = \eta_{s, 1} = 1$ is $0$, while the marginal distribution of $\eta_{s+1,1}$ has expectation $s/m$. One aspect that simplifies our analysis is that we \textit{know} from our proof of Lemma~$\ref{upper}$ which moments of $Z_r(x_1, \ldots, x_n)$ are critical in the $\sup$ expression in $\eqref{momentupperboundeq}$. We only need to account for these particular moments in our lower bound approach. It turns out that the three critical values are $q/T = 2$, $q/T = q$, and $q/T = \ln(qmv^4/s^2)$.

For $q/T = q$, where rows are isolated, we can directly obtain a bound from Lemma~$\ref{combinerowseasy}$ and Lemma~$\ref{lowerboundrow}$ to obtain.
\begin{lemma}
\label{alpha3}
Suppose $\mathcal{A}_{s,m,n}$ is a uniform sparse JL distribution. Suppose that $q$ is even, $s \le m/e$, $q \ge \frac{se}{mv^2}$, $1 \le \ln(qmv^2/s) \le q$, $v \le \frac{\sqrt{\ln(m/s)}}{\sqrt{q}}$ and $\frac{1}{v^2}$ is an even integer. Then it is true that:
\[\norm{R(v, \ldots, v,  0, \ldots, 0)}_q  \gtrsim \frac{q^2v^2}{s \ln^2(\frac{qmv^2}{s})}.\]
\end{lemma}
\begin{proof}
By Lemma~$\ref{combinerowseasy}$ with $T = 1$, we have that:
\[\norm{R(v, \ldots, v,  0, \ldots, 0)}_q \ge \frac{m^{1/q}}{s} \norm{Z_1(v, \ldots, v,  0, \ldots, 0)}_q \ge \frac{1}{s} \norm{Z_1(v, \ldots, v,  0, \ldots, 0)}_q .\]
Now, we apply Lemma~$\ref{lowerboundrow}$ to obtain the desired expression.
\end{proof}

For $q/T = 2$ and $q/T = \ln(qmv^4/s^2)$, we make use of the Lemma \ref{genterm} that relates moments of products of rows to products of moments of rows by taking advantage of either $s$ and $\frac{1}{v^2}$ being sufficiently large. The method essentially uses a counting argument to show that not too many terms vanish as a result of negative correlations, and requires adding in an indicator for the number of nonzero entries in a row being $2$ for some cases (which is sufficient to prove Lemma~$\ref{lower}$). 
\begin{lemma}
\label{genterm}
Suppose $\mathcal{A}_{s,m,n}$ is a uniform sparse JL distribution. If $1 \le T \le q/2$ is an integer, $q/T$ is an even integer, $\frac{1}{v^2}$ is an even integer, and $2Tv^2 \le s$, then:
\[ \norm{\prod_{i=1}^T Z_i(v, \ldots, v, 0, \ldots, 0)}^{1/T}_{q/T} \gtrsim
\begin{cases}
\norm{Z_1(v, \ldots, v, 0, \ldots, 0)}_{2} &\text{ if } T = q / 2 \\
 \norm{Z_1(v, \ldots, v, 0, \ldots, 0) I_{\sum_{i=1}^N \eta_{1,i} = 2}}_{q/T} &\text{ if } 1 \le T \le q/2
\end{cases}.
\]
\end{lemma}
\noindent We defer the proof to \Cref{sec:auxlower}. 

Now we can use Lemma~$\ref{combinerowseasy}$ coupled with Lemma~$\ref{genterm}$ and Lemma~$\ref{lowerboundrow}$ to handle the cases of $q/T = 2, \ln(qmv^4/s^2)$ and obtain the following bounds. For $q / T = 2$, we obtain:
\begin{lemma}
\label{alpha1}
Suppose $\mathcal{A}_{s,m,n}$ is a uniform sparse JL distribution. If $q$ is an even integer, $\frac{qv^2}{s} \le 1$, and $\frac{1}{v^2}$ is an even integer, then it is true that: 
\[\norm{R(v, \ldots, v, 0, \ldots, 0)}_q \gtrsim \left( \frac{q}{m}\right)^{1/2}.\]
\end{lemma}
\begin{proof}[Proof of Lemma~$\ref{alpha1}$]
Take $T = \frac{q}{2}$ and $qv^2 \le s$. By Lemma \ref{genterm} and Lemma \ref{combinerowseasy}, we have that:
\[\norm{R(v, \ldots, v, 0, \ldots, 0)}_q \gtrsim \frac{q}{s} \left( \frac{m}{q}\right)^{1/2} \norm{Z_1(v, \ldots, v, 0, \ldots, 0)}_{2}.\] Now, by Lemma~$\ref{lowerboundrow}$, we can see that $\norm{Z_1(v, \ldots, v, 0, \ldots, 0)}_{2} \gtrsim \frac{s}{m}$. Thus, our bound becomes:
\[\frac{q}{s} \left( \frac{m}{q}\right)^{1/2}\frac{s}{m} = \left( \frac{q}{m}\right)^{1/2}.\] 
\end{proof}
For $q/T = \ln(qmv^4/s^2)$, we similarly obtain the following bound using Lemma~$\ref{combinerowseasy}$ coupled with Lemma~$\ref{genterm}$.
\begin{lemma}
\label{alpha2}
Suppose $\mathcal{A}_{s,m,n}$ is a uniform sparse JL distribution. Suppose that $q$ is a power of $2$, $s \le m/e$,  $2qv^2 \le 0.5 s \ln(qmv^4/s^2)$, $\frac{1}{v^2}$ is even, $2 \le \ln(qmv^4/s^2) \le q$,  and $m \ge q$. Then it is true that:
\[\norm{R(v, \ldots, v,  0, \ldots, 0)}_q  \gtrsim \frac{qv^2}{s \ln(\frac{qmv^4}{s^2})}.\]
\end{lemma}
\begin{proof}
Let's let $f(x)$ be the function that rounds $x$ to the nearest power of $2$. By the conditions, we know that $2 \le f(\ln(qmv^4/s^2)) \le q$. Now, we want the condition $2qv^2 \le s f(\ln(qmv^4/s^2))$ to be satisfied. If $f(\ln(qmv^4/s^2)) \ge \ln(qmv^4/s^2)$, then this is implied by $2qv^2 \le s \ln(qmv^4/s^2) = s \max(\ln(qmv^4/s^2), 2)$, which is a strictly weaker condition than the one given in the lemma statement. If $f(\ln(qmv^4/s^2)) \le \ln(qmv^4/s^2)$, then $f(\ln(qmv^4/s^2)) \ge 0.5\ln(qmv^4/s^2)$ and so $2qv^2 \le 0.5 s \ln(qmv^4/s^2) \le s f(\ln(qmv^4/s^2))$ gives the desired condition.

We use the fact that $\ln(qmv^4/s^2) / 2 \le f(\ln(qmv^4/s^2)) \le 2 \ln(qmv^4/s^2)$. We apply Lemma \ref{genterm} and Lemma \ref{combinerowseasy}, with $T = \frac{q}{f(\ln(qmv^4/s^2))}$ and Lemma~$\ref{lowerboundrow}$ to see that if we have the additional condition that $f(\ln(qmv^4/s^2)) \ge \frac{se}{mv^2}$, then we know that: 
\begin{align*}
\norm{R(v, \ldots, v,  0, \ldots, 0)}_q &\gtrsim \frac{q}{s f(\ln(\frac{qmv^4}{s^2}))} \left( \frac{m}{q}\right)^{1 / f(\ln(\frac{qmv^4}{s^2}))} \norm{Z_1(v, \ldots, v,  0, \ldots, 0) I_{M = 2}}_{f(\ln(\frac{qmv^4}{s^2}))}\\
& \gtrsim  \frac{qv^2}{2 s \ln(\frac{qmv^4}{s^2})}\left( \frac{m}{q}\right)^{1 / f(\ln(\frac{qmv^4}{s^2}))} \left(\frac{s}{m  f(\ln(\frac{qmv^4}{s^2})) v^2}\right)^{2/ f(\ln(\frac{qmv^4}{s^2}))}\\
&= \frac{qv^2}{2 s \ln(\frac{qmv^4}{s^2})}  \left(\frac{s^2}{qmv^4}\right)^{1/ f(\ln(\frac{qmv^4}{s^2}))} \left(\frac{1}{f(\ln(qmv^4/s^2))^2}\right)^{1/f(\ln(qmv^4/s^2))}\\
&\gtrsim \frac{qv^2}{s \ln(\frac{qmv^4}{s^2})}  \left(\frac{s^2}{qmv^4}\right)^{\frac{2}{\ln(\frac{qmv^4}{s^2})}} .\\
&\gtrsim \frac{qv^2}{s \ln(\frac{qmv^4}{s^2})}.\\
\end{align*}

Now, we see that 
\[\frac{mv^2}{se} = \sqrt{\frac{qmv^4}{s^2}} \frac{1}{e} \frac{\sqrt{m}}{\sqrt{q}} \ge \frac{\sqrt{m}}{\sqrt{q}} \ge 1.\] This implies that $\frac{se}{mv^2} \le 1$, so the condition of $f(\ln(qmv^4/s^2)) \ge 2 \ge \frac{se}{mv^2}$ is automatically satisfied. 
\end{proof}
With these bounds, Lemma~$\ref{lower}$ follows.
\begin{proof}[Proof of Lemma~$\ref{lower}$]
We combine Lemma~$\ref{alpha3}$, Lemma~$\ref{alpha1}$, and Lemma~$\ref{alpha2}$.
\end{proof}

\section{Proofs of Auxiliary Lemmas for Lemma~$\ref{upper}$}
\label{sec:auxupper}
First, we use Lemma~$\ref{sumiid}$ and Lemma~$\ref{rowbound}$ to prove a upper bound $\norm{R(x_1, \ldots, x_q)}_q$ that is not quite in the desired form for Lemma~$\ref{upper}$. 
\begin{lemma}
\label{momentupperbound}
Let $2 \le q \le m$ be an even integer and $|x_i| \le v$ and $\norm{x}_2 = 1$. If $\frac{se}{mv^2} \ge q$, then: 
\[\norm{R(x_1, \ldots, x_n)}_q \lesssim \alpha_1(q,v,s,m).\] If $\ln(qmv^2/s) > q$ then we have
\[\norm{R(x_1, \ldots, x_n)}_q \lesssim \max(\alpha_1(q, v, s, m), \alpha_2(q, v, s, m)).\] In all other cases, we have that 
\[\norm{R(x_1, \ldots, x_n)}_q \lesssim \max(\alpha_1(q, v, s, m), \alpha_2(q, v, s, m), \min(\alpha_3(q,v,s,m), \alpha_4(q,v,s,m))).\]
The functions are defined as follows.
\begin{align*}
\alpha_1(q,v, s, m) &= \frac{\sqrt{q}}{\sqrt{m}} \\
\alpha_2(q,v,s,m) &= 
\begin{cases}
\frac{eqv^2}{s\ln(qmv^4/s^2)} & \text{ for } \ln(qmv^4/s^2) \ge 2 \\
\frac{\sqrt{q}}{\sqrt{m}} & \text{ for } \ln(qmv^4/s^2) \le 2 \\
\end{cases} \\
\alpha_3(q,v,s,m) &= 
\frac{qv^2e}{s} \sup_{T \le q, T \ge \max(\frac{se}{mv^2}, 3, \ln(mv^2T/s))} \frac{T}{\ln^2(mv^2T/s)} \left(\frac{s}{qv^2}\right)^{1/T}\\
\alpha_4(q,v,s,m) &= \frac{qe^2}{s \ln(m/s)}
\begin{cases}
1 & \text{ for } \ln(qmv^4/s^2) \ge 2 \\
\left(\frac{s}{qv^2}\right)^{1/\ln(mv^2/s)} & \text{ else } \\
\end{cases} 
\end{align*}
\end{lemma}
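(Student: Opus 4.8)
The plan is to combine the reduction to independent rows in $\eqref{momentupperboundeq}$ with the per-row bounds of Lemma~$\ref{rowbound}$, and then to evaluate a single one-variable supremum. First I would recall that since the $\eta_{r,i}$ are independent across columns and negatively correlated within a column, expanding $\mathbb{E}[(\sum_r Z_r)^q]$ monomial by monomial shows the moment is dominated by that of independent copies $Z'_r \sim Z_r$; hence $s\,\norm{R(x_1,\ldots,x_n)}_q = \norm{\sum_{r=1}^m Z_r}_q \le \norm{\sum_{r=1}^m Z'_r}_q$, and Lemma~$\ref{sumiid}$ gives
\[ s\,\norm{R(x_1,\ldots,x_n)}_q \lesssim \sup_{2 \le T \le q} \frac{q}{T}\left(\frac{m}{q}\right)^{1/T}\norm{Z_1(x_1,\ldots,x_n)}_T . \]
Substituting the three cases of Lemma~$\ref{rowbound}$, with the intermediate case refined by Lemma~$\ref{easybound}$, then reduces the whole statement to estimating this supremum regime by regime.

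Second I would split the feasible range $2 \le T \le q$ along the thresholds appearing in Lemma~$\ref{rowbound}$: the \emph{small-$T$} regime $T \le \max(2,\, se/(mv^2))$ where $\norm{Z_1}_T \lesssim Ts/m$; the regime $\log(Tmv^2/s) > T$ where $\norm{Z_1}_T \lesssim v^2(s/(mTv^2))^{2/T}$; and the intermediate regime $T \ge \max(3,\, se/(mv^2))$ with $\log(Tmv^2/s) \le T$ where $\norm{Z_1}_T \lesssim \min(T^2v^2/\log^2(Tmv^2/s),\, T/\log(m/s))$. In the small-$T$ regime, $\frac{q}{T}(m/q)^{1/T}\cdot Ts/m = q\,(m/q)^{1/T}\cdot s/m$ is decreasing in $T$, so the supremum is at $T=2$ and, dividing by $s$, equals $\alpha_1 \simeq \sqrt{q}/\sqrt{m}$. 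In the regime $\log(Tmv^2/s) > T$, collecting the exponentials gives $\frac{qv^2}{sT}(s^2/(qmT^2v^4))^{1/T}$; a logarithmic derivative in $T$ shows the optimizer is at $T \simeq \log(qmv^4/s^2)$ when that quantity is $\ge 2$ and otherwise at the nearest endpoint, producing $\alpha_2$. In the intermediate regime the feasibility constraint $\log(Tmv^2/s) \le T$ forces $e^T \gtrsim mv^2/s$, hence $(m/q)^{1/T} \lesssim e\,(s/(qv^2))^{1/T}$; the $T^2v^2/\log^2$ branch then yields exactly the quantity $\alpha_3$ (left in $\sup$ form because the optimizing $T$ can be an interior point or either endpoint), while the $T/\log(m/s)$ branch yields $\alpha_4$, the extra $e^2$ coming from bounding $(m/q)^{1/T}=(mv^2/s)^{1/T}(s/(qv^2))^{1/T}$ with $T \gtrsim \log(mv^2/s)$ and $qmv^4/s^2 \gtrsim 1$. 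Since Lemma~$\ref{rowbound}$ takes the minimum of its two intermediate-regime branches, the final bound carries $\min(\alpha_3,\alpha_4)$.

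Finally I would assemble the case analysis as stated: if $se/(mv^2) \ge q$ only the small-$T$ regime is feasible, leaving $\alpha_1$; if $\log(qmv^2/s) > q$ then $\log(Tmv^2/s) > T$ holds throughout the range, leaving $\max(\alpha_1,\alpha_2)$; otherwise all three regimes can occur and we obtain $\max(\alpha_1,\alpha_2,\min(\alpha_3,\alpha_4))$. The main obstacle is bookkeeping rather than depth: one must verify that each regime reduces to the stated expression (a closed form for $\alpha_1,\alpha_2,\alpha_4$, a clean one-parameter supremum for $\alpha_3$) through the interior-versus-endpoint dichotomy of the one-variable optimization, check the gluing at $T=2$, $T=3$, $T=se/(mv^2)$ and at the point where $\log(Tmv^2/s)=T$ so that no spurious larger term appears between adjacent sub-ranges, and carefully propagate the absolute constants (the $e$ and $e^2$ factors accumulated from the $(m/q)^{1/T}$ and $(s/(qv^2))^{1/T}$ terms), making sure the feasibility constraints recorded inside $\alpha_3$ match those in its definition.
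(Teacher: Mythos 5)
Your proposal follows essentially the same route as the paper's proof: the reduction to independent rows via negative correlation plus Lemma~\ref{sumiid}, substitution of the per-row bounds from Lemma~\ref{rowbound}, a regime-by-regime optimization of the one-variable supremum (small $T$ giving $\alpha_1$ at $T=2$, the $\log(Tmv^2/s)>T$ regime giving $\alpha_2$ at $T\simeq\log(qmv^4/s^2)$ or the endpoint, and the intermediate regime giving $\alpha_3$ and $\alpha_4$ after absorbing $(mv^2/s)^{1/T}\le e$), and the same monotonicity argument to decide which regimes are nonempty. The remaining work you flag (endpoint-versus-interior checks, verifying the $T=3$ value is dominated by $\sqrt{q}/\sqrt{m}$, and tracking the $e$, $e^2$ factors) is exactly the bookkeeping the paper carries out, so the plan is sound.
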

\begin{proof}[Proof of Lemma~$\ref{momentupperbound}$]
As we discussed in \Cref{sec:combinerows}, it suffices to bound 
\[\frac{1}{s} \sup_{2 \le T \le q} \frac{q}{T} \left(\frac{m}{q}\right)^{1/t} \norm{Z_1(x_1, \ldots, x_n)}_t.\] Our bounds on $\norm{Z_1(x_1, \ldots, x_n)}_t$ are based on Lemma~$\ref{rowbound}$. We split into cases based on the $T$ value, and how it separates into different cases in Lemma~$\ref{rowbound}$. Let 
\begin{align*}
\beta_1(q,v,s,m) &= \frac{1}{s} \sup_{T = 2, 3 \le T \le \frac{se}{mv^2}}\frac{q}{T} \left(\frac{m}{q}\right)^{1/t} \norm{Z_1(x_1, \ldots, x_n)}_t. \\
\beta_2(q,v,s,m) &= \frac{1}{s}  \sup_{\max(3, \frac{se}{mv^2}) \le T \le \ln(mv^2T/s)}\frac{q}{T} \left(\frac{m}{q}\right)^{1/t} \norm{Z_1(x_1, \ldots, x_n)}_t. \\
\beta_{34}(q,v,s,m) &= \frac{1}{s}  \sup_{T \ge \max(3, \frac{se}{mv^2}, \ln(mv^2T/s))} \frac{q}{T} \left(\frac{m}{q}\right)^{1/t} \norm{Z_1(x_1, \ldots, x_n)}_t. \\
\end{align*}
Let $\beta_3$ branch arise when we use the $\frac{T^2v^2}{\ln(Tmv^2/s)^2}$ for the $\norm{Z_1(x_1, \ldots, x_n)}_t$ bound, and let the $\beta_4$ branch arise when we use $\frac{Tv^2}{s \ln(m/s)}$ for the $\norm{Z_1(x_1, \ldots, x_n)}_t$ bound. Thus, we know that 
\[\beta_{34}(q,v,s,m) \le \min(\beta_3(q,v,s,m), \beta_4(q,v,s,m)).\] 

Let's first consider $\frac{se}{mv^2} \ge q$. In this case, only the $\beta_1$ branch arises. Now, suppose that $\frac{se}{mv^2} < q$. 

Suppose that $\ln(qmv^2 / s) > q$. Then we show that the $\beta_{34}$ branch does not arise. It suffices to show that $\ln(Tmv^2/s) > T$ for all $T \ge \frac{se}{mv^2}$. Let $x = Tmv^2 / s$. It suffices to show that $\frac{s}{mv^2} \frac{x}{\ln x} < 1$ for all $e \le x \le \frac{qmv^2}{s}$. Since $\frac{s}{mv^2} \frac{x}{\ln x} < 1$ at $x = \frac{qmv^2}{s}$ and this is an increasing function of $x$, we know that the condition is true.   

We now produce bounds $\alpha_1(q,v,s,m), \ldots, \alpha_4(q,v,s,m)$ such that $\beta_i(q,v,sm) \lesssim \alpha_i(q,v,s,m)$, which is what we do for the remainder of the analysis.

% When $\frac{se}{mv^2} \ge q \ge T$, we see that only the $\alpha_1(q,v,s,m)$ term arises. When $\ln(Tmv^2/s) \ge T$ (which can be written as $\frac{T}{\ln(Tmv^2/s)} \ge 1$) for all $\frac{se}{mv^2} \le T \le q$, we see that $\alpha_{34}$ does not arise. Let $x = Tmv^2/s$. Then we know that $x \ge e$ based on the condition $T \ge \frac{se}{mv^2}$. The condition $\frac{T}{\ln(Tmv^2/s)} \ge 1$ can be written as $\frac{s}{mv^2} \frac{x}{\ln x}$. This is an increasing function for $x \ge e$, and so if this condition is not met at $T = q$, then it is never going to be met. We write this condition as $\ln(qmv^2/s) \ge q$. In all other cases, we include all three terms.  

First, we handle the $\beta_1(q,v,s,m)$ term. We see that 
\[\beta_1 (q,v,s,m) = \frac{1}{s} \sup_{2 \le T \le \frac{s}{mv^2}} \frac{q}{T} \left(\frac{m}{q}\right)^{1/T} \frac{Ts}{m} = \frac{1}{s} \frac{qs}{m} \left(\frac{m}{q}\right)^{1/T} \le \frac{q}{m} \left(\frac{m}{q}\right)^{1/2} = \frac{\sqrt{q}}{\sqrt{m}}. \]

Now, we handle the $\beta_2(q,v,s,m)$ term. We obtain a bound for $\norm{Z_r}_T \lesssim v^2 \left(\frac{s}{mTv^2}\right)^{2/T}$. The expression becomes:
\begin{align*}
\beta_2(q,v,s,m) &= \frac{1}{s} \sup_{T \ge \max(\frac{se}{mv^2}, 3), T \le \ln(mv^2T/s)} \frac{qv^2}{T} \left(\frac{m}{q}\right)^{1/T} \left(\frac{s}{mTv^2}\right)^{2/T} \\
&= \frac{1}{s} \sup_{T \ge\max(\frac{se}{mv^2}, 3), T \le \ln(mv^2T/s)} \frac{qv^2}{T} \left(\frac{s}{\sqrt{qm}Tv^2}\right)^{2/T} \\
&\le \frac{1}{s} \sup_{T \ge \max(\frac{se}{mv^2}, 3), T \le \ln(mv^2T/s)} \frac{qv^2}{T} \left(\frac{s^2}{qmv^4}\right)^{1/T}.
\end{align*}
Suppose that $\ln(qmv^4/s^2) \ge 2$. In this case, we have that this expression is upper bounded by $T = \ln(qmv^4/s^2)$. When we plug this into the expression, we obtain $\frac{qv^2}{s\ln(qmv^4/s^2)}$. Otherwise, if $\ln(qmv^4 / s^2) \le 2$, then this expression is upper bounded by $T = 3$: 
\[\frac{qv^2}{s} \left(\frac{s^2}{qmv^4}\right)^{1/3} = \frac{C_1 C_5 q^{2/3}v^{2/3}}{s^{1/3}m^{1/3}}.\] We know that that $\frac{q^{2/3} v^{2/3}}{s^{1/3}m^{1/3}} \le \frac{\sqrt{q}}{\sqrt{m}}$ because this reduces to 
\[\frac{q^{1/6} v^{2/3}m^{1/6}}{s^{1/3}} = \left(\frac{qmv^4}{s^2}\right)^{1/6} \le e^{1/3}.\]

Now, we handle the $\beta_4(q,v,s,m)$ term when $\ln(qmv^2 / s) \le q$. 
\begin{align*}
\beta_4(q,v,sm) &= \frac{1}{s} \sup_{T \ge \max(\frac{se}{mv^2}, 3, \ln(mv^2T/s))} \frac{q}{T} \left(\frac{m}{q}\right)^{1/T} \frac{T}{\ln(m/s)} \\
&\le \sup_{T \ge \max(\frac{se}{mv^2}, 3, \ln(mv^2T/s))} \frac{q}{s\ln(m/s)} \left(\frac{mv^2}{s}\right)^{1/T}  \left(\frac{s}{qv^2}\right)^{1/T} \\
&\le \frac{qe}{s\ln(m/s)}  \sup_{T \ge \max(\frac{se}{mv^2}, 3, \ln(mv^2T/s))} \left(\frac{s}{qv^2}\right)^{1/T} \\
\end{align*}
If $s \le qv^2$, this is bounded by $1$, and if $s \ge qv^2$, this is bounded by $\left(\frac{s}{qv^2}\right)^{1/\ln(mv^2/s)}$. We see that $\frac{s}{qv^2} \le \frac{mv^2}{s}$, so $\left(\frac{s}{qv^2}\right)^{1/\ln(mv^2/s)} \le \left(\frac{mv^2}{s}\right)^{1/\ln(mv^2/s)} \le e$.    Thus this is bounded by $\frac{qe^2}{s\ln(m/s)}$. 

Now, we handle the $\beta_3(q,v,s,m)$ term. In this case, the expression becomes:
\begin{align*}
\beta_3(q,v,s,m) &=\frac{1}{s} \sup_{T \ge \max(\frac{se}{mv^2}, 3, \ln(mv^2T/s))} \frac{qv^2}{T} \left(\frac{m}{q}\right)^{1/T} \frac{T^2}{\ln^2(mv^2T/s)} \\
&\le \sup_{T \ge \max(\frac{se}{mv^2}, 3, \ln(mv^2T/s))} \frac{qv^2T}{s\ln^2(mv^2T/s)} \left(\frac{mv^2}{s}\right)^{1/T}  \left(\frac{s}{qv^2}\right)^{1/T} \\
&\le \frac{qv^2e}{s} \sup_{T \ge \max(\frac{se}{mv^2}, 3, \ln(mv^2T/s))} \frac{T}{\ln^2(mv^2T/s)} \left(\frac{s}{qv^2}\right)^{1/T}
\end{align*} 
\end{proof}

We use some function bounding arguments to come with a simpler bound for $\alpha_3$ for sufficiently large $v$. 
\begin{lemma}
\label{assumptionupperbound}
Assume that $C_2q^3mv^4 \ge s^2$ for some $C_2 \ge 1$. Then it is true that
\[\frac{qv^2e}{s} \sup_{T \le q, T \ge \frac{se}{mv^2}, 3, \ln(mv^2T/s)} \frac{T}{\ln^2(mv^2T/s)} \left(\frac{s}{qv^2}\right)^{1/T} \le \frac{C_2^{1/3}q^2v^2e^{5}}{s\ln^2(mv^2q/s)}.\]
\end{lemma}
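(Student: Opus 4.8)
\emph{Setup and reduction.} Write $a := mv^2/s$ and $b := s/(qv^2)$. Then the quantity being supremized is $g(T) := \frac{T}{\log^2(aT)}\,b^{1/T}$, the range of $T$ is $\{T : 3 \le T \le q,\ aT \ge e,\ \log(aT) \le T\}$ (I treat $T$ as a real variable, which only enlarges the supremum), and the target $\frac{C_2^{1/3}q^2v^2e^5}{s\log^2(mv^2q/s)}$ equals $\frac{qv^2e}{s}\cdot\frac{C_2^{1/3}e^4 q}{\log^2(aq)}$. So it suffices to show $g(T) \le \frac{C_2^{1/3}e^4 q}{\log^2(aq)}$ for every admissible $T$. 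The hypothesis $C_2 q^3 m v^4 \ge s^2$ is exactly the statement $b \le C_2 q^2 a$, and this is the only way I would use it.

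\emph{Peeling off the exponential factor.} First I would strip $b^{1/T}$ cheaply: from $b \le C_2 q^2 a$ and $T \ge 3$, $C_2 \ge 1$ we get $b^{1/T} \le C_2^{1/3} q^{2/T} a^{1/T}$; and the constraint $\log(aT) \le T$ forces $\log a \le T - \log T \le T$ (trivially if $a < 1$), hence $a^{1/T} \le e$ and $b^{1/T} \le C_2^{1/3} e\, q^{2/T}$. It therefore remains to prove the ``scalar'' inequality $\frac{T q^{2/T}}{\log^2(aT)} \le \frac{e^3 q}{\log^2(aq)}$, i.e.\ $P(T) := \frac{T q^{2/T}}{q}\cdot\frac{\log^2(aq)}{\log^2(aT)} \le e^3$.

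\emph{A one-variable estimate.} Since $\log(aq) = \log(aT) + \log(q/T)$ and $\log(aT) \ge 1$ (from $aT \ge e$), we have $\frac{\log(aq)}{\log(aT)} \le 1 + \log(q/T)$. Substituting $u := q/T \in [1, q/3]$ reduces the task to showing $\Phi(u) := \frac{q^{2u/q}(1+\log u)^2}{u} \le e^3$ on $[1,q/3]$. I would bound $\Phi$ in two steps. (i) The exponent of $q^{2u/q}u^{-2/3}$, namely $\tfrac{2u\log q}{q} - \tfrac23\log u$, is convex in $u$, so it is maximized at an endpoint of $[1,q/3]$; its values there are $\tfrac{2\log q}{q}$ and $\tfrac23\log 3$, and for $q\ge 3$ the former is at most the latter, so $q^{2u/q}u^{-2/3} \le 3^{2/3}$ and hence $\Phi(u) \le 3^{2/3}(1+\log u)^2 u^{-1/3}$. (ii) An elementary maximization of $(1+\log u)^2 u^{-1/3}$ over $u\ge 1$ (peak at $u=e^5$) gives $(1+\log u)^2 u^{-1/3} \le 36\,e^{-5/3}$, so $\Phi(u) \le 36\cdot 3^{2/3} e^{-5/3} < e^3$. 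Combining everything, $g(T) \le C_2^{1/3} e\cdot P(T)\cdot\frac{q}{\log^2(aq)} \le \frac{C_2^{1/3}e^4 q}{\log^2(aq)}$, and multiplying by $\frac{qv^2e}{s}$ yields the lemma.

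\emph{The main obstacle.} The delicate point is that the cheap bound $b^{1/T} \le (C_2 q^2 a)^{1/T}$ leaves a factor $q^{2/T}$ that is \emph{not} a constant when $T$ is small, so one cannot bound the polynomial part $\frac{T}{\log^2(aT)}$ and the exponential part $b^{1/T}$ in isolation — doing so loses a factor like $q^{2/3}$. The fix is to keep $T q^{2/T}$ together, use \emph{both} $T \le q$ and the active constraint $\log(aT) \le T$ to collapse to the single quantity $\Phi(q/T)$, and then apply the convexity trick $q^{2u/q} \le 3^{2/3}u^{2/3}$, after which the surviving $(1+\log u)^2 u^{-1/3}$ is genuinely bounded. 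Everything else is routine function-bounding.
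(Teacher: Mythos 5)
Your proposal is correct, and after the shared first step it takes a genuinely different route from the paper's proof. Both arguments begin identically: write $\frac{s}{qv^2} \le C_2 q^2\cdot\frac{mv^2}{s}$, use $T\ge 3$, $C_2\ge 1$ to pull out $C_2^{1/3}$, and use the active constraint $T\ge\log(mv^2T/s)$ to absorb $\left(\frac{mv^2}{s}\right)^{1/T}\le e$, leaving the core problem of bounding $\sup_T \frac{Tq^{2/T}}{\log^2(aT)}$ by a constant times $\frac{q}{\log^2(aq)}$ with $a=mv^2/s$. The paper handles this by splitting at $T\approx\log q$: for large $T$ it kills $q^{2/T}\le e^2$ and maximizes $\frac{T}{\log^2(aT)}$ via the substitution $Q=aT$ and monotonicity of $Q/\log^2 Q$; for small $T$ it shows by a derivative computation that the objective is decreasing, evaluates at $T_{\min}$, and then needs a separate case analysis on $s/(mv^2)\gtrless 1$ to verify $\log^2(aq)/\log^2(aT_{\min})\le q^{1/4}$. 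You instead eliminate $a$ entirely through the inequality $\frac{\log(aq)}{\log(aT)}\le 1+\log(q/T)$ (valid since $\log(aT)\ge 1$), substitute $u=q/T\in[1,q/3]$, and reduce everything to the single explicit function $\Phi(u)=q^{2u/q}(1+\log u)^2/u$, which you bound by the convexity/endpoint estimate $q^{2u/q}u^{-2/3}\le 3^{2/3}$ together with the elementary maximum $(1+\log u)^2u^{-1/3}\le 36e^{-5/3}$ at $u=e^5$; the product $3^{2/3}\cdot 36e^{-5/3}<e^3$ gives exactly the stated constant $e^5$ after recombining. Your route buys a cleaner one-variable argument with no case split on the sign of $\log a$, no appeal to where the supremum is attained, and fully tracked constants (the paper is looser on this point); the paper's route stays closer in style to the other endpoint/monotonicity computations in its appendix but is more bookkeeping-heavy. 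I see no gap in your argument; note only that you implicitly assume $q\ge 3$, which is harmless since otherwise the supremum is over an empty range, and that treating $T$ as a real variable only enlarges the supremum, as you say.
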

\begin{proof}[Proof of Lemma~$\ref{assumptionupperbound}$]
With the assumptions that we made we know that $\frac{s}{q^3v^2C_2^2} \le \frac{mv^2}{s}$. This implies that our expression becomes:
\begin{align*}
E &= \frac{qv^2e}{s} \sup_{T \le q, T\ \ge \max(\frac{se}{mv^2}, 3, \ln(mv^2T/s))} \frac{T}{\ln^2(mv^2T/s)} \left(\frac{s}{qv^2}\right)^{1/T} \\
 &= \frac{qv^2e}{s} \sup_{T \le q, T \ge \max(\frac{se}{mv^2}, 3, \ln(mv^2T/s))} C_2^{1/T} \frac{T}{\ln^2(mv^2T/s)} \left(\frac{s}{C_2q^3v^2}\right)^{1/T} q^{2/T} \\
&\le \frac{qv^2e^2}{s} C_2^{1/3}\sup_{T \le q, T \ge \max(\frac{se}{mv^2}, 3, \ln(mv^2T/s))} \frac{T}{\ln^2(mv^2T/s)} q^{2/T} \\.
\end{align*}
It suffices to show that $\sup_{T \le q, T \ge \max(\frac{se}{mv^2}, 3, \ln(mv^2T/s))} \frac{T}{\ln^2(mv^2T/s)} q^{2/T} \le \frac{qe^3}{\ln^2(mv^2q/s)}$. 

Let $T_{min}$ be the minimum $T$ such that $T \ge \max(\frac{se}{mv^2}, 3, \ln(mv^2T/s))$. We just need to bound 
\begin{align*}
\sup_{T_{min} \le T \le q} \frac{T q^{2/T}}{\ln^2(mv^2T/s)} &\le \max\left(\sup_{T_{min} \le T \le \ln q} \frac{T q^{2/T}}{\ln^2(mv^2T/s)}, \sup_{\max(T_{min}, \ln q) \le T \le q} \frac{T q^{2/T}}{\ln^2(mv^2T/s)}  \right) \\
&\le \max\left(\sup_{T_{min} \le T \le \ln q} \frac{T q^{2/T}}{\ln^2(mv^2T/s)}, e^{2} \sup_{\max(T_{min}, \ln q) \le T \le q} \frac{T}{\ln^2(mv^2T/s)}  \right)
\end{align*}
First, we handle the second term. Let $Q = \frac{mv^2T}{s}$. We use that $T_{min} \ge \frac{se}{mv^2}$, so $\frac{mv^2T_{min}}{s} \ge e$ to conclude $Q \ge e$. We see that 
\[e^{2} \sup_{\max(T_{min}, \ln q) \le T \le q} \frac{T}{\ln^2(mv^2T/s)} \le e^{2} \frac{s}{mv^2} \sup_{e \le Q \le  \frac{qmv^2}{s}} \frac{Q}{\ln^2(Q)}.\] We see that setting $Q$ to its maximum value achieves within a factor of $e$ of the maximum value of $\frac{Q}{\ln^2(Q)}$. Thus, we obtain that this is upper bounded by $e^{3} \frac{q}{\ln^2(mv^2q/s)}$. 

Now, we just need to handle the first term. If $T_{min} \ge \ln q$, then this term doesn't exist. Let's take a log of the expression to obtain:
\[\ln\left(\frac{T}{\ln^2(mv^2T/s)}  \right) = \ln T - 2 \ln \ln (mv^2T/s) + \frac{2}{T} \ln(q)).\]
The derivative is:
\[\frac{1}{T} - \frac{2}{T\ln(mv^2T/s)} - \frac{2}{T^2} \ln(q).\] The sign of the derivative is the same as:
\[1 - \frac{2}{\ln(mv^2T/s)} - \frac{2 \ln q}{T}.\]Since $T_{min} \ge \frac{se}{mv^2}$, we know that $\ln(mv^2T/s) \ge 0$. Thus, we know that $1 - \frac{2}{\ln(mv^2T/s)} \le 1$. Since $T \le \ln q$, we know that $\frac{\ln q}{T} \ge 1$, so $- \frac{2 \ln q}{T} \le - 2$. Thus, the derivative is negative, so the $\sup$ is attained at $T_{min} = T$, where the expression is: 
\[e^{3} \frac{T_{min} q^{2/T_{min}}}{\ln^2(mv^2T_{min}/s)} \le e^3 \frac{(\ln q) q^{2/3}}{\ln^2(mv^2T_{min}/s)} \le e^3 \frac{q^{3/4}}{\ln^2(mv^2T_{min}/s)}.\] Thus, to upper bound by $ \frac{q}{\ln^2(mv^2q/s)}$, it suffices to show:
\[\frac{\ln^2(mv^2q/s)}{\ln^2(mv^2T_{min}/s)} \le q^{0.25}.\]
If  $\frac{s}{mv^2} \le 1$, the ratio is at most 
\[\frac{\ln(mv^2q/s)}{\ln(mv^2T_{min}/s) } \le \frac{\ln(mv^2/s) + \ln q}{\ln(mv^2/s) + \ln T_{min}} \le \frac{\ln q }{\ln e} = \ln q \le q^{0.25}.\]
If $\frac{s}{mv^2} \ge 1$, then $qmv^2/s \le q$. Using this and $\frac{mv^2T_{min}}{s} \ge e$, we know: 
\[\frac{\ln(mv^2q/s)}{\ln(mv^2T_{min}/s) } \le \frac{\ln(q)}{\ln(e)} = \ln q \le q^{0.25}.\] 
\end{proof}

Now, we combine Lemma~$\ref{momentupperbound}$ and Lemma~$\ref{assumptionupperbound}$ to prove Lemma~$\ref{upper}$.
\begin{proof}[Proof of Lemma~$\ref{upper}$]
First, we compute the second moment by hand: 
\begin{align*}
\mathbb{E}[R(x_1, \ldots, x_n)]^2 &= \frac{1}{s^2} \mathbb{E}\left[\left(\sum_{r=1}^m \sum_{i \neq j} \eta_{r,i} \eta_{r,j} \sigma_{r,i} \sigma_{r,j} x_i x_j\right)^2\right] \\
&= \frac{2}{s^2} \mathbb{E}\left[\sum_{r=1}^m \sum_{i \neq j} \eta_{r,i} \eta_{r,j} x^2_i x^2_j\right] \\
&\le \frac{2}{m} \left(\sum_{i=1}^n x_i^2\right)^2 \\
&= \frac{2}{m}.
\end{align*}
For $2 < q \le m$, we apply Lemma~$\ref{momentupperbound}$ and Lemma~$\ref{assumptionupperbound}$. We only include $\alpha_4$ when $\ln(qmv^4/s^2) \ge 2$ to simplify the bound. The bound follows. 
\end{proof}

\section{Proof of Auxiliary Lemma for Lemma~$\ref{lower}$}
\label{sec:auxlower}
We prove Lemma~$\ref{genterm}$.
\begin{proof}[Proof of Lemma~$\ref{genterm}$]
First, we show the following fact: Suppose that there are $T$ distinguishable buckets and we want to a assign an ordered pair of $2$ unequal elements in $[N]$ to each bucket so that the total number of times that any element $i \in [N]$ shows up is $\le s$. We show that the number of such assignments is at least $C^T N^{2T}$ for some constant $C$. To prove this, we first consider the case where $N \ge 2T$. In this case, we have that the number of such assignments is at least:
\[N(N-1)(N-2) \ldots (N-2T + 1) \geq C_1^{2T} N^{2T}.\] Now, if $N < 2T$, then we define:
\[\beta = \ceil{\frac{2T}{N}} = \ceil{2Tv^2} \le s.\] We partition $2T$ into $\beta$ blocks, each of size $N$, until potentially the last block, which may be smaller. We can read off ordered pairs assigned to each bucket from this formulation. Let's assume that each block is a permutation of $1, \ldots, N$, and the last block is $2T - (\beta - 1)(N)$ non-equal numbers drawn from $1, \ldots, N$. (this satisfies the unequal ordered pair condition). Then the number of assignments is $(N!)^{\beta - 1} \cdot (N)(N-1) \ldots (N- (2T - (\beta - 1)(N)) + 1)$. This is at least as big as $C_1^{2T} N^{2T}$ for some constant $C_1$. 

First, we handle the case where $q / T = 2$. Since we have a uniform sparse JL distribution, we know that for $1 \le x \le s$:
\[\mathbb{E}[\eta_{1,1} \ldots \eta_{x,1}] \geq \frac{s(s-1) \ldots (s-x + 1)}{(m)(m-1) \ldots (m-x+1)} \geq C_2^{-x} \left(\frac{s}{m}\right)^x.\] We know that
\[Z_r^{2} = 2 \left(\sum_{i \neq j} \eta_{r,i} \eta_{r, j}\right) + Y_r,\] where $Y_r$ has expectation $0$.  In this case we have that 
\[Z_1^2 \ldots Z_T^2 = 2^T \left(\sum_{i_1 \neq j_1, \ldots, i_T \neq j_T} \prod_{k=1}^T \eta_{k,i_k} \eta_{k, j_k} \right) + Q.\] where $Q$ consists of terms that contain a factor of some $Y_r$. Due to the independence of the Rademachers, the expectation of any term that contains a factor of $Y_r$ has expectation $0$, which implies that:
\[\mathbb{E}[Z_1^2 \ldots Z_T^2] = v^{2T} 2^T \mathbb{E}\left[\left(\sum_{i_1 \neq j_1, \ldots, i_T \neq j_T} \prod_{k=1}^T \eta_{k,i_k} \eta_{k, j_k} \right)\right].\] 
Let $\eta'_{r,i} \sim \eta_{r,i}$ be \textit{independent} random variables. Suppose that 
\[Z'_r := v^{2T} \sum_{i \neq j} \eta'_{r,i} \eta'_{r,j} \sigma_{r,i} \sigma_{r,j}.\]  We know that
\[Z_r^{'2} = 2 \left(\sum_{i \neq j} \eta'_{r,i} \eta'_{r, j}\right) + Y'_r,\] where $Y'_r$ has expectation $0$. This means that:
\[Z_1^{'2} \ldots Z_T^{'2} = v^{2T} 2^T \left(\sum_{i_1 \neq j_1, \ldots, i_T \neq j_T} \prod_{k=1}^T \eta'_{k,i_k} \eta'_{k, j_k} \right) + Q' \] where $Q'$ consists of terms that contain a factor of some $Y'_r$. For similar reasons, this implies that 
\[\mathbb{E}[Z_1^{'2} \ldots Z_T^{'2}] = v^{2T} 2^T \mathbb{E}\left[\left(\sum_{i_1 \neq j_1, \ldots, i_T \neq j_T} \prod_{k=1}^T \eta'_{k,i_k} \eta'_{k, j_k} \right)\right].\]
Let's view $\prod_{k=1}^T \eta'_{k,i_k} \eta'_{k, j_k}$ and $\prod_{k=1}^T \eta_{k,i_k} \eta_{k, j_k}$ as terms in a sum. In the second expression, every term has expectation $\left(\frac{s}{m}\right)^{2T}$, and there are at most $N^{2T}$ terms. In the first expression, if there are $> s$ copies of any $i_k$ value, then the expectation is $0$. Otherwise, the expectation varies between $C_2^{-2T} \left(\frac{s}{m}\right)^{2T}$ and $\left(\frac{s}{m}\right)^{2T}$. By the counting argument at the beginning of the proof, we know that there are at least $C_1^{2T} N^{2T}$ terms. This implies that 
\[\norm{Z_1 \ldots Z_T}_2 \gtrsim C^T \norm{Z'_1 \ldots Z'_T}_2 = C^T \norm{Z'_1}^{T}_2 = C^T \norm{Z_1}^{T}_2\] as desired. 

Now, we handle the case of the general $q / T$. Since we have a uniform sparse JL distribution, we know that for $1 \le x \le s$:
\[\mathbb{E}[\eta_{1,1} \ldots \eta_{x,1}] \geq \frac{s(s-1) \ldots (s-x + 1)}{(m)(m-1) \ldots (m-x+1)} \geq C_2^{-x} \left(\frac{s}{m}\right)^x.\]  We know that
\[(Z_r)^{q/T} =  2^{q/T - 1} \sum_{i \neq j} \left(\eta_{r,i} \eta_{r, j}\right)^{q/T} + Y_r,\] where $Y_r$ has expectation $\ge 0$. In this case we have that 
\[(Z_1\ldots Z_T)^{q/T} = 2^{q - T} \left(\sum_{i_1 \neq j_1, \ldots, i_T \neq j_T} \prod_{k=1}^T \left(\eta_{k,i_k} \eta_{k, j_k}\right)^{q/T}  \right) + Q.\] where $Q$ has expectation $\ge 0$. This implies that:
\[\mathbb{E}[Z_1^{q/T} \ldots Z_T^{q/T}] \ge v^{2T} 2^{q - T} \mathbb{E}\left[\left(\sum_{i_1 \neq j_1, \ldots, i_T \neq j_T} \prod_{k=1}^T \left(\eta_{k,i_k} \eta_{k, j_k}\right)^{q/T}  \right)\right].\] 
Let $\eta'_{r,i} \sim \eta_{r,i}$ be \textit{independent} random variables, and let $M'_r = \sum_{i=1}^N \eta'_{r,i}$. Suppose:
\[Z'_r := v^{2T} \sum_{i \neq j} \eta'_{r,i} \eta'_{r,j} \sigma_{r,i} \sigma_{r,j}.\] We know that
\[(Z'_rI_{M'_r = 2})^{q/T} =  2^{q/T - 1} \sum_{i \neq j} \left(\eta'_{r,i} \eta'_{r, j} I_{M'_r = 2}\right)^{q/T} + Y'_r,\] where $Y'_r$ has expectation $0$. In this case we have that 
\[(Z'_1I_{M'_1 =2} \ldots Z'_TI_{M'_T =2})^{q/T} = 2^{q - T} \left(\sum_{i_1 \neq j_1, \ldots, i_T \neq j_T} \prod_{k=1}^T \left(\eta'_{k,i_k} \eta'_{k, j_k} I_{M'_k = 2}\right)^{q/T}  \right) + Q'.\] where $Q'$ consists of terms that contain a factor of some $Y'_r$. For similar reasons to the above, we have that:
\[\mathbb{E}[Z_1^{'q} \ldots Z_T^{'q}] = v^{2T} 2^{q - T} \mathbb{E}\left[\left(\sum_{i_1 \neq j_1, \ldots, i_T \neq j_T} \prod_{k=1}^T \left(\eta'_{k,i_k} \eta'_{k, j_k} I_{M'_k = 2}\right)^{q/T}  \right)\right].\] 
Let's view  $\prod_{k=1}^T \left(\eta_{k,i_k} \eta_{k, j_k} \right)^{q/T}$ and $\prod_{k=1}^T \left(\eta'_{k,i_k} \eta'_{k, j_k} I_{M'_k = 2}\right)^{q/T}$ as terms in a sum. In the second expression, every term has expectation $\le \left(\frac{s}{m}\right)^{2T}$ (the indicator can only \textit{reduce} the expectation), and there are at most $N^{2T}$ terms. In the first expression, if there are $> s$ copies of any $i_k$ value, then the expectation is $0$. Otherwise, the expectation varies between $C_2^{-2T} \left(\frac{s}{m}\right)^{2T}$ and $\left(\frac{s}{m}\right)^{2T}$. By the counting argument, we know that there are at least $C_1^{-2T} N^{2T}$ terms. This implies that 
\[\norm{Z_1 \ldots Z_T}_{q/T} \gtrsim C^T \norm{Z'_1 I_{M'_1 = 2} \ldots Z'_T I_{M'_T = 2}}_{q/T} = C^T \norm{Z'_1 I_{M'_1} = 2}^{T}_{q/T} = C^T \norm{Z_1 I_{M_1 = 2}}^{T}_{q/T}\] as desired. 
\end{proof}

% \section{Proof of Theorem \ref{lowerbound}, Theorem \ref{upperbound} and Corollary \ref{dimsparsitylower}}\label{sec:mainresultsfullproofs}

\section{Proof of Lemma \ref{upperfinal} and Lemma \ref{lowerupperfinal}}\label{sec:functionbounding}
Recall that our proof of Theorem \ref{thm:mainresult} requires cleaner bounds on moments of $\norm{R(x_1, \ldots, x_n)}_q$ that follow simplifying the bounds in Lemma~$\ref{upper}$ and Lemma~$\ref{lower}$ at the target values of $v$. The proofs of these lemmas boil down to function bounding and simplification.

% \section{Proofs of Lemma~$\ref{upperfinal}$ and Lemma~$\ref{lowerupperfinal}$}
% \label{sec:lowerupperfinal}

\subsection{Proof of Lemma \ref{upperfinal}}
First, we show how Lemma~$\ref{upper}$ implies Lemma~$\ref{upperfinal}$. The proof involves simplifying and bounding the function at the target $v$ value. 
\begin{proof}[Proof of Lemma~$\ref{upperfinal}$]
We plug $q = p$ into Lemma~$\ref{upper}$. We use this relaxed version of the bound: If $\frac{se}{mv^2} \ge q$, then $\norm{R(x_1, \ldots, x_n)}_q \lesssim \frac{\sqrt{q}}{\sqrt{m}}$. Otherwise, if  there exists $C_2 q^3mv^4 \ge s^2$, then
\[\norm{R(x_1, \ldots, x_n)}_q \lesssim 
\begin{cases} 
\max\left(\frac{\sqrt{q}}{\sqrt{m}}, \frac{C_2^{1/3}q^2v^2}{s \ln^2(qmv^2/s)}\right) & \text{ if } \ln(qmv^4/s^2) \le 2\\ 
\max\left(\frac{\sqrt{q}}{\sqrt{m}}, \frac{qv^2}{s\ln(qmv^4/s^2)}, \min\left( \frac{C_2^{1/3}q^2v^2}{s \ln^2(qmv^2/s)}, \frac{q}{s\ln(m/s)}\right)\right) & \text{ if } \ln(qmv^4/s^2) > 2\\
\end{cases}
\] Suppose that the absolute constant on the upper bounds is $\le C'$. Let $C = \max(C', 1)$ (we take $C$ to be the constant on the upper bounds). Let's take $C_{v,2} = \frac{0.25}{\sqrt{C}}$, $C_{v,1} = \min\left( \frac{0.1}{C^{3/2}}, C_{v,2}\right)$, $C_S = 4C$, $C_M = \max\left(e^{\frac{1}{C_{v,1}^2}}, 16C^2, e^{\frac{1}{C_{v,2}^2}}, e^2 \right)$. For the remainder of the analysis, we assume that $m \ge C_M \epsilon^{-2} p$ and $m < 2 \epsilon^{-2} \delta$. 

First, observe $m \ge 16C^2 \epsilon^{-2}p$ gives us that $C \frac{\sqrt{p}}{\sqrt{m}} \le 0.25 \epsilon$ regardless of $v$. 

Now, let $f_1 =  C_{v,1} \sqrt{\epsilon s}\frac{\ln(\frac{m \epsilon}{p})}{p}$, and $f_2 = \frac{C_{v,2} \sqrt{\epsilon s} \sqrt{\ln \frac{m \epsilon^{2}}{p}}}{\sqrt{p}}$. 

First, let's analyze $v = f_2$. We show that $\ln(pmf_2^4/s^2) \ge 2$. Observe that $\ln(pmf_2^4/s^2) = \ln(C_{v,2}^4 \ln^2(m \epsilon^{2}/p)) + \ln(m\epsilon^2/p)$. Using the fact that $m \ge e^{\frac{1}{C_{v,2}^2}} \epsilon^{-2} p$, we see that 
\[C_{v,2}^4 \ln^2(m \epsilon^{2}/p) \ge C_{v,2}^4 \frac{1}{C_{v,2}^4} = 1.\] Now, since $m \ge e^2 \epsilon^{-2} p$, this implies that 
\[\ln(pmf_2^4/s^2) = \ln(C_{v,2}^4 \ln^2(m \epsilon^{2}/p)) + \ln(m\epsilon^2/p) \ge \ln(m \epsilon^{2} /p) \ge 2. \] Moreover, we know that $p^3 mf_2^2 \ge s^2$, since $pmv^4 \ge e^2 s^2$. Now, we show that $C \frac{pf_2^2}{s \ln(pmf_2^4/s^2)} \le 0.25 \epsilon$.  Let's observe that 
\[\frac{Cpv^2}{s \ln(pmf_2^4/s^2)} \le \frac{C C_{v,2}^2 \epsilon \ln(\frac{m \epsilon^{2}}{p})}{\ln(\frac{m \epsilon^{2}}{p})} = C C_{v,2}^2 \epsilon  \] Since $C_{v,2} = \frac{0.25}{\sqrt{C}}$, we get a bound of $0.25 \epsilon$. 

Now, we handle the case where $m \ge s \cdot e^{\frac{C_S p \epsilon^{-1}}{s}}$. We first show that $C \frac{p}{s \ln(m/s)} \le 0.25\epsilon$. If $s \ge \Theta(\epsilon^{-1} \ln(1/\delta))$, using that $m \ge se$, this immediately follows from $\frac{p}{s \ln(m/s)} \le \frac{p}{s} \le 0.25 \epsilon$. Otherwise, we need it to be true that $s \ln(m/s) \ge 4 C p \epsilon^{-1}$. This can be written as 
$\ln(m/s) \ge \frac{4Cp\epsilon^{-1}}{s}$. Since $C_S = 4C$, this can be written as: $m \ge s \cdot e^{\frac{C_S p \epsilon^{-1}}{s}}$, as desired. This, combined with the above analysis, implies that when $m \ge s \cdot e^{\frac{C_S p \epsilon^{-1}}{s}}$, taking $v = f_2$:
\begin{align*}
\norm{R(x_1, \ldots, x_n)}_q &\le C \max\left(\frac{\sqrt{q}}{\sqrt{m}}, \frac{qv^2}{s\ln(qmv^4/s^2)}, \min\left( \frac{C_2^{1/3}q^2v^2}{s \ln^2(qmv^2/s)}, \frac{q}{s\ln(m/s)}\right)\right) \\
&\le C \max\left(\frac{\sqrt{q}}{\sqrt{m}}, \frac{qv^2}{s\ln(qmv^4/s^2)}, \frac{q}{s\ln(m/s)}\right)  \\
&\le 0.25\epsilon. 
\end{align*}

Now, we just need to handle the case where $m \le s \cdot e^{C_S p \epsilon^{-1}/s}$, $m \ge \Theta(\epsilon^{-1} \ln(1/\delta))$, $m \ge se$. Such values only exist if $s \le \Theta(\epsilon^{-1} \ln(1/\delta))$. 
Observe that we can set $C_2 = \frac{1}{C_{v,1}^4}$ and using the fact that $C_{v,1} \le C_{v,2}$, we obtain that \[\frac{C_2p^3mv^4}{s^2} \ge \frac{C_2p^3m}{s^2} \min\left(C_{v,1}, C_{v,2}\right)^4 \left(\frac{\sqrt{\epsilon s}}{p} \right)^4 = C_2 C_{v,1}^4 \frac{m \epsilon^2}{p} \ge C_2 C_{v,1}^4.\] Thus, this is lower bounded by $1$ when $C_2 = \frac{1}{C_{v,1}^4}$. 

First, we analyze the case of $v = f_1$. We show that $\frac{C C_2^{1/3}p^2v^2}{s\ln^2(pmv^2/s)} \le 0.1 \epsilon$. Observe that 
\[\frac{C C_2^{1/3}p^2v^2}{s\ln^2(pmv^2/s)} = \frac{\epsilon C C_2^{1/3} C_{v,1}^2 \ln^2(\frac{m \epsilon}{p})}{\ln^2(\frac{C_{v,1}^2 m \epsilon\ln^2(\frac{m \epsilon}{p})}{p})} = \frac{\epsilon C C_2^{1/3} C_{v,1}^2 \ln^2(\frac{m \epsilon}{p})}{\left(\ln(\frac{m \epsilon}{p}) + \ln(C_{v,1}^2 \ln^2(\frac{m \epsilon}{p})\right)^2}.\] Now, since $m \ge e^{1/C_{v,1}^2} \epsilon^{-2} p$, we know that $\ln(C_{v,1}^2 \ln^2(\frac{m \epsilon}{p})) \ge 0$. Thus we can bound the above expression by: 
\[\frac{\epsilon C C_{v,1}^{2/3} \ln^2(\frac{m \epsilon}{p})}{\ln^2(\frac{m \epsilon}{p})} = \epsilon C C_{v,1}^{2/3} \epsilon \le 0.1 \epsilon,\] where the last inequality uses the fact that $C_{v,1} \le \frac{0.1}{C^{3/2}}$.  

Let's now consider how the term $\frac{pv^2}{s \ln(pmv^4/s^2)}$ how changes as a function of $v$. This term only arises in the bound if $\ln(pmv^4/s^2) \ge 2$. First, we show this is an increasing function of $v$. Let $w = pmv^4/s^2$. We see that $\frac{pv^2}{s \ln(pmv^4/s^2)} = \frac{s}{\sqrt{pm}} \frac{ \sqrt{w}}{\ln w}$. We observe that this is an increasing function of $w$ as long as $w \ge e^2$, which is exactly our restriction on $w$. Thus, $\frac{pv^2}{s \ln(pmv^4/s^2)}$ is an increasing function of $v$ in this range. 

Now, we consider how the $\frac{C_2^{1/3}p^2v^2}{s \ln^2(pmv^2/s)}$ term changes a function of $v$. This term only arises in the bound if $\ln(pmv^2/s) \ge 1$. First, we show that $f(v) \le 2 f(v')$ if $v \le v'$. Let $w = pmv^2/s$. We see that $\frac{p^2v^2}{s \ln(pmv^2/s)} = \frac{p}{m} \frac{w}{\ln^2 w}$. We observe that this is an increasing function of $w$ as long as $w \ge e^2$. When $e \le w \le e^2$, observe that this is bounded by at most a factor of $2$ above any other $w$ value.

Now, for the remainder of the analysis, let $v =  \min(f_1, f_2)$. We show that $\norm{R(x_1, \ldots, x_n)}_q \le 0.25\epsilon$.

If $\ln(pmv^2/s) \le 1$ (i.e. $\frac{se}{mv^2} \ge p$), then we know that the bound is actually $\frac{\sqrt{p}}{\sqrt{m}}$, and we've already shown that $\norm{R(x_1, \ldots, x_n)}_q \le 0.25\epsilon$.

For the remainder of the analysis, we assume that $\ln(pmv^2/s) > 1$. 
 
First, suppose that $v = f_1$. If $\ln(pmv^4/s^2) \le 2$, then we know that 
\[\norm{R(x_1, \ldots, x_n)}_q \le C \max\left(\frac{\sqrt{q}}{\sqrt{m}}, \frac{C_2^{1/3}q^2v^2}{s \ln^2(qmv^2/s)}\right)\le 0.25\epsilon.\] Otherwise, we know that $\ln(pmv^4/s^2) > 2$. First let's show that that $C \frac{pv^2}{s \ln(pmv^4/s^2)} \le 0.25 \epsilon$. We know that $v \le f_2$. At $v = f_2$, we know that the expression is upper bounded by $0.25 \epsilon$. Since the $\frac{pv^2}{s \ln(pmv^4/s^2)}$ term is an increasing function of $v$ in this regime, this means that we get a bound of $0.25 \epsilon$ in this case too. Thus, we know that:
\begin{align*}
\norm{R(x_1, \ldots, x_n)}_q &\le C \max\left(\frac{\sqrt{q}}{\sqrt{m}}, \frac{qv^2}{s\ln(qmv^4/s^2)}, \min\left( \frac{C_2^{1/3}q^2v^2}{s \ln^2(qmv^2/s)}, \frac{q}{s\ln(m/s)}\right)\right) \\
&\le C \max\left(\frac{\sqrt{q}}{\sqrt{m}}, \frac{qv^2}{s\ln(qmv^4/s^2)}, \frac{C_2^{1/3}q^2v^2}{s \ln^2(qmv^2/s)}\right)  \\
&\le 0.25\epsilon.
\end{align*}

Now, suppose that $v = f_2$. We've already shown that $\ln(pmv^4/s^2) \ge 2$ here (near the beginning of the proof). Since $v \le  f_1$, we obtain a bound of $2 \cdot 0.1 \epsilon = 0.2 \epsilon$. This means: 
\begin{align*}
\norm{R(x_1, \ldots, x_n)}_q &\le C \max\left(\frac{\sqrt{q}}{\sqrt{m}}, \frac{qv^2}{s\ln(qmv^4/s^2)}, \min\left( \frac{C_2^{1/3}q^2v^2}{s \ln^2(qmv^2/s)}, \frac{q}{s\ln(m/s)}\right)\right) \\
&\le C \max\left(\frac{\sqrt{q}}{\sqrt{m}}, \frac{qv^2}{s\ln(qmv^4/s^2)}, \frac{C_2^{1/3}q^2v^2}{s \ln^2(qmv^2/s)}\right) \\
&\le 0.25\epsilon.
\end{align*}

\end{proof}

\subsection{Proof of Lemma \ref{lowerupperfinal}}
Now, we show how Lemma~$\ref{upper}$ and Lemma~$\ref{lower}$ imply Lemma~$\ref{lowerupperfinal}$. The proof simply involves bounding and simplifying the functions in the original lemmas at the target $v$ value.  
\begin{proof}[Proof of Lemma~$\ref{lowerupperfinal}$]
We use Lemma~$\ref{upper}$ but put in an absolute constant. Let $D_2 > 0$ be such that: if $\frac{se}{mv^2} \ge q$, then
\[\norm{R(x_1, \ldots, x_n)}_q \le D_2 \frac{\sqrt{q}}{\sqrt{m}}.\] 
Otherwise, if $q^3mv^4 \ge s^2$, then $\norm{R(x_1, \ldots, x_n)}_q$ is upper bounded by:
\small{ 
\[D_2
\begin{cases} 
\max\left(\frac{\sqrt{q}}{\sqrt{m}}, \frac{q^2v^2}{s \ln^2(qmv^2/s)}\right) & \text{ if } \ln(qmv^4/s^2) \le 2, \ln(qmv^2/s) \le q\\ 
\frac{\sqrt{q}}{\sqrt{m}} & \text{ if } \ln(qmv^4/s^2) \le 2, \ln(qmv^2/s) > q\\ 
\max\left(\frac{\sqrt{q}}{\sqrt{m}}, \frac{4096qv^2}{s\ln(qmv^4/s^2)}, \min\left( \frac{q^2v^2}{s \ln^2(qmv^2/s)}, \frac{q}{s\ln(m/s)}\right)\right) & \text{ if } \ln(qmv^4/s^2) > 2, \ln(qmv^2/s) \le q\\
\max\left(\frac{\sqrt{q}}{\sqrt{m}}, \frac{4096qv^2}{s\ln(qmv^4/s^2)}\right) & \text{ if } \ln(qmv^4/s^2) > 2, \ln(qmv^2/s) > q. \\
\end{cases}
\]} \normalsize{}

We use Lemma~$\ref{lower}$ but put in an absolute constant $D_1 > 0$ (which we take to be $\le 1$). Let $2 \le q \le m$ be an even integer, and suppose that $0 < v \le 0.5$ and $\frac{1}{v^2}$ is an even integer. If $qv^2 \le s$, then
\[\norm{R(x_1, \ldots, x_n)}_q \geq D_1 \frac{\sqrt{q}}{\sqrt{m}}.\] If $m \ge q$,  $2 \le \ln(qmv^4/s^2) \le q$, $2qv^2 \le 0.5 s \ln(qmv^4/s^2)$, and $s \le m/e$ then:
\[\norm{R(x_1, \ldots, x_n)}_q \geq D_1 \frac{4096 qv^2}{s \ln(qmv^4/s^2)}.\] If $v \le \frac{\sqrt{\ln(m/s)}}{\sqrt{q}}$ and $1 \le \ln(qmv^2/s) \le q/2$, and $s \le m/e$, then: 
\[\norm{R(x_1, \ldots, x_n)}_q \geq D_1 \frac{q^2v^2}{s \ln^2(qmv^2/s)}.\]

Let $D = \frac{D_1}{2048D_2}$. It suffices to show that for $v$ defined in the lemma statement, 
$\norm{R(v, v, \ldots, v, 0, \ldots, 0)}_q \ge 2\epsilon$ and 
\[ \frac{\norm{R(v, v, \ldots, v, 0, \ldots, 0)}_q}{\norm{R(v, v, \ldots, v, 0, \ldots, 0)}_{2q}} \ge D.\]

First, we handle the case where $m \le \Theta(\epsilon^{-2}\ln(1/\delta))$. Let's take $v = \psi$ for any sufficiently small $\psi$. By sufficiently small, we mean $v^2 \le \frac{se}{2mq}$ and $0 < v \le 0.5$. This implies that $\frac{se}{mv^2} \ge 2q$ and $qv^2 \le s$. Thus we know (using that $q \le m$) that $\norm{R(x_1, \ldots, x_n)}_{2q} \le D_2 \frac{\sqrt{2q}}{\sqrt{m}}$ and $\norm{R(x_1, \ldots, x_n)}_q \ge D_1 \frac{\sqrt{q}}{\sqrt{m}}$. This means that:
\[ \frac{\norm{R(v, v, \ldots, v, 0, \ldots, 0)}_q}{\norm{R(v, v, \ldots, v, 0, \ldots, 0)}_{2q}} \ge D \] as desired. Suppose that $m \le \Theta(\epsilon^{-2} \ln(1/\delta))$. Based on the setting $q$, this means that $\norm{R(v, \ldots, v, 0 \ldots, 0)}_q \ge D_1 \frac{\sqrt{q}}{\sqrt{m}} \ge 2 \epsilon$ as desired. 

Now, we handle the cases where $m \ge \Theta(\epsilon^{-2}\ln(1/\delta))$. Notice that the condition $f'(m, \epsilon, \delta, s) \le 0.5$ allows us to assume that $s \le \Theta(\epsilon^{-1} \ln(1/\delta))$ and $m \le \epsilon^{-2} e^p$. Let $f_1 = 4 \sqrt{\epsilon s} \frac{\ln(\frac{m\epsilon}{q})}{q}$ and let $f_2 = \sqrt{\epsilon s} \frac{\sqrt{\ln(\frac{m \epsilon^2}{q}}}{\sqrt{q}}$. We will consider $v = C_{v,1} f_1 = : v_1$ and $v = C_{v,2} f_2 =: v_2$. First, we handle the condition of $q^3mv^4 \ge s^2$. We enforce the condition $C_{v,1}, C_{v,2} \ge 1$. Assuming that $v \ge \frac{\sqrt{\epsilon s}}{q}$ (which is true at the two values of $v$ that we consider), we know $\frac{q^3mv^4}{s^2} \ge \frac{m \epsilon^2}{q} \ge 1$. Also, we make $m \ge 2C^2 \epsilon^{-2} q$, so that $\sqrt{\frac{2q}{m}} \le \sqrt{\frac{2q}{2C^2 \epsilon^{-2} q}} = \frac{\epsilon}{C}$. 

Consider $v = v_2$. We first check that the conditions for the upper bound are satisfied. We have that $\frac{qmv^4}{s^2} = C_{v,2}^4 \frac{m \epsilon^2}{q} \ln^2(\frac{m\epsilon^2}{q})$. Observe that when $m \ge e^2 \epsilon^{-2} q$ and $C_{v,2} \ge 1$, this is lower bounded by $e^2$, so $\ln(qmv^4/s^2) \ge 2$. Also, we have that $\frac{qmv^2}{se} = \sqrt{qm} \sqrt{\frac{qmv^4}{s^2}} \frac{1}{e} \ge 1$.  Now, we check the additional conditions needed for the lower bound. Observe that 
\[\frac{2qv^2}{s} = 2 \epsilon C_{v,2}^2 \ln(\frac{m \epsilon^2}{q}) \le  0.5 C_{v,2}^4 \frac{m \epsilon^2}{q} \ln^2(\frac{m\epsilon^2}{q}) = 0.5 \ln(qmv^4/s^2)\] as desired. We check that $\ln(qmv^4/s^2) \le q$. It suffices to show that
\[\frac{m \epsilon^2}{q} \ln^2\left(\frac{m\epsilon^2}{q}\right) \le \frac{e^q}{C_{v,2}^4}. \] Using the condition that $m \le \epsilon^{-2} \frac{e^q}{q C_{v,2}^4}$ where we obtain that
\[\frac{m\epsilon^2}{q} \ln^2(\frac{m \epsilon^2}{q}) \le \frac{e^q}{q^2C_{v,2}^4} \ln^2(\frac{e^q}{q^2C_{v,2}^4}) \le \frac{e^q}{q^2C_{v,2}^4} \ln^2(e^q) \le \frac{e^q}{C_{v,2}^4}\] as desired. Now, we compute the value of $\frac{qv^2}{s \ln(qmv^4/s^2)}$ at $v = C_{v,2} f_2$. We obtain: 
\[\frac{qv^2}{s \ln(qmv^4/s^2)} = C_{v,2}^2 \epsilon \frac{\ln(m\epsilon^2/q)}{\ln\left(\frac{m \epsilon^2}{q} \right) + \ln\left(C_{v,2}^4 \ln^2\left(\frac{m \epsilon^2}{q}\right)\right)}. \]

Consider $v = v_1$. We first check that the conditions for the upper bound are satisfied. In this case, we have that $\frac{qmv^2}{s} = 16 C_{v,1}^2 \frac{m \epsilon}{q} \ln^2(\frac{m\epsilon}{q})$. Observe that when $C_{v,1} \ge 1$ and $m \ge e^2 \epsilon^{-2} q \ge e^2 \epsilon^{-1} q$, this is lower bounded by $e^2$, so $\ln(qmv^2/s) \ge 2$. Now, we claim that when $f_1 \le f_2$, we show that $\ln(qmv^2/s) \le q/2$. In this case, using that $m \le \epsilon^{-2} q e^q$, we have: $\frac{4 \ln (m \epsilon/q)}{q} \le \frac{\sqrt{\ln (m \epsilon^2 / q)}}{\sqrt{q}}$. This means that $\ln (m \epsilon/q) \le \sqrt{q} \sqrt{\ln (m \epsilon^2 / q)} / 4 \le q / 4$. 
Observe that
\begin{align*}
\ln(qmv^2/s) &= \ln(16C_{v,1}^2) + \ln(m \epsilon /q) + 2 \ln \ln(m \epsilon /q)\\
&\le \ln(16C_v^2) + \frac{q}{4} + 2 \ln \ln q\\
&\le \frac{q}{2}.
\end{align*} At this value, observe that: 
\[\frac{q^2v^2}{s \ln^2(qmv^2/s)} = 16 C_{v,1}^2 \epsilon \left(\frac{\ln(m\epsilon/q)}{\ln\left(\frac{m \epsilon}{q} \right) + \ln\left(16 C_{v,1}^2 \ln^2\left(\frac{m \epsilon}{q}\right)\right)}\right)^2.  \] 
 
Let $C = D_1$. Let's set $\sqrt{\frac{1}{C}} \le C_{v,2} = C_{v,1} = C_v \le 4 \sqrt{\frac{1}{C}}$. Using the fact that $v^2 \le 0.5$ (so $\frac{1}{v^2} \ge 2$), this means that $\frac{1}{v^2}$ has can take on at least $3$ different powers of $2$. Let's observe that when $16C_{v}^2 \ln^2(\frac{m \epsilon}{q}) \le \frac{m \epsilon}{q}$ (we can get this condition by saying that $m \ge C_{M,2} \epsilon^{-2} q$ for a sufficiently large $C_{M,2}$) and $16C_v^2 \ln^2(m \epsilon / q) \ge 1$ (we can get this condition by saying that $m \ge C_{M,2} \epsilon^{-2} q$ for a sufficiently large $C_{M,2}$), we know that 
\[\frac{4 \epsilon}{C} \le 4C_{v}^2 \epsilon \le \frac{q^2v_1^2}{s \ln^2(qmv_1^2/s)} \le 16 C_v^2 \epsilon \le \frac{256 \epsilon}{C}. \]
Suppose that $C_{v}^4 \ln^2(m \epsilon^2/q) \le \frac{m \epsilon^2}{q}$ (we can get this condition by saying that $m \ge C_{M,2} \epsilon^{-2} q$ for a sufficiently large $C_{M,2}$) and $C_v^4 \ln^2(m\epsilon^2/q) \ge 1$ (we can get this condition by saying that $m \ge C_{M,2} \epsilon^{-2} q$ for a sufficiently large $C_{M,2}$). 
Let's observe that 
\[4096 C_{v}^2 \epsilon \ge \frac{4096 qv_2^2}{s \ln^2(qmv_2^4/s^2)} \ge 2048 C_v^2 \epsilon \ge \frac{2048 \epsilon}{C}. \]
 
Let $m' = s \cdot e^{\frac{C\epsilon^{-1}q}{1024s}}$. When $m \ge m'$, we know that $\frac{q}{s \ln(m/s)} \le \frac{1024\epsilon}{C}$ and when $m \le m'$, we know that
$\frac{q}{s \ln(m/s)} \ge \frac{1024\epsilon}{C}$. 

In order to plug in $v = v_1$ and use the $\frac{q^2v^2}{s \ln^2(qmv^2/s)}$ lower bound, we need to show that $v \le \frac{\sqrt{\ln(m/s)}}{\sqrt{q}}$. At $v = \frac{\sqrt{\ln(m/s)}}{\sqrt{q}}$, we have that $\frac{qmv^2}{s} = \frac{m}{s} \ln\left(\frac{m}{s}\right)$. Observe that when $m \ge e^2 s$, this is lower bounded by $e^2$, so $\ln(qmv^2/s) \ge 2$. At this value, observe that: 
\[\frac{q^2v^2}{s \ln^2(qmv^2/s)} = \frac{q \ln(m/s)}{s \ln^2\left(\frac{m}{s} \ln\left(\frac{m}{s}\right) \right)} \ge  \frac{q \ln(m/s)}{4 s \ln^2\left(\frac{m}{s} \right)} = \frac{q}{4 s \ln\left(\frac{m}{s} \right)}.\] We can write $\frac{q^2v^2}{s \ln^2(qmv^2/s)} = \frac{q}{m} \frac{w}{\ln^2 w}$, where $w = qmv^2/s$. We observe that this is an increasing function of $w$ as long as $w \ge e^2$. Thus, it suffices to show that $\frac{q^2v_1^2}{s \ln^2(qmv_1^2/s)} \le \frac{q^2v^2}{s \ln^2(qmv^2/s)}$. When $m \le m'$, we know that  
\[\frac{q^2v_1^2}{s \ln^2(qmv_1^2/s)} \le \frac{256 \epsilon}{C} \le \frac{q}{4 s \ln\left(\frac{m}{s} \right)} \le \frac{q^2v^2}{s \ln^2(qmv^2/s)}.\] Thus, we have that $v_1 \le v = \frac{\sqrt{\ln(m/s)}}{\sqrt{q}}$ as desired. 

The first case is $m \le m'$ and $f_2 \le f_1$. We set $v = C_{v} f_2$. 
\[\norm{R(v, \ldots, v,  0, \ldots, 0)}_q \ge D_1 \frac{4096qv^2}{s \ln(qmv^4/s)}.\]
For the upper bound, we see that $\ln(2qmv^4/s^2) > \ln(qmv^4/s^2) \ge 2$ and $\sqrt{\frac{2q}{m}} \le \frac{\epsilon}{C}$. Here, we have that
\[\norm{R(v, \ldots, v,  0, \ldots, 0)}_{2q} \le 
\begin{cases}
D_2 \max\left(\frac{\sqrt{2q}}{\sqrt{m}}, \frac{8192qv^2}{s \ln(2qmv^4/s)}, \frac{4q^2v^2}{s \ln^2(2qmv^2/s)}\right) &\text{ if } \ln(2qmv^2/s) \le 2q \\
D_2 \max\left(\frac{\sqrt{2q}}{\sqrt{m}}, \frac{8192qv^2}{s \ln(qmv^4/s)}\right) &\text{ if } \ln(2qmv^2/s) > 2q \\
\end{cases}.
 \] Now, we use the fact that $v \le C_v  f_1 := v_1$ to see that: 
 \[\frac{4q^2v^2}{s \ln(2qmv^2/s)} \le \frac{4q^2v^2}{s \ln(qmv^2/s)} \le \frac{8q^2v_1^2}{s \ln(qmv_1^2/s)} \le \frac{2048\epsilon}{C}. \] We also observe that since $2qmv^4/s \le (qmv^4/s)^2$, we know:
 \[\frac{8192qv^2}{s \ln(2qmv^4/s)} \ge \frac{8192qv^2}{2 s \ln(qmv^4/s)} \ge \frac{2048 \epsilon}{C}. \] This, coupled with the guarantee on $\frac{\sqrt{2q}}{\sqrt{m}}$, implies we have an upper bound of: 
\[\norm{R(v, \ldots, v,  0, \ldots, 0)}_{2q} \le D_2 \frac{8192qv^2}{s \ln(2qmv^4/s)}.\] Thus, we have that
\[\frac{\norm{R(v, \ldots, v, 0 \ldots, 0)}_q}{\norm{R(v, \ldots, v, 0, \ldots, 0)}_{2q}} \ge \frac{D_1}{2D_2} \ge D. \] Moreover, we have that
\[\norm{R(v, \ldots, v, 0, \ldots, 0)}_q \ge D_1 \cdot \frac{4096qv^2}{s \ln(qmv^4/s)} \ge D_1 \frac{2048 \epsilon}{C} = 2048 \epsilon \]
 
The next case is $f_1 \le f_2$ and $m \le m'$. We set $v = v_1$. Since $f_1 \le f_2$, we know that $\ln(qmv^4/s^2) \le q$. Thus we know:
 \[\norm{R(v, \ldots, v,  0, \ldots, 0)}_q \ge 
\begin{cases}
D_1 \max\left(\frac{4096qv^2}{s \ln(qmv^4/s)}, \frac{q^2v^2}{s \ln^2(qmv^2/s)}\right)  &\text{ if } \ln(qmv^4/s^2) \ge 2, qv^2 \le s \\
D_1 \frac{q^2v^2}{s \ln^2(qmv^2/s)} &\text{ else } \\
\end{cases}. 
 \]  For the upper bound, we know that:
\[\norm{R(v, \ldots, v,  0, \ldots, 0)}_{2q} \le 
\begin{cases}
D_2 \max\left(\frac{\sqrt{2q}}{\sqrt{m}}, \frac{8192qv^2}{s \ln(2qmv^4/s)}, \frac{4q^2v^2}{s \ln^2(2qmv^2/s)}\right) &\text{ if } \ln(2qmv^4/s) > 2  \\
D_2 \max\left(\frac{\sqrt{2q}}{\sqrt{m}}, \frac{4q^2v^2}{s \ln^2(2qmv^2/s)}\right) &\text{ if } \ln(2qmv^4/s) \le 2 \\
\end{cases}.
 \] To make these bounds compatible, we need to handle the case where $\ln(qmv^4/s) \ge 2$, $qv^2 \ge s$ better. Let $v' = C_v  f_2$. Assuming that $\ln(qmv^4/s) \ge 2$, we know that $\frac{8192qv^2}{s \ln(2qmv^4/s)}$ can be upper bounded by:  
  \[\frac{8192qv^2}{s \ln(qmv^4/s)} \le \frac{8192qv'^2}{s \ln(qmv'^4/s)}  = \frac{8192 C_{v}^2 \epsilon \ln(m\epsilon^2/q)}{\ln(m \epsilon^2/q) + \ln(C_{v}^4 \ln^2(m\epsilon^2/q))} \le 8192 C_{v}^2 \epsilon \le \frac{8192 \epsilon}{C}\] as long as $\ln^2(m\epsilon/q) C_v^4 \ge 1$ (which we can make true by appropriately setting the constants on the bound for $m$). Observe also that:
  \[\frac{4q^2v^2}{s \ln^2(2qmv^2/s)} \ge \frac{q^2v^2}{s \ln^2(qmv^2/s)} \ge \frac{4 \epsilon}{C}.\] Thus:
  \[\frac{8192 qv^2}{s \ln(qmv^4/s)} \le  \frac{8192q^2v^2}{s \ln^2(2qmv^2/s)} .\] This, coupled with the guarantee on $\frac{\sqrt{2q}}{\sqrt{m}}$, implies that our upper bound becomes:
  \[\norm{R(v, \ldots, v,  0, \ldots, 0)}_{2q} \le 
\begin{cases}
D_2 \frac{8192q^2v^2}{s \ln^2(2qmv^2/s)} &\text{ if } \ln(2qmv^4/s) \le 2 \text{ or} \ln(qmv^4/s) \ge 2, qv^2 \ge s \\
D_2 \max\left(\frac{8192qv^2}{s \ln(2qmv^4/s)}, \frac{4q^2v^2}{s \ln^2(2qmv^2/s)}\right) &\text{ else }. \\
\end{cases}.
 \] We now show that we can tweak $C_v$ within the factor of $2^{1/4}$ range permitted to show that we can ensure that it is not true that $2 - \ln 2 < \ln(qmv^4/s) \le 2$. Observe that multiplying by a factor of $2^{1/4}$ in this case yields $\ln(2qmv^4/s) > 2$ and dividing by a factor of $2^{1/4}$ yields $\ln(qmv^4/s) \le 2 - \ln 2$. Thus, at least one of the $C_v$ values that yields a power of $2$ for $\frac{1}{v^2}$ will work. Thus, we have that
\[\frac{\norm{R(v, \ldots, v, 0 \ldots, 0)}_q}{\norm{R(v, \ldots, v, 0, \ldots, 0)}_{2q}} \ge \frac{D_1}{8192D_2} = \frac{D}{2048}. \] Moreover, we have that:
\[\norm{R(v, \ldots, v, 0, \ldots, 0)}_q \ge D_1 \cdot \frac{q^2v^2}{s \ln^2(qmv^2/s)} \ge D_1 \frac{4 \epsilon}{C} = 4 \epsilon \]
 
The next case is that $m > m'$. We set $v = C_{v} \sqrt{\epsilon s} \frac{\sqrt{\ln\left(\frac{m\epsilon^2}{q}\right)}}{\sqrt{q}}$. We know: 
\[\norm{R(v, \ldots, v,  0, \ldots, 0)}_q \ge D_1 \frac{4096 qv^2}{s \ln(qmv^4/s)}.
 \] For the upper bound, we see that $\ln(2qmv^4/s^2)> \ln(qmv^4/s^2) > 2$. We know: 
\[\norm{R(v, \ldots, v,  0, \ldots, 0)}_{2q} \le 
\begin{cases}
D_2 \max\left(\frac{\sqrt{2q}}{\sqrt{m}}, \frac{8192qv^2}{s \ln(2qmv^4/s)}, \frac{2q}{s \ln(m/s)} \right) &\text{ if } \ln(2qmv^2/s) \le 2q \\
D_2 \max\left(\frac{\sqrt{2q}}{\sqrt{m}}, \frac{8192qv^2}{s \ln(2qmv^4/s)}\right) &\text{ if } \ln(2qmv^2/s) > 2q \\
\end{cases}.
 \] This can be relaxed to: 
\[\norm{R(v, \ldots, v,  0, \ldots, 0)}_{2q} \le D_2 \max\left(\frac{\sqrt{2q}}{\sqrt{m}}, \frac{8192qv^2}{s \ln(2qmv^4/s)}, \frac{2q}{s \ln(m/s)} \right).\] Now, we know that
\[\frac{2q}{s \ln(m/s)} \le \frac{2048 \epsilon}{C} \le \frac{4096qv^2}{s \ln(qmv^4/s)} = \frac{8192qv^2}{2s \ln(qmv^4/s)} \le \frac{8192qv^2}{s \ln(2qmv^4/s)}.  \] This coupled with what we know about $\frac{\sqrt{2q}}{\sqrt{m}}$ means that:
\[\norm{R(v, \ldots, v,  0, \ldots, 0)}_{2q} \le D_2 \frac{8192qv^2}{s \ln(2qmv^4/s)}.\]  Thus, we have that
\[\frac{\norm{R(v, \ldots, v, 0 \ldots, 0)}_q}{\norm{R(v, \ldots, v, 0, \ldots, 0)}_{2q}} \ge \frac{D_1}{2D_2} \ge D. \]
Moreover, we have that
\[\norm{R(v, \ldots, v, 0, \ldots, 0)}_q \ge D_1 \cdot \frac{4096qv^2}{s \ln(qmv^4/s)} \ge D_1 \frac{2048 \epsilon}{C} = 2048 \epsilon. \]

We use the condition on $q$ not being more than a constant factor away from $p = \ln(1/\delta)$, to conclude that $\epsilon^{-2} q = \Theta(\epsilon^{-2} p)$, $f_2 = \Theta\left(\sqrt{\epsilon s} \frac{\sqrt{\ln\left(\frac{m\epsilon^2}{p}\right)}}{\sqrt{p}} \right)$, and $f_1 = \Theta\left(\sqrt{\epsilon s} \frac{ \ln\left(\frac{m\epsilon}{p}\right)}{p}\right)$, and to conclude that the boundaries move within the $\Theta$ notation as well. 
\end{proof}

\section{Additional Experimental Results and Discussion}\label{sec:additionalexperiments}
All of the experiments (in Section 4 and in this section) were run on the default hardware on a Google Colab notebook. The code is available at \url{https://github.com/mjagadeesan/sparsejl-featurehashing}. 

First, we give the results of additional experimental results on real-world and synthetic datasets, using the same experimental setup as Section 4. 
\begin{figure}[!htb]
\centering
\begin{minipage}[b]{.5\textwidth}
  \centering
    \includegraphics[scale=0.48]{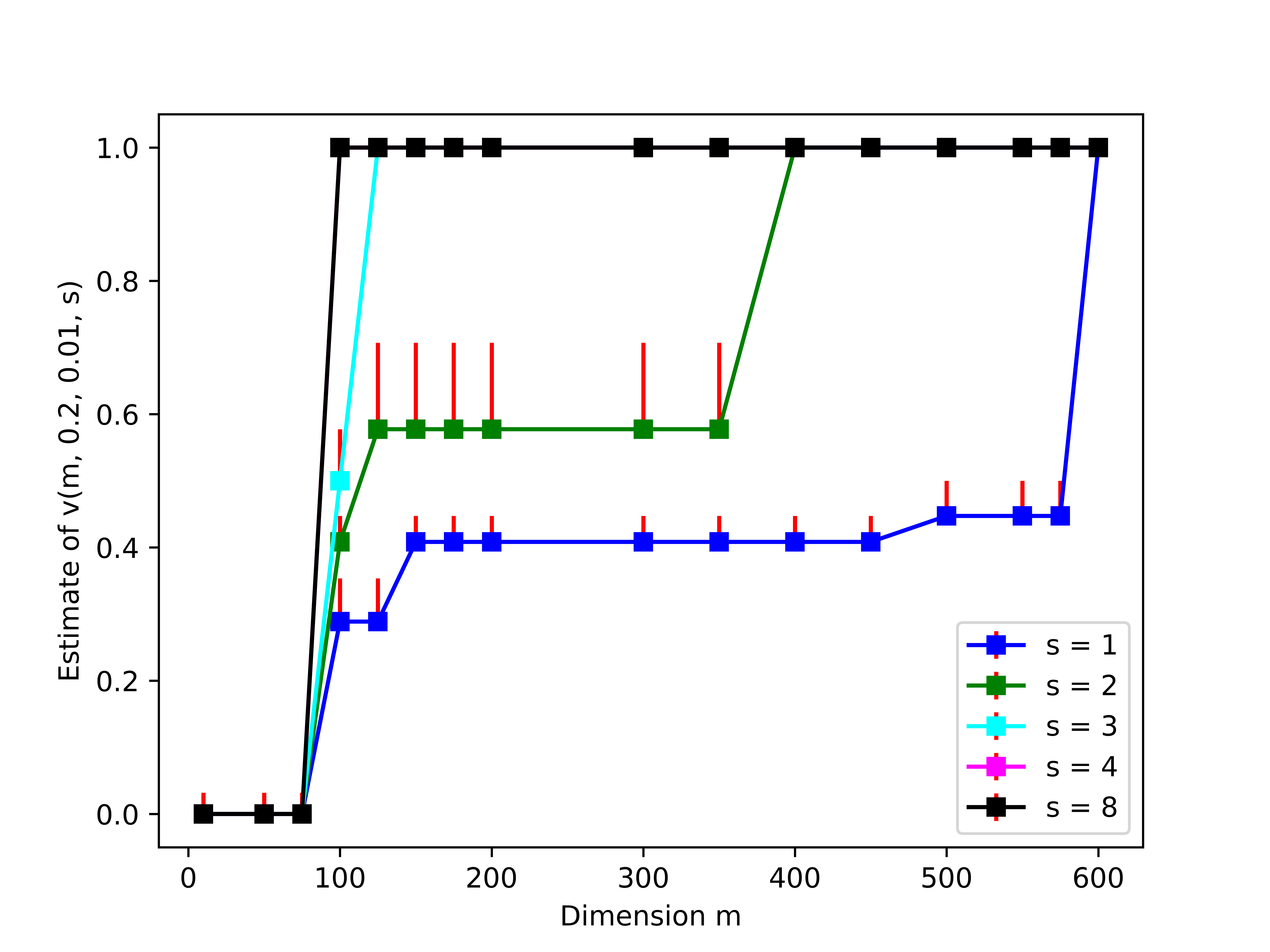}
  \caption{Phase transitions of $\hat{v}(m, 0.2, 0.01,s)$}
  \label{fig:syntheticadd1}
\end{minipage}%
\begin{minipage}[b]{.5\textwidth}
  \centering
    \includegraphics[scale=0.48]{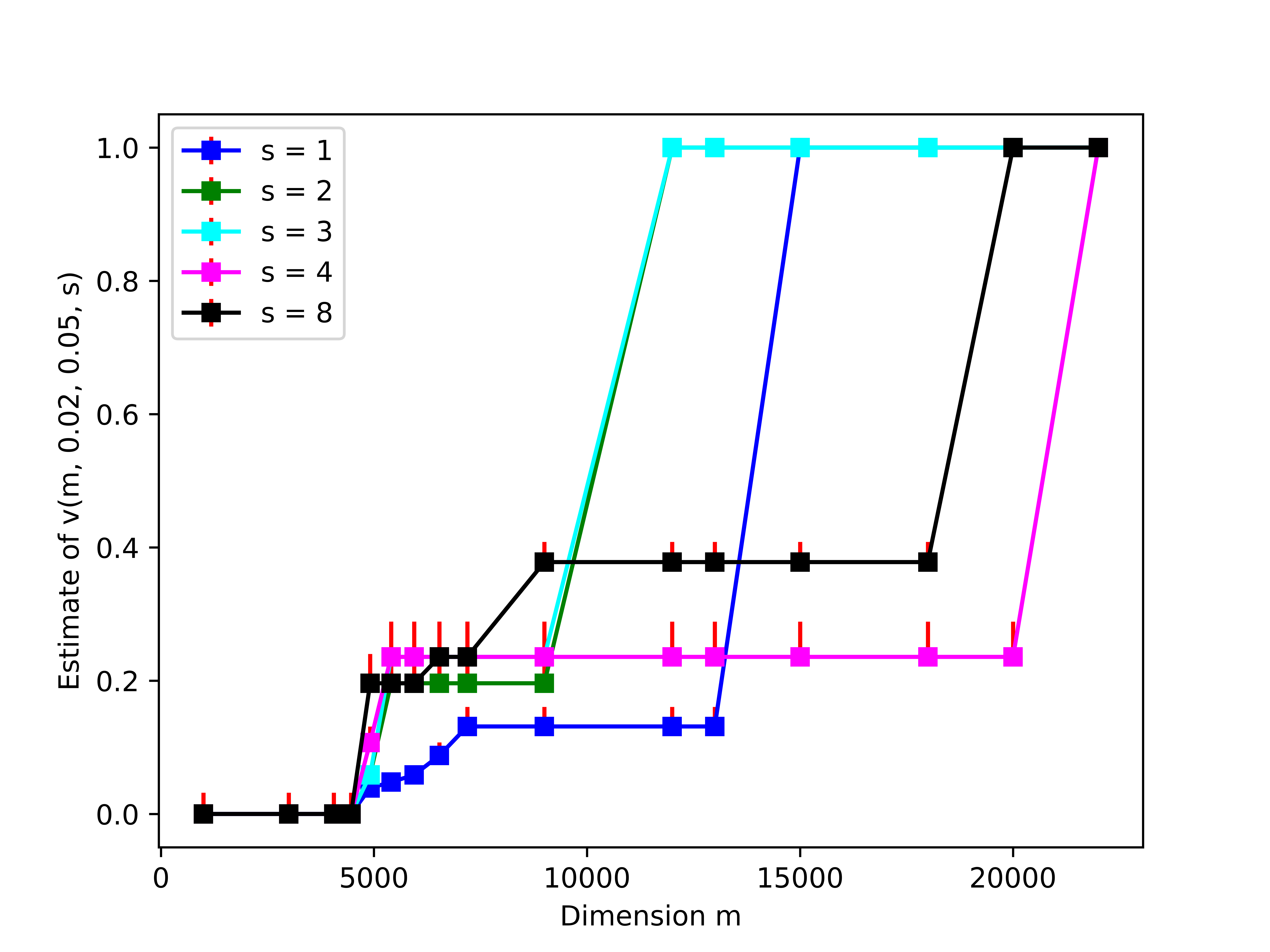}
  \caption{Phase transitions of $\hat{v}(m, 0.02, 0.05,s)$}
  \label{fig:syntheticadd2}
\end{minipage}
\end{figure}

\begin{figure}[!htb]
\centering
\begin{minipage}[b]{.5\textwidth}
  \centering
  \includegraphics[scale=0.48]{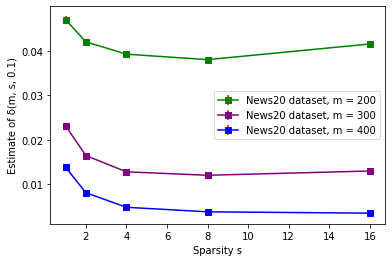}
  \caption{$\hat{\delta}(m, s, 0.1)$ on News20}
  \label{fig:realworldadd1}
\end{minipage}%
\begin{minipage}[b]{.5\textwidth}
  \centering
 \includegraphics[scale=0.48]{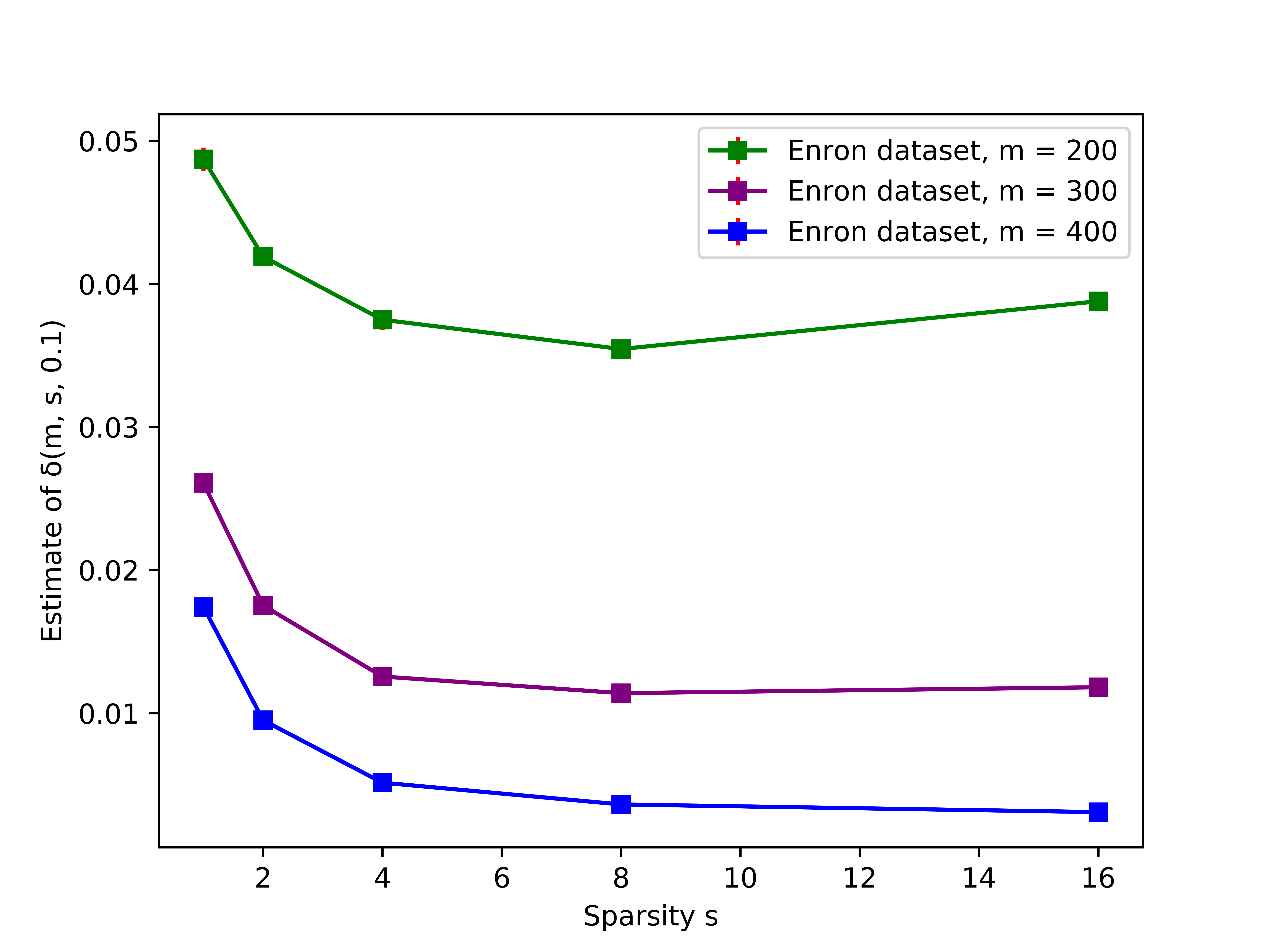}
  \caption{$\hat{\delta}(m, s, 0.1)$ on Enron}
  \label{fig:realworldadd2}
\end{minipage}
\begin{minipage}[b]{.5\linewidth}
 \centering
    \includegraphics[scale=0.5]{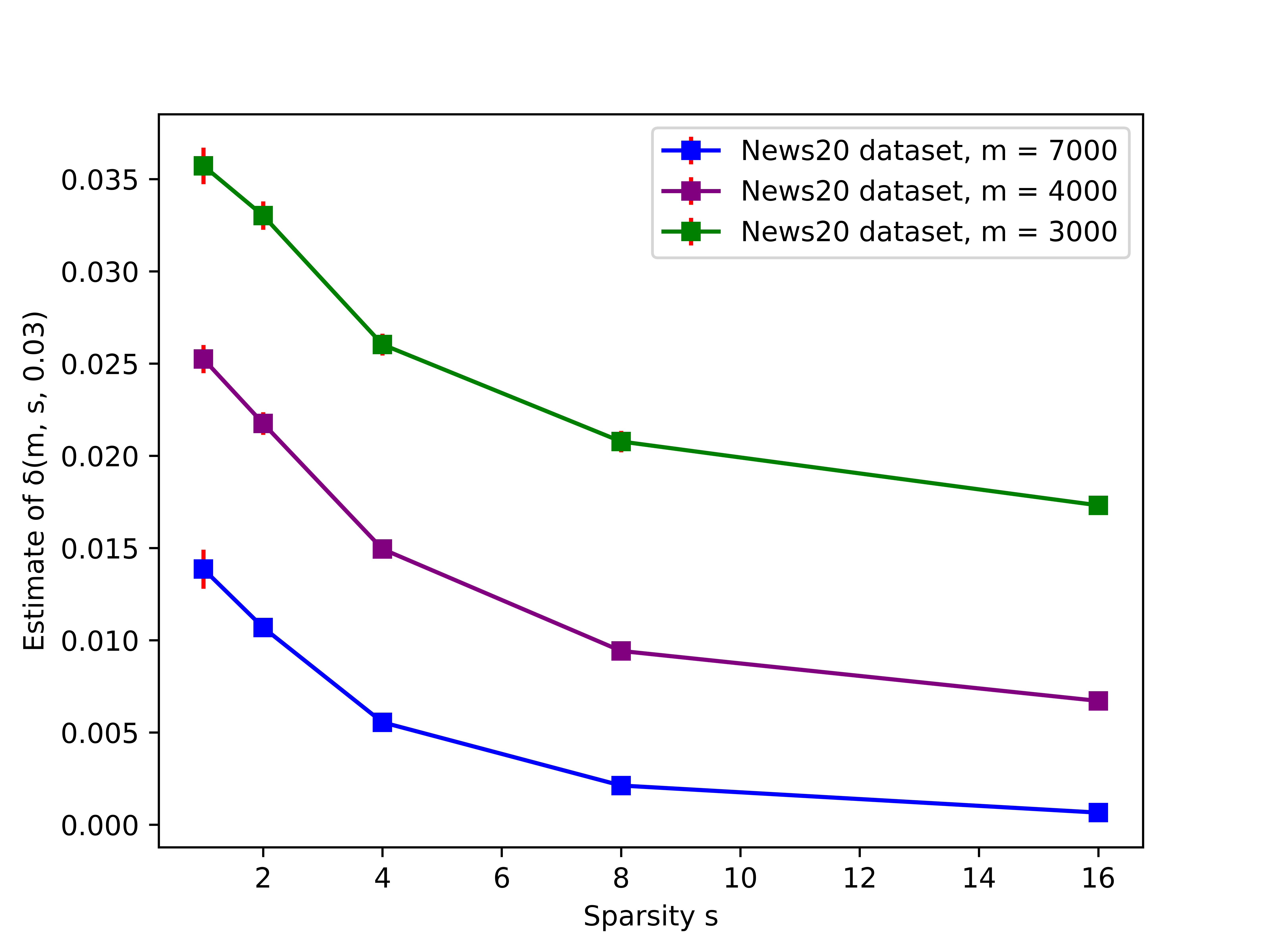}
    \caption{$\hat{\delta}(m, s, 0.03)$ on News20}
    \label{fig:realworldadd3}
\end{minipage}%
\end{figure}

For the synthetic datasets, the trends in Figure \ref{fig:syntheticadd1} and Figure \ref{fig:syntheticadd2} look quite similar to the figures in Section 4. We see, though, that Figure \ref{fig:syntheticadd2} experiences more severe non-monotonic behavior as a function of $s$ in the second phase transition. Consider, for example, in Figure \ref{fig:syntheticadd2}, the behavior at $m = 12000$: we see that $\hat{v}(m, \epsilon, \delta, 4) < \hat{v}(m, \epsilon, \delta, 3)$. In fact, the order of the phase transitions in Figure \ref{fig:syntheticadd2} is far from decreasing. Nonetheless, the general patterns and trends in the theoretical result still hold (e.g. the ``flat'' part occurs at a lower y-coordinate for lower $s$ values.) 

For the real-world datasets, the trends in Figure \ref{fig:realworldadd1}, Figure \ref{fig:realworldadd2}, and Figure \ref{fig:realworldadd3} look quite similar to the figures in Section 4. One slight difference is that the failure probability noticeably increases in Figure \ref{fig:realworldadd1} and Figure \ref{fig:realworldadd2} between $s = 8$ and $s = 16$. It turns out that the failure probability actually increases to a local maximum somewhere in $12 \le s \le 16$, and then decreases when $s \ge 16$, reaching lower than the value at $s = 8$ by the time $s = 20$. There turns out to be a similar local maximum phenomenon when $\epsilon = 0.07$ and $m = 500$, though the local maximum occurs in $24 \le s \le 32$ and thus is not as visible in the graph. 

As a general comment on non-monotonicity as a function of $s$, we emphasize that our asymptotic theoretical results characterize the \textit{macroscopic} behavior of $v(m, \epsilon, \delta, s)$, and do not preclude the existence of constant factor fluctuations for small changes in parameters. An interesting direction for future work would be to look further into this non-mononocity and try to characterize when it arises.

\end{document}